\definecolor{codegreen}{rgb}{0,0.6,0}
\definecolor{codegray}{rgb}{0.5,0.5,0.5}
\definecolor{codepurple}{rgb}{0.58,0,0.82}
\definecolor{backcolour}{rgb}{0.95,0.95,0.92}
\lstdefinestyle{mystyle}{
    backgroundcolor=\color{backcolour},
    commentstyle=\color{codegreen},
    keywordstyle=\color{magenta},
    numberstyle=\tiny\color{codegray},
    stringstyle=\color{codepurple},
    basicstyle=\ttfamily\footnotesize,
    breakatwhitespace=false,
    breaklines=true,
    captionpos=b,
    keepspaces=true,
    numbers=left,
    numbersep=5pt,
    showspaces=false,
    showstringspaces=false,
    showtabs=false,
    tabsize=2
}
\definecolor{niceRed}{RGB}{190,38,38}
\definecolor{blueGrotto}{HTML}{059DC0}
\definecolor{royalBlue}{HTML}{057DCD}
\definecolor{navyBlueP}{HTML}{0B579C}
\definecolor{limeGreen}{HTML}{81B622}
\definecolor{lred}{RGB}{236,33,39}
\definecolor{lyellow}{RGB}{251,168,26}
\definecolor{lblue}{RGB}{94,202,231}
\definecolor{lgreen}{HTML}{03AC13}
\theoremstyle{plain}
\newtheorem{theorem}{Theorem}[section]
\newtheorem{proposition}[theorem]{Proposition}
\newtheorem{lemma}[theorem]{Lemma}
\newtheorem{corollary}[theorem]{Corollary}
\theoremstyle{definition}
\newtheorem{assumption}[theorem]{Assumption}
\newtheorem{informal}[theorem]{Informal Theorem}
\theoremstyle{remark}
\title{Learning Gaussian DAG Models without Condition Number Bounds}
\author{ Constantinos Daskalakis\\
	EECS Department\\
	MIT\\
	\texttt{costis@csail.mit.edu}
    \And
    Vardis Kandiros \\
    Data Science Institute\\
	Columbia University\\
	\texttt{ak5484@columbia.edu} \\
	\And
	Rui Yao\\
    EECS Department\\
	MIT\\
	\texttt{rayyao@mit.edu}
}
\newcommand{\R}{\mathbb{R}}
\newcommand{\lp}{\left}
\newcommand{\rp}{\right}
\newcommand{\E}{\mathbbm{E}}
\declaretheoremstyle[
	    spaceabove=\topsep, 
	    spacebelow=\topsep, 
	    headfont=\normalfont\bfseries,
	    bodyfont=\normalfont\itshape,
	    notefont=\normalfont\bfseries,
	    notebraces={(}{)},
	    postheadspace=0.5em, 
	    headpunct={},
	   % postfoothook=\noindent\ignorespaces
    ]{theorem}
\DeclareMathAlphabet{\mathbfsf}{\encodingdefault}{\sfdefault}{bx}{n}
\DeclareMathOperator*{\argmin}{arg\!\min}
\let\Pr\relax
\DeclareMathOperator{\Pr}{\mathbb{P}}
\newcommand{\lrnorm}[1]{\mathopen{}\left\|#1\right\|}
\newcommand{\cG}{\mathcal{G}}
\newcommand{\cN}{\mathcal{N}}
\newcommand{\cV}{\mathcal{V}}
\newcommand{\cL}{\mathcal{L}}
\newcommand{\Var}{\mathrm{Var}}
\newcommand{\Tr}{\mathrm{Tr}}
\newcommand{\pa}{\mathrm{pa}}
\renewcommand{\emph}[1]{\textbf{\textit{#1}}}
\begin{document}
\maketitle

\begin{abstract}
	We study the problem of learning the topology of a directed Gaussian Graphical Model under the equal-variance assumption, where the graph has $n$ nodes and maximum in-degree $d$. Prior work has established that $O(d \log n)$ samples are sufficient for this task. However, an important factor that is often overlooked in these analyses is the dependence on the condition number of the covariance matrix of the model. Indeed, all algorithms from prior work require a number of samples that grows polynomially with this condition number. In many cases this is unsatisfactory, since the condition number could grow polynomially with $n$, rendering these prior approaches impractical in high-dimensional settings. In this work, we provide an algorithm that recovers the underlying graph and prove that the number of samples required is independent of the condition number. Furthermore, we establish lower bounds that nearly match the upper bound up to a $d$-factor, thus providing an almost tight characterization of the true sample complexity of the problem. Moreover, under a further assumption that all the variances of the variables are bounded, we design a polynomial-time algorithm that recovers the underlying graph, at the cost of an additional polynomial dependence of the sample complexity on $d$. 
    We complement our theoretical findings with simulations on synthetic datasets that confirm our predictions.
\end{abstract}

\section{Introduction and Background}\label{sec:intro}
A common problem that arises in many scientific disciplines is inferring the underlying dependency structure of a vector of observations \cite{duncan2014introduction}. 
In particular, this is one of the fundamental goals of causal inference, where the emphasis is on discovering cause-and-effect relationships between variables. 
Perhaps the most popular way of approaching this problem is through the formalization of \emph{Directed Acyclic Graph (DAG)} models, or \emph{Bayesian Networks}, where variables are sampled sequentially, with the order induced by a DAG $G$\cite{lauritzen1996graphical,wainwright2008graphical}. 
These models have been widely adopted in the social sciences, starting from their introduction in the seminal work \cite{pearl2000models}. They have also found applicability in fields as diverse as human geography \cite{malioutov2006walk,ao2020influencing}, the environment \cite{grace2007does}, and ecology \cite{byrnes2011climate}.
The problem of determining $G$ from observations of the variables is referred to in the literature as \emph{causal structure learning}. 

In the general case, some popular approaches for solving this problem include the PC algorithm \cite{spirtes2001causation} and Greedy Equivalence Search (GES) \cite{chickering2002optimal}. However, these often require strict assumptions, such as strong faithfulness \cite{uhler2013geometry}, or encounter computational issues. Indeed, in general, the problem of recovering the true Bayesian Network is NP-hard \cite{chickering1996learning}.

The problem becomes more structured using the formalization of Structural Equation Models (SEMs), where each variable is equal to a function of its parents with some added independent noise. 
Identifiability for these models has been established when the functions are sufficiently non-linear or when the noise is non-Gaussian \cite{peters2014causal,shimizu2006linear}. 
For linear Gaussian SEMs, which is the focus of this work, finding the graph in general is impossible. Identifiability was established in \cite{peters2014identifiability}, under the additional assumption that all $\varepsilon_i$ have the same variance (\emph{equal variances assumption}). 
A subsequent flurry of works \cite{ghoshal2017learning,ghoshal2017information,ghoshal2018learning,chen2019causal,gao2020polynomial,gao2022optimal} established upper and lower bounds on the number of samples needed (\emph{sample complexity}) for inferring the graph in the high dimensional sparse setting where the maximum in-degree of any node is bounded by $d$ and the number of nodes $n >> d$. Often, the goal in these works is to obtain sample complexity that scales as $O(d\log n)$. 

However, an important factor that is often overlooked in prior work is the role of the condition number $\kappa$ of the covariance matrix of observations. Indeed, all previous analyses typically assume some type of condition that implies a bound on $\kappa$, independent of $n$. However, in many interesting examples and simple examples, $\kappa$ scales polynomially with the number of nodes $n$. This would imply that all algorithms in prior work require $\mathrm{poly}(n)$ samples, which is problematic in high-dimensional settings. For example, in the directed path graph or the tree graph, $\kappa$ grows polynomially with $n$, but the structure can clearly be identified easily using $O(\log n)$ samples. Thus, a natural question that arises is whether it is even \emph{information theoretically} possible to design an algorithm that is truly high-dimensional, with a sample complexity independent of the condition number. 

In this work, we answer this question in the affirmative. In particular, we design and analyze an algorithm that achieves graph recovery with sample complexity independent of the condition number while at the same time retaining the optimal $O(d\log n)$ scaling. We identify a key quantity $\tau(G)$ of the graph (see \eqref{eq:tau_def}) that determines this sample complexity, which is always upper bounded by the condition number $\kappa$. Furthermore, we provide a lower bound for the sample complexity of any algorithm, which also contains $\tau$ and matches our upper bound up to a single factor of $d$ (this $d$-factor discrepancy of the upper and lower bound is a known limitation of all prior lower bound approaches for Markov Random Fields \cite{wang2010information,misra2020information,kelner2020learning}). Thus, we obtain an almost tight characterization of the optimal sample complexity of the problem.
Another natural question is whether we can design an efficient algorithm that solves the problem without condition number assumptions. To this end, we provide such an algorithm, which works under the natural assumption that the maximum variance of a node is bounded by a constant. We also show that this is a strictly weaker assumption than the boundedness of the condition number. 

In terms of techniques, the problem of learning a DAG is information-theoretically different in nature from that of learning undirected graphical models due to the combinatorial nature involved in finding the right topological ordering of the nodes. Despite that, our approach brings together
a variety of tools from the literature on learning directed as well as undirected graphical models. 
We hope this serves as an inspiration for further fruitful investigations that would uncover connections between the two problems. 

The rest of the paper is organized as follows. Section~\ref{sec:prior_work} discusses prior work related to the problem, and Section~\ref{sec:prelims} contains some preliminary definitions and facts about the model. The main Algorithm that achieves information-theoretic discovery is given in Section~\ref{se:inefficient_upper}, with the matching lower bound following in Section~\ref{sec:it_lower_bound}. An efficient Algorithm is presented in ~\ref{sec:efficient}. For completeness, we provide a more detailed technical comparison of our results with prior work in Section~\ref{sec:discussion_comparison}. Finally, we give a proof sketch for the main results in Section~\ref{sec:sketch} and simulation results in Section \ref{sec:simulation}. 

\subsection{Prior work}\label{sec:prior_work}
There is a long line of work that focuses on recovering the underlying directed acyclic graph (DAG) in structural equation models. 
A popular approach is the PC algorithm \cite{spirtes2001causation}, which runs in polynomial time if the graph is sparse. \cite{robins2003uniform} shows that it is pointwise but not uniformly consistent under the classical faithfulness assumption on the underlying distribution. \cite{zhang2012strong,kalisch2007estimating} show uniform consistency under the strong faithfulness assumption, which can be sometimes restrictive \cite{uhler2013geometry}. Other approaches include score-based model selection\cite{spiegelhalter1993bayesian,heckerman1995learning,chickering2002optimal,chickering2002learning}, where searching for a graph can be cast as an optimization problem. 

Closer to our work is a line of research that considers learning \emph{structural equation models} (SEMs). Various works show identifiability when the functions are sufficiently non-linear \cite{peters2014causal} or when they are linear and the noise is non-Gaussian \cite{shimizu2006linear}. 
In general, when the noise is Gaussian, it is impossible to determine the direction of the edges, which makes the problem non-identifiable. \cite{peters2014identifiability} proved identifiability
under the equal variances assumption, followed by an investigation on the optimal sample complexity  \cite{ghoshal2017information,ghoshal2017learning,chen2019causal,gao2020polynomial,gao2021efficient,gao2022optimal}. In particular, in \cite{gao2022optimal} they provide an algorithm that requires $O(d \log n)$ samples. As we mentioned earlier, all these prior analyses assume that the condition number is bounded, and the sample complexity is implicitly dependent on that with a polynomial rate. Another notable work is \cite{gao2023optimal}. This paper focuses on support recovery in a linear regression setting where the underlying design satisfies additional structural constraints. The authors provide an alternative approach to Best Subset Selection and show it performs better in a setting where there is no generalized path cancellation. 

Finally, another popular branch of the literature considers learning undirected Markov Random Fields (MRFs) \cite{bresler2015efficiently,vuffray2020efficient,vuffray2016interaction,lokhov2018optimal,klivans2017learning,wu2019sparse,misra2020information}. In general, this problem possesses an inherent convexity and can be solved by neighborhood regression, which is why it is considered easier than the directed case. In the special case where the underlying distribution is Gaussian, the so-called undirected \emph{Gaussian Graphical Model} (GGM), a variety of algorithms have been proposed when the underlying graph is sparse, based on estimating the covariance matrix\cite{meinshausen2006high,yuan2007model,cai2011constrained,anandkumar2012high,cai2016estimating,johnson2012high,friedman2008sparse}. Under minimal assumptions, \cite{wang2010information} proved a sample complexity lower bound of $\Omega((1/\beta_{\min})\log n)$, where $\beta_{\min}$ is the minimum normalized edge strength of the model. \cite{misra2020information} provides an algorithm that succeeds with $O((d/\beta_{\min})\log n)$ samples, and \cite{kelner2020learning} designs a computationally efficient alternative with similar sample complexity under the additional assumption of walk-summability \cite{malioutov2006walk}. An interesting folklore question is to compare the complexity of the problems of identifying the undirected and directed graphs of Gaussian models\cite{gao2022optimal}. Our work can thus be seen as establishing a separation between these two tasks, which are governed by different quantities, $\beta_{min}$ in the case of undirected and $\tau$ in the case of directed models. Finally, a related line of work studies Gaussian models with the additional assumption that the topology is a tree, see e.g.\cite{tan2010learning,tan2011learning}. 

\subsection{Preliminaries and Problem Settings}\label{sec:prelims}

Let $G=(V, E)$ be a Directed Acyclic Graph (DAG) on $n$ nodes, where the maximum in-degree of a node is bounded by $d$. 
For $i,j \in V$, we sometimes write $i \to j$ if $(i,j) \in E$ is a directed edge. 
As discussed in Section~\ref{sec:intro}, $G$ can be thought of as the fixed ground truth graph that we seek to identify. 
We denote by $\pa(i) = \{j \in V: j \to i\}$ the parent-set of node $i$. By assumption, $|\pa(i)| \leq d$ for all $i \in V$. 
A \emph{topological order} of the nodes in $G$ is any permutation of the nodes $\pi:V\mapsto [n]$ such that if $i \to j$ implies $\pi(i) < \pi(j)$. Since $G$ does not have directed cycles, this ordering always exists but is not necessarily unique. 
With each node $i$ in $G$, we associate a random variable $X_i \in \R$. 
A vector of random variables $(X_1,X_2,\dots,X_n)\in\mathbb R^n$, is generated according to DAG $G$ if the following structural equation holds
\begin{equation}\label{eqn: model}
    X_i=\sum_{j\in \pi(i)}b_{ji}X_j+\varepsilon_i\,,
\end{equation}
where $\varepsilon_i\sim\mathcal{N}(0,\sigma^2)$ is i.i.d. Gaussian noise of equal variance. 
In particular, if a node $i$ has no parents, $X_i$ is just $\mathcal{N}(0,1)$.
The above model is the so-called \emph{equal variances model}. 
We note that when this assumption is not satisfied, it might be impossible to identify the DAG, even up to the Markov Equivalence Class. For details see~\ref{sec:non-equal-var}. For a set $I$, we denote $X_I$ to be the vector of $(X_i)$ for $i\in I$.

We can collect all coefficients $b_{ji}$ in a matrix $B \in \R^{n \times n}$, so that $B_{ij} = b_{ij}$ is non-zero only if $i \to j$. If we index the rows and columns of $B$ according to a topological ordering, we can think of $B$ as being upper triangular, and the following relation holds
\[
X = B^\top X + \sigma \epsilon \,,
\]
where $X$ is the vector of random variables indexed by the same topological order, and $\varepsilon$ is a standard $n$-dimensional Gaussian vector. 
This has the straightforward implication that the covariance $\Sigma$ and inverse covariance $\Theta$ of vector $X$ can be written as
\begin{equation*}
    \Theta=\sigma^{-2}(I-B)(I-B)^\top,  \Sigma=\sigma^2((I-B)^{-1})^\top(I-B)^{-1}
\end{equation*}
It is clear that $\sigma$ simply controls the scale of all variables in the problem and doesn't affect structure recovery. Therefore, for simplicity in the remainder of this work, we assume that $\sigma = 1$. If $\sigma$ was unknown, our results extend straightforwardly. Algorithm~\ref{alg: inefficient} has the same sample complexity but with an added first step to estimate the smallest variance. The number of samples in Algorithm~\ref{alg: efficient} would scale by $\sigma^{-2}$. More details can be found in Supplementary~\ref{sec: extension}. 
We make the following assumption throughout this work. 

\begin{assumption}\label{ass:beta_lower_bound}
    \[
    \min_{i \in V, j \in \pa(i) } |b_{j i}| \geq b_{\min} > 0
    \]
\end{assumption}
Assumption~\ref{ass:beta_lower_bound} places a lower bound on the minimum strength of an edge between two nodes. This lower bound is essential information-theoretically, if we want to detect the presence of each edge. 
In this paper, we assume $b_{\min}$ and the sparsity $d$ are known to the learner. We discuss extending our result to the case when $b_{\min}$ and $d$ are unknown in the Appendix \ref{sec: extension}. More specifically, we can perform the full Algorithm \ref{alg: inefficient} if $b_{\min}$ is unknown but the guarantee alters. We can also perform the first phase of our Algorithm \ref{alg: inefficient} if $d$ is unknown.

We also define for a graph $G$ the following quantity $\tau(G)$, which is the maximum sum of squares of the coefficients for the out-edges of a node. 
\begin{equation}\label{eq:tau_def}
\tau=\tau(G):=1 + \max_{j}\sum_{l:j\to l}b_{jl}^2
\end{equation}
We will see in the sequel how $\tau(G)$ arises naturally in both the upper and lower bounds on the sample complexity of the problem.

\newcommand\Algphase[1]{%
\vspace*{-.7\baselineskip}\Statex\hspace*{\dimexpr-\algorithmicindent-2pt\relax}%
\Statex\hspace*{-\algorithmicindent}\textbf{#1}%
\vspace*{-.7\baselineskip}\Statex\hspace*{\dimexpr-\algorithmicindent-2pt\relax}%
}

\section{Results}

 In Section~\ref{se:inefficient_upper} we present an algorithm based on neighborhood selection and analyze its sample complexity.
In Section~\ref{sec:it_lower_bound} we present an information theoretic lower bound that matches this sample complexity up to a factor of $d$. Finally, in Section~\ref{sec:efficient} we provide a polynomial time algorithm with slightly worse sample complexity. 

\subsection{Information Theoretic Recovery}\label{se:inefficient_upper}

The algorithm is split into two phases, as shown below. 

\begin{algorithm}
    \caption{Estimating the DAG Information Theoretically}
    \begin{algorithmic}[1]
    
    \INPUT  Samples $(X^{(1)}, X^{(2)}, \ldots,X^{(m)})$\\
    \OUTPUT  Topology $\hat{G}$
        \STATE\textbf{Phase 1: Finding the Topological Ordering}
        \STATE $T=[]$
        \STATE Find all $X_i$ with $\widehat{\Var}(X_i)\le (1+b_{\min}^2/2)$ and put $i \in T$
        \WHILE{$|T| < n$}
        \STATE $M=\emptyset$
        \FOR{$i\in [n]\backslash T$}
        \FOR{$J\subseteq T,|J|=d$}
        \STATE $\beta_J \leftarrow$ Regress $X_i$ on $X_J$ 
        \STATE $MSE\leftarrow \frac{1}{m}\sum_j\lp(X_i^{(j)}-X_J^{(j)}\beta_J\rp)^2$
        \IF{$MSE\le 1+b_{\min}^2/2$}
            \STATE $T \leftarrow T$.append($i$)
        \ENDIF
        \ENDFOR
        \ENDFOR
        \ENDWHILE
        
        \STATE\textbf{Phase 2: Finding the parents}
        \FOR{$i\in T$}              
        \STATE $S \leftarrow$ nodes in $T$ before $i$
        \FOR{$J\subseteq S,|J|= \min(|S|,d)$}
        \STATE $\mathsf{PASSED}\gets True$
        \FOR{$K\in S\backslash J$,$|K|\leq \min(|S|-|J|,d)$}
        \STATE $\beta_{J\cup K} \gets$ Regress $X_i$ on $X_{J\cup K}$
        \IF{$\exists k \in K: |\hat{\beta_k}|\ge b_{\min}/2$}
        \STATE $\mathsf{PASSED}\gets False$, \textbf{break.}
        \ENDIF
        \ENDFOR
        \IF{$\mathsf{PASSED}=True$}
            \STATE $\hat{\beta} \gets$ regress $X_i$ on $X_J$
            \STATE $\pa(i) \gets \{j \in J : |\hat{\beta}_j| \geq \frac{b_{\min}}2\}$ , \textbf{break.}
        \ENDIF
        \ENDFOR
        \ENDFOR
    \end{algorithmic}
    \label{alg: inefficient}
\end{algorithm}

In Phase 1, we aim to find the topological order. 
A fundamental property of the equal-variances model is that the variance increases along directed paths. 
Thus, the strategy is to start with the node of smallest variance. Intuitively, we want to keep adding the node with the smallest conditional variance, conditioned on the nodes we have already added. One approach for computing this variance is by using the conditional variance formula for Gaussians, as is done in prior work \cite{chen2019causal,gao2022optimal}. One important challenge in our setting though is that the expression for this conditional variance involves covariance matrices with arbitrarily bad condition numbers. Prior analyses in \cite{chen2019causal,gao2022optimal} bound the estimation error for these matrices, which leads to a sample complexity that depends on a condition number assumption. 
Instead, in our analysis we take advantage of the relation between this conditional variance and the mean squared error of regressing the node with the previous nodes in the ordering, which results in significant gains in sample complexity.

Thus, for Phase 1, we still calculate the conditional variances as in \cite{gao2022optimal} and \cite{chen2019causal}. We observe that for any node $i$ and any set $J$, if $J$ contains all the parents of $i$, then $\Var[X_i|X_J]=1$. Otherwise, the conditional variance is larger than $1+b_{\min}^2$. Thus, there is a gap between these two cases (see Proposition \ref{prop:sort}) which helps us distinguish between them.

Phase 1 of the algorithm outputs an a list $T$, a valid topological order of all the nodes. In line 2, we initialize $T$ as an empty list. We know that the first node in the topological order has no parent, thus it has a variance of $1$, the smallest variance among all random variables. Therefore, in line 3, we choose $i$ such that $X_i$ has a small empirical variance to be the first in the list. 

Let us now discuss lines 4 to 15 of Algorithm \ref{alg: inefficient}. Suppose, inductively, that we have identified the correct $T$ for each iteration. In that case, all the nodes in $T$ appear earlier in the topological order than $V\backslash T$. This means that in the next iteration, the node that comes right after $T$ in the topological order has all its parents in $T$. We call that node $i$. Therefore, the conditional variance, $\Var[X_i|X_T]$, is $1$ (see Proposition \ref{prop:sort}), and we can choose the one with smallest conditional variance accordingly. 

However, we cannot directly calculate the conditional variance on $X_T$ or even estimate it with bounded accuracy by vanilla OLS. If we do so, we need $O(n)$ samples, which is larger than our target $O(\log n)$ samples, to estimate the conditional variance of $X_i$ given $X_T$. Nevertheless, we can take advantage of the sparsity of the graph, i.e. $|\pa(i)|\le d$. By conditional independence, we have  $\Var[X_i|X_{\pa(i)}]=\Var[X_i|X_T]=1$. Therefore, instead of calculating the conditional variance $\Var[X_i|X_T]$, we can calculate $\Var[X_i|X_J]$ for all $\le d$-element subsets $J$ of $T$. Hence, lines 7 to 13 do the following: For all $i$, we loop through $J\subseteq T,|J|\le d$ to find the subset that makes the empirical conditional variance smaller than $1+b_{\min}^2/2$. If we find such $i$, we append to topological order. We repeat this procedure until there is no node outside $T$. 

After finding a topological ordering in the first phase, the second phase aims to recover the parents of each node, in order to completely determine the topology.
Here, we draw inspiration from a technique developed in \cite{misra2020information} for undirected Graphical Models. For any vertex $i$, we would like to figure out which of the candidate subsets of size at most $d$ that come before it is the correct parent set.
The only guarantee we have in this sample-starved setting is that linear regression with $O(d)$ covariates has small error.
Thus, for any such candidate parent set $A$, we perform a series of regressions of $i$ with covariate set $A\cup B$, where $B$ ranges over all subsets of $d$ nodes before $i$ again. If for at least one of these sets $B$, some regression coefficient of a variable in $X_j \in B$ comes out sufficiently large, then we know that $A$ does not contain all parents and hence we reject it. The reason is that conditioned on the parent set, $X_i$ should be independent of all other variables that come before it in the ordering. On the other hand, if for all sets $B$, the only large coefficients come from variables in set $A$, then we know that $A$ contains all parents of $i$. We then pick the parents of $i$ as the variables of $A$ with large coefficients. 
A detailed description of this procedure is given in Algorithm~\ref{alg: inefficient}. We remark that, for the second phase of finding the parents, it is possible to use as an alternative the Best Subset Selection algorithm. Even though the authors in \cite{gao2023optimal} do not consider the equal variances model, it is possible to use their analysis about support recovery together with some additional steps to obtain similar sample complexity guarantees for the second phase of finding the parents. 

The next result provides an upper bound on the number of samples required by Algorithm~\ref{alg: inefficient} to find the correct topology with high probability (proof: \ref{proof:inefficient}.)
Note that Algorithm~\ref{alg: inefficient} requires knowledge of the parameters $\sigma,d,b_{\min}$. In Supplementary~\ref{sec: extension}, we present an adaptive version of this algorithm with similar sample complexity guarantees.
\begin{theorem}\label{thm:inefficient_upper}
Suppose we run Algorithm~\ref{alg: inefficient} using $m$ independent samples generated from a DAG $G$ according to \eqref{eqn: model}. If Assumption~\ref{ass:beta_lower_bound} holds, then there exists an absolute constant $C>0$, such that the following guarantees hold.
\begin{itemize}
    \item Phase 1 of Algorithm~\ref{alg: inefficient} succeeds in finding a correct topological ordering of the nodes with probability at least $1-\delta$, provided that
    \[
    m \geq C \frac{1}{b_{\min}^{4}}\lp(d\log(n/d)+\log(1/\delta)\rp)
    \] 
    \item Provided that Phase 1 of Algorithm~\ref{alg: inefficient} succeeds in finding a correct ordering, Phase 2 finds the correct parent set for each node with probability at least $1-\delta$, provided that 
    \[
    m \geq C \frac{\tau(G)}{b_{\min}^2}\lp(d\log(n/d)+\log(1/\delta)\rp)
    \]
\end{itemize}
\end{theorem}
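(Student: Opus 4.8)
The plan is to prove each phase separately, exactly matching the two bullet points. Both phases reduce to a single probabilistic primitive: establishing that empirical mean-squared errors (equivalently, empirical conditional variances) and empirical regression coefficients concentrate uniformly over all the $\binom{n}{d}$-or-so subsets the algorithm enumerates. The population-level gaps that make the algorithm correct are already supplied by Proposition~\ref{prop:sort} (conditional variance is exactly $1$ when $J$ contains all parents, and at least $1+b_{\min}^2$ otherwise). So the entire content of the theorem is turning those population gaps into empirical gaps with the stated sample sizes, then union-bounding over the enumeration.

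\textbf{Phase 1.} The decision rule compares an empirical MSE against the threshold $1+b_{\min}^2/2$, which sits squarely in the middle of the population gap $[1,\,1+b_{\min}^2]$. Fix a node $i$ and a candidate set $J$ with $|J|\le d$. Regressing $X_i$ on $X_J$ is an ordinary least-squares problem in at most $d$ covariates, and the empirical MSE is a quadratic form in the (sub-)Gaussian residual. The key observation—stressed in the text—is that we never need to estimate the ill-conditioned covariance $\Sigma_{JJ}$ itself; we only need the \emph{scalar} MSE to concentrate. First I would write the empirical MSE as $\frac1m\|X_i - X_J\widehat\beta_J\|^2$, decompose it around the population residual variance $\Var[X_i\mid X_J]$, and control the deviation via a standard chi-squared / Gaussian-OLS concentration bound (e.g. a Laurent–Massart tail), giving deviation of order $\sqrt{(d+\log(1/\delta'))/m}$ after accounting for the $d$ estimated coefficients. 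To make this smaller than $b_{\min}^2/2$ one needs $m \gtrsim (d+\log(1/\delta'))/b_{\min}^4$, and to cover all $i$ and all $J\subseteq T$ of size $\le d$ I take $\delta' = \delta / \big(n\binom{n}{d}\big)$, so $\log(1/\delta') = O(d\log(n/d)+\log(1/\delta))$, producing exactly the claimed $m \geq C b_{\min}^{-4}\big(d\log(n/d)+\log(1/\delta)\big)$. An induction on the while-loop then certifies that at each step the unique next node in the order is appended (its parent-set subset passes, every premature node fails), so the returned $T$ is a valid topological order.

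\textbf{Phase 2.} Here I condition on Phase~1 having produced a correct ordering, so $S$, the nodes before $i$, contains $\pa(i)$. The correctness argument is the conditional-independence reasoning already laid out: once we regress on a candidate $J\supseteq\pa(i)$ augmented by any extra block $K$, the true coefficients on $K$ are zero, whereas if $J$ misses a parent then some augmentation $K$ exposes a coefficient of magnitude $\ge b_{\min}$. The threshold $b_{\min}/2$ again bisects the gap. The probabilistic task is therefore to show each \emph{estimated} coefficient $\widehat\beta_k$ is within $b_{\min}/2$ of its population value, uniformly over the $i$, $J$, $K$ enumerated. This is where $\tau(G)$ enters: the error of an individual OLS coefficient scales with the residual variance of the regressand after projecting out the covariates divided by the relevant Schur-complement / conditional variance of $X_k$, and bounding the latter from below introduces the factor $\tau(G)$—intuitively, the variance a node can contribute along its out-edges. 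Propagating this, one gets coefficient deviation of order $\sqrt{\tau(G)(d+\log(1/\delta'))/m}$, so $m \gtrsim \tau(G)(d+\log(1/\delta'))/b_{\min}^2$ suffices per test; union-bounding over $i$ and over pairs $(J,K)$ (again at most $\mathrm{poly}(n)\binom{n}{d}^2$ many, absorbed into the $d\log(n/d)$ term) yields the stated bound $m \geq C\,\tau(G) b_{\min}^{-2}\big(d\log(n/d)+\log(1/\delta)\big)$.

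\textbf{The main obstacle} I anticipate is the Phase~2 coefficient-concentration step and, specifically, cleanly exposing the $\tau(G)$ dependence rather than a crude condition-number bound. Because the design matrix $X_{J\cup K}$ can be arbitrarily ill-conditioned, I cannot invert the empirical Gram matrix with the usual operator-norm guarantees; instead I must argue that the \emph{particular} coefficient being tested—on a variable $X_k$ with $k\in K$—has its estimation error governed by $X_k$'s conditional variance given the remaining regressors, and that this conditional variance is lower-bounded by a quantity controlled by $\tau(G)$ via the structural form $\Theta=(I-B)(I-B)^\top$. Getting a self-contained finite-sample bound for a single regression coordinate under an ill-conditioned Gaussian design, with the variance factor coming out as $\tau(G)$ and no hidden dependence on $\kappa$, is the technical crux; everything else is concentration plus a union bound over the enumerated subsets.
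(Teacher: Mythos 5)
Your proposal follows essentially the same route as the paper: Phase 1 is proved by decomposing the empirical MSE into a variance term and a coefficient-estimation term, controlling each with chi-squared concentration, and union-bounding over the roughly $n\binom{n}{d}$ regressions; Phase 2 is proved by showing each OLS coordinate concentrates to within $O(b_{\min})$ of its population value, with the error governed by $\sqrt{\Theta_{jj}/\Theta_{ii}}\le\sqrt{\tau}$. The single-coordinate bound under an ill-conditioned Gaussian design that you flag as the technical crux is exactly Proposition~2 of \cite{misra2020information} (Lemma~\ref{lem:vuffray} here), which the paper imports as a black box and combines with the explicit formula $\Theta_{jj}=1+\sum_{l}b_{j\to l}^2\le\tau$ from Lemma~\ref{lem:theta} rather than re-deriving the regression bound from scratch.
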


Notice that we obtain separate characterizations for the samples required to find the correct topological ordering and the correct parent sets. We remark that for the task of finding the
ordering, it suffices to assume that the noise is sub-gaussian
(for details see Supplementary~\ref{sec:subgaussian}).
The only relevant quantities for finding the topology are $n,d,b_{\min}$, with no dependence on $\kappa$. 
An additional relevant quantity for the sample complexity of determining the parent set is $\tau(G)$. We note here that this is always better than a condition number bound, as the following Lemma suggests. 

\begin{lemma}\label{lem:tau_kappa}
    For any probabilistic model of the form \eqref{eqn: model} with DAG $G$, we have $\tau(G) \leq \kappa$.  
\end{lemma}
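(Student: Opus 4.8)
The plan is to reinterpret $\tau(G)$ spectrally: I claim it is exactly the largest diagonal entry of the precision matrix $\Theta = \Sigma^{-1}$. Using $\sigma = 1$ and the identity $\Theta = (I-B)(I-B)^\top$ from the preliminaries, I would set $A = I - B$ and compute the diagonal of $\Theta = A A^\top$ directly. Since $G$ is a DAG it has no self-loops, so $A_{jj} = 1$, while the off-diagonal entries of row $j$ are $A_{jk} = -B_{jk} = -b_{jk}$, nonzero precisely for the children $k$ of $j$ (the out-edges $j \to k$). Hence $\Theta_{jj} = \sum_k A_{jk}^2 = 1 + \sum_{k:\, j \to k} b_{jk}^2$, which is exactly the $j$-th term in the definition \eqref{eq:tau_def} of $\tau$. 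Taking the maximum over $j$ gives $\tau(G) = \max_j \Theta_{jj}$. This identification is the key step; the rest is routine spectral bookkeeping.

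Next I would bound a diagonal entry by the top eigenvalue. For any $j$ we have $\Theta_{jj} = e_j^\top \Theta e_j \le \lambda_{\max}(\Theta)$, and since $\Theta = \Sigma^{-1}$ is symmetric positive definite its eigenvalues are the reciprocals of those of $\Sigma$, so $\lambda_{\max}(\Theta) = 1/\lambda_{\min}(\Sigma)$. Therefore $\tau(G) \le 1/\lambda_{\min}(\Sigma)$.

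It then remains to upgrade $1/\lambda_{\min}(\Sigma)$ to $\kappa = \lambda_{\max}(\Sigma)/\lambda_{\min}(\Sigma)$, i.e. to show $\lambda_{\max}(\Sigma) \ge 1$. I would obtain this from the equal-variance structure: because the parents of $i$ precede $i$ in any topological order, the noise $\varepsilon_i$ is independent of $X_{\pa(i)}$, so $\Var(X_i) = \Var\left(\sum_{j \in \pa(i)} b_{ji} X_j\right) + 1 \ge 1$, giving $\Sigma_{ii} \ge 1$ for every $i$. Since $\lambda_{\max}(\Sigma) = \max_{\|v\|=1} v^\top \Sigma v \ge e_i^\top \Sigma e_i = \Sigma_{ii} \ge 1$, the claim follows. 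Combining the three steps yields $\tau(G) \le 1/\lambda_{\min}(\Sigma) \le \lambda_{\max}(\Sigma)/\lambda_{\min}(\Sigma) = \kappa$.

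The only genuinely substantive point is the first one: recognizing that $\tau$ coincides with $\max_j \Theta_{jj}$ rather than being a more complicated function of $B$. Once that is in hand, the inequality reduces to the elementary facts that a diagonal entry is dominated by the spectral radius and that the variances of the equal-variance model are at least the noise variance. A minor remark is that everything is scale-invariant in $\sigma$ (both $\tau$ and $\kappa$ are unchanged under $\Sigma \mapsto \sigma^2 \Sigma$), so the global assumption $\sigma = 1$ costs no generality.
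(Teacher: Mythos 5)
Your proof is correct and follows essentially the same route as the paper: both identify $\tau(G)$ with the largest diagonal entry of $\Theta=(I-B)(I-B)^\top$ (the paper via its Lemma~\ref{lem:theta}, you by direct computation) and bound a diagonal entry by the extreme eigenvalue. The only cosmetic difference is in supplying the missing factor of $\kappa$: the paper notes that the last node in the topological order has $\Theta_{ii}=1$, so $\lambda_{\min}(\Theta)\le 1$, while you equivalently show $\lambda_{\max}(\Sigma)\ge 1$ from $\Var(X_i)\ge 1$.
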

The proof of this lemma can be seen in Appendix \ref{proof:tau_kappa}. Thus, our bound is stronger than the ones obtained using condition number assumptions \cite{chen2019causal,gao2022optimal}. Furthermore, there are also cases where $\tau(G)$ is a constant and $\kappa$ grows exponentially with $n$. For example, if we have the DAG $X_1\to X_2\to\dots\to X_n$ and $b_{i,i+1}=k>1$, then the condition number is at least $k^{n-1}$ but $\tau(G)$ is constant.
A related construction involving trees with similar guarantees is given in the proof of Lemma~\ref{lem:cond comp 1}. 

For the bound $\tau(G)$, in Section~\ref{sec:it_lower_bound}, we show that this dependence on $\tau$ is necessary. 
For more detailed discussion and comparisons of this result with that of prior work, we refer the reader to Section~\ref{sec:discussion_comparison}. 

\subsection{Information Theoretic Lower Bound}\label{sec:it_lower_bound}

In Section~\ref{se:inefficient_upper} we presented an algorithm that succeeds with high probability as long as it is given $O(\max(b_{\min}^{-4}, \tau b_{\min}^{-2}) d \log (n/d))$ samples. 
We now present a lower bound on the sample complexity of any algorithm, which matches the upper bound up to a factor of $d$. 
Thus, we obtain a fine-grained characterization of the optimal sample complexity for this problem. 

Our general strategy will be to define a family of distributions as in \eqref{eqn: model}, each indexed by a different topology and then apply Fano's inequality \cite{yu1997assouad}. Each family will instantiate some part of the difficulty of the problem, which corresponds to some term in the sample complexity. 
Concretely, we can prove (in Appendix \ref{proof:lower bound}) the following.

\begin{theorem}\label{thm:lower_bound}
    We can construct a family of DAGs $\mathcal{G}$ satisfying Assumption~\ref{ass:beta_lower_bound} with the following property: suppose one DAG $G \in \mathcal{G}$ is picked uniformly at random from $\mathcal{G}$ and $m$ independent samples $X^{(1)}, \ldots,X^{(m)}$ are generated according to $G$. Then, any estimator $e:\R^{mn} \to \mathcal{G}$ that takes as input these samples satisfies 
    \[
    \Pr\lp[e\lp(X^{(1)}, \ldots, X^{(m)}\rp) \neq G\rp] \geq \frac{1}{2}
    \]
    as long as $m \leq C \max(b_{\min}^{-4}, \tau b_{\min}^{-2}) \log (n)$, where $C>0$ is an absolute constant. 
\end{theorem}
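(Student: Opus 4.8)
The plan is to instantiate Fano's inequality with two separate families of DAGs, one for each term inside the maximum. For a uniformly chosen $G$ from a family $\cG$ of size $N$ and $m$ i.i.d.\ samples, the standard Fano bound gives
\[
\Pr[e(X^{(1)},\dots,X^{(m)}) \neq G] \ge 1 - \frac{m\max_{G,G'}\mathrm{KL}(P_G\|P_{G'}) + \log 2}{\log N},
\]
so it suffices to build, for each term, a family with $\log N \asymp \log n$ and per-sample pairwise KL bounded by the reciprocal of that term. A convenient simplification is that in the equal-variance model $\det\Theta = \det((I-B)(I-B)^\top) = 1$ for every DAG, so the log-determinant contribution to the Gaussian KL vanishes and the divergence reduces to a trace, or equivalently (by the chain rule along a topological order) to one half the expected squared change in the conditional mean of the single vertex whose structural equation is altered.

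First I would handle the $b_{\min}^{-4}$ term, which encodes the difficulty of orienting edges. I take $\lfloor n/2\rfloor$ vertex-disjoint two-node gadgets; in each gadget the edge is oriented either $a\to b$ or $b\to a$ with weight exactly $b_{\min}$. A direct computation (the two covariance matrices differ only by swapping the two coordinates) gives $\mathrm{KL} = b_{\min}^4/2$ between the two orientations of a single gadget. Taking the family to consist of a base configuration together with all single-gadget flips, any two members differ in at most two gadgets, so all pairwise KLs are at most $b_{\min}^4$, while $N = \lfloor n/2\rfloor + 1$. Fano then forces error at least $1/2$ whenever $m \lesssim b_{\min}^{-4}\log n$. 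Note each such graph has $\tau(G) = 1 + b_{\min}^2 = O(1)$, consistent with this being the regime $\tau \lesssim b_{\min}^{-2}$.

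The interesting term is $\tau b_{\min}^{-2}$, and here the main work lies. The gadget is a confounded edge: a root $k$, a strong child $c = \sqrt{\tau-1}\,X_k + \varepsilon_c$ (so that $\Theta_{kk} = \tau$ and this vertex realizes the maximum in \eqref{eq:tau_def}), and a target $i$. In the base state $i$ has the single edge $c\to i$ of some fixed weight $\beta = \Theta(1)$; in the alternative state $i$ additionally has the edge $k\to i$ of weight exactly $b_{\min}$, and the weight of $c\to i$ is simultaneously retuned to $\beta' = \beta - b_{\min}\sqrt{\tau-1}/\tau$. The point of this compensation is that, given the observed $X_k$ and $X_c$, the change in the conditional mean of $X_i$ is $\delta X_c + b_{\min}X_k$ with $\delta = \beta'-\beta$, and choosing $\delta$ to minimize $\mathbb{E}[(\delta X_c + b_{\min} X_k)^2]$ over the joint law of $(X_k,X_c)$ yields the residual value $b_{\min}^2/\tau$; hence $\mathrm{KL} = b_{\min}^2/(2\tau)$ per flipped gadget. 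Crucially $\beta$ is chosen large enough that both $\beta,\beta' \ge b_{\min}$, so Assumption~\ref{ass:beta_lower_bound} holds, the in-degrees stay at most two, and $\tau(G)=\tau$. Assembling $\lfloor n/3\rfloor$ independent such gadgets as base-plus-single-flips gives $N \asymp n$ with pairwise KL at most $b_{\min}^2/\tau$, so Fano forces error at least $1/2$ whenever $m \lesssim \tau b_{\min}^{-2}\log n$.

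Finally I would combine the two families: given the target $\tau$ and $b_{\min}$, I use the first family when $b_{\min}^{-4}\ge \tau b_{\min}^{-2}$ and the second otherwise (both satisfy Assumption~\ref{ass:beta_lower_bound} and the degree bound), which realizes the stated $\max(b_{\min}^{-4},\tau b_{\min}^{-2})\log n$ threshold. I expect the main obstacle to be the second construction: one must verify that the compensating retuning of the $c\to i$ weight drives the pairwise KL all the way down to $b_{\min}^2/\tau$ --- reflecting that the edge is essentially invisible marginally and detectable only after conditioning on the high-variance confounder, where the relevant conditional variance of $X_k$ is $1/\Theta_{kk}=1/\tau$ --- while simultaneously keeping every edge weight at least $b_{\min}$ and the graph's $\tau(G)$ equal to the intended value. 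The orientation family is comparatively routine. A final bookkeeping point is the additivity of KL across the independent gadgets, which makes the base-plus-single-flip packing argument go through with the claimed $\Theta(\log n)$ separation.
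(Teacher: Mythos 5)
Your proposal is correct and follows essentially the same route as the paper: the same Fano-plus-pairwise-KL framework, the same disjoint two-node orientation gadgets giving per-gadget KL $b_{\min}^4/2$, and the same confounded-triangle gadget for the $\tau b_{\min}^{-2}$ term, where your retuned weight $\beta'=\beta-b_{\min}\sqrt{\tau-1}/\tau$ is exactly the paper's compensation $b_1+\tfrac{Bb_{\min}}{1+B^2}$ with $B=\sqrt{\tau-1}$. The only difference is presentational: you obtain the KL value $b_{\min}^2/(2\tau)$ via the chain rule and the variational identity $\min_\delta \E[(\delta X_c+b_{\min}X_k)^2]=b_{\min}^2\Var(X_k\mid X_c)$, whereas the paper computes it by writing out the $3\times 3$ covariance matrices explicitly.
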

Thus, Algorithm~\ref{alg: inefficient} is minimax optimal up to a factor of $d$. Each of the two terms in the max reflects the difficulty of finding the correct topology and finding the parent set, respectively. Hence, we essentially characterize the complexity of these two problems separately up to a $d$-factor. This is explained in more detail in the Supplementary Material.
For a more detailed comparison of this result with prior work see Section~\ref{sec:discussion_comparison}.

\subsection{Efficient Algorithm with Bounded Variance}\label{sec:efficient}

The runtime of Algorithm~\ref{alg: inefficient} is $O(n^{2d+2})$, which even for small values of the maximum degree becomes impractical. We are thus in search of a computationally efficient alternative, which will also enjoy favorable sample complexity guarantees. 
One important observation about structure learning is its similarity with Sparse Linear Regression (SLR), where one is tasked with estimating the regression vector in a linear regression problem, assuming at most $d$ of its $n$ entries are non-zero.
Information theoretically, $O(d \log n)$ samples are sufficient to estimate the vector. However, it is conjectured that no polynomial time algorithm can succeed under general designs using $o(n)$ samples \cite{kelner2024lasso}. The connection with our problem comes from the fact that finding the parents of each node basically involves solving a SLR problem where the non-zero coefficients correspond to the parents of the node. Given this close connection, it seems unlikely that a polynomial time algorithm succeeds in finding the DAG using $o(n)$ samples, at least in the general case. 

We thus need to pose extra assumptions in order to make the problem tractable. One common assumption, which is also present in prior work \cite{ghoshal2017learning}, is that all individual variances are bounded by a constant. 

\begin{assumption}\label{ass:bounded_var}
    For all $i$, $\Var\lp(X_i\rp) \leq R^2$.
\end{assumption}

This can be seen as a mild assumption, as it rules out pathological cases where the variance of some nodes grows with the size of the DAG, a situation that could be considered unrealistic in most applications. It is also worth noting that this assumption is strictly weaker than a condition number bound. In particular, we can prove that a condition number bound implies a bound on the maximum variance, but the converse is not necessarily true (proof in Appendix \ref{sec:proof cond comp 1}). 

\begin{lemma}\label{lem:cond comp 1}
    \begin{enumerate}[label=(\roman*)]
        \item Suppose we have variables $X_1,\ldots,X_n$ generated according to \eqref{eqn: model} and let $\kappa$ be the condition number of their covariance matrix. Then $\Var\lp(X_i\rp) \leq \kappa$ for all $i$. 
        \item For every $\alpha \in (0,1)$, there exists a sequence of DAGs $\{G_n\}_n$ indexed by the number of nodes, with maximum variance $R_n$ and condition number $\kappa_n$, such that $R_n = O(1)$ but $\kappa_n = \Omega(n^\alpha)$.
    \end{enumerate}
\end{lemma}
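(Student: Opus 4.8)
The plan is to treat the two parts separately, as they require different ideas. For part (i) I would connect the individual variances to the spectrum of $\Sigma$. Since $\Sigma \succeq 0$, every diagonal entry satisfies $\Var(X_i) = \Sigma_{ii} = e_i^\top \Sigma e_i \leq \lambda_{\max}(\Sigma)$, so it suffices to show $\lambda_{\max}(\Sigma) \leq \kappa$. Writing $\kappa = \lambda_{\max}(\Sigma)/\lambda_{\min}(\Sigma)$, this reduces to proving $\lambda_{\min}(\Sigma) \leq 1$. Here I would use the fact that every DAG has at least one source node $s$ with $\pa(s) = \emptyset$; for such a node \eqref{eqn: model} gives $X_s = \varepsilon_s$ and hence $\Sigma_{ss} = 1$. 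Evaluating the Rayleigh quotient at $e_s$ then yields $\lambda_{\min}(\Sigma) \leq e_s^\top \Sigma e_s = 1$, completing the chain $\Var(X_i) \leq \lambda_{\max}(\Sigma) = \kappa\,\lambda_{\min}(\Sigma) \leq \kappa$.

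For part (ii) the key structural insight is that the condition number and the variances are driven by different features of the graph: the variance of a node grows along directed \emph{paths}, whereas $\kappa$ can be inflated by a single node of large \emph{out-degree}. A pure path illustrates the obstruction to a naive construction: to make $\kappa$ grow one must push the edge weight toward $1$, but this simultaneously blows up the variances through the geometric accumulation $\Var(X_\ell) = \sum_{j=0}^{\ell} b^{2j}$, so variance and condition number cannot be decoupled on a path. The plan is therefore to use a tree. Fix a constant $b \in (b_{\min}, 1)$ and consider a rooted tree in which each node has a single parent; unrolling the recursion $\Var(X_{\text{child}}) = b^2\Var(X_{\text{parent}}) + 1$ from $\Var(\text{root}) = 1$ gives $\Var(X_i) \leq \sum_{j \geq 0} b^{2j} = 1/(1-b^2) =: R^2$ for every node, uniformly in $n$. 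Thus $R_n = O(1)$ regardless of how large the tree is.

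It remains to force $\kappa$ to grow on a tree with $n$ nodes. The cleanest route is Lemma~\ref{lem:tau_kappa}, which gives $\kappa \geq \tau(G) = 1 + \max_j \sum_{l: j \to l} b_{jl}^2$; if some node has $r$ children of weight $b$ then $\kappa \geq 1 + r b^2 = \Omega(r)$. The star on all $n$ nodes ($r = n-1$) already yields $\kappa = \Omega(n) = \Omega(n^\alpha)$ for every $\alpha \in (0,1)$, and a prescribed rate $n^\alpha$ is obtained by placing a single node with $\Theta(n^\alpha)$ children inside a tree on $n$ nodes, for which the variance bound above holds verbatim. The main obstacle — and the crux of the whole argument — is exactly this decoupling: one must verify that enlarging an out-degree drives $\kappa$ (through $\tau$) without affecting the variance of any node, which holds because in a tree each variance depends only on the chain of ancestors of that node, and the path example shows this decoupling genuinely fails for other topologies. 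If one wishes to avoid invoking Lemma~\ref{lem:tau_kappa}, the star admits an explicit spectrum: on the $2$-dimensional subspace spanned by the root and the normalized average of the leaves, $\Sigma$ reduces to a $2\times 2$ matrix of determinant $1$, so its two eigenvalues $\lambda_\pm$ satisfy $\lambda_+\lambda_- = 1$ with $\lambda_+ = \Theta(r b^2)$, whence $\kappa = \lambda_+^2 = \Theta(r^2 b^4)$; this recovers the bound directly and shows it is in fact polynomially stronger than the $\tau$-based estimate.
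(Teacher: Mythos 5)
Your part (i) is correct and is essentially the paper's own argument: both bound $\lambda_{\min}(\Sigma)\le 1$ via the Rayleigh quotient at a source node and $\lambda_{\max}(\Sigma)\ge \max_i\Var(X_i)$ via the diagonal, so no further comment is needed there. Your part (ii), however, takes a genuinely different route. The paper builds a complete binary tree of height $h$ with every edge weight equal to a single $\lambda\in(1/\sqrt2,1)$, bounds each variance by the geometric series $1/(1-\lambda^2)$ along the root-to-leaf path, and then lower-bounds $\kappa$ by an inductive law-of-total-variance computation showing that the uniform average of the leaves has variance $1+(2\lambda^2)+\cdots+(2\lambda^2)^h=\Theta(n^{\log_2(2\lambda^2)})$, after which $\lambda=2^{(\alpha-1)/2}$ tunes the exponent to exactly $\alpha$. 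You instead concentrate the growth of $\kappa$ in a single high-out-degree node: a star (or a tree containing a node with $\Theta(n^{\alpha})$ children) keeps every variance at $1+b^2=O(1)$, while $\kappa\ge\tau=1+rb^2=\Omega(r)$ by Lemma~\ref{lem:tau_kappa}, and your explicit two-dimensional invariant-subspace computation (the $2\times2$ block with determinant $1$ and trace $2+rb^2$, plus eigenvalue $1$ with multiplicity $r-1$ on the complement) is correct and even gives the sharper $\kappa=\Theta(r^2b^4)$. Since the lemma only asks for $\kappa_n=\Omega(n^\alpha)$, your construction is a valid and considerably shorter proof. What the paper's bounded-branching tree buys in exchange for its longer induction is a stronger separation that your star does not provide: in the binary tree \emph{both} $R$ and $\tau$ stay $O(1)$ while $\kappa$ grows polynomially, which matters for the paper's broader narrative because its sample complexity is governed by $\tau$; in your star one has $\tau=\Theta(nb^2)$, so $\kappa$ and $\tau$ grow together and the example does not witness the $\tau$-versus-$\kappa$ gap, only the $R$-versus-$\kappa$ gap that the lemma literally requests.
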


Our strategy for finding the DAG is to use Lasso to discover the neighborhood of each graph. We use the mean squared error of the Lasso to determine the next node in the ordering. Afterwards, to find the parents of each node, we regress on nodes that appear before it in the topological ordering. The details appear in Algorithm~\ref{alg: efficient}, where $\lambda$ is the regularization parameter. 

\begin{algorithm}
    \caption{Efficient algorithm for recovering the topology}
    \label{alg: efficient}
    \begin{algorithmic}[1]
    \INPUT Samples $(X^{(1)}, X^{(2)}, \ldots,X^{(m)})$\\
    \OUTPUT Topology $\hat{G}$
        \STATE $T \gets \left[i: \widehat{\Var(X_i)} \le (1+b_{\min}^2/2)\right]$ 
        \WHILE{$|T| < n$}
        \STATE $M=\emptyset$
        \FOR{$i\in [n]\backslash T$}
        \STATE Choose $\lambda = O(\sqrt{\log(n/\delta)}R/\sqrt{m})$
        \STATE $\hat{\beta} \gets$ LASSO regression of $X_i$ on $X_T$
        \STATE $MSE\gets \frac{1}{m}\sum_j\lp(X_i^{(j)}-X_T^{(j)}\hat{\beta}\rp)^2$
        \IF{$MSE\le 1+b_{\min}^2/2$}
        \STATE $M \gets M \cup \{i\} $
        \STATE $\pa(i) \gets \{j \in J : |\hat{\beta}_j| \geq b_{\min}/2\}$
        \ENDIF
        \ENDFOR
        \STATE $T \gets T$.append($M$)
        \ENDWHILE
    \end{algorithmic}
\end{algorithm}

Below we present an upper bound on the sample complexity of Algorithm~\ref{alg: efficient} (proof is in Appendix \ref{proof: lasso}.) 
\begin{theorem}\label{thm: lasso}
  Under Assumptions~\ref{ass:beta_lower_bound} and ~\ref{ass:bounded_var}, there is an absolute constant $C$ such that Algorithm~\ref{alg: efficient} run with $m$ samples recovers the correct DAG with probability at least $1-\delta$, as long as
  \[
  m\ge C({R^2\cdot \log (n/\delta)}\cdot d^{4}\cdot \tau^3\cdot b_{\min}^{-4})\]
  Moreover, Algorithm~\ref{alg: efficient} runs in time $poly(n,m,\tau,R)$. 
\end{theorem}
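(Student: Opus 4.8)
The plan is to analyze Algorithm~\ref{alg: efficient} inductively, one ``layer'' of the topological order at a time, and reduce the whole argument to a guarantee for a single Lasso regression. Suppose that at the start of an iteration the set $T$ is a correct prefix of some topological order, so that every node of $T$ precedes every node of $[n]\setminus T$. For a node $i\notin T$ we regress $X_i$ on $X_T$, and two things must happen. First, the mean-squared-error test must place $i$ in the next layer if and only if $\pa(i)\subseteq T$. This rests on the gap property behind Proposition~\ref{prop:sort}: if $\pa(i)\subseteq T$ then $\Var[X_i\mid X_T]=1$, whereas if some parent $j$ lies outside $T$ then, because $T$ is a prefix, the noise $\varepsilon_j$ is independent of $X_T$ and enters $X_i$ with coefficient $b_{ji}$, forcing $\Var[X_i\mid X_T]\ge 1+b_{ji}^2\ge 1+b_{\min}^2$. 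Second, for a node that is added, thresholding the Lasso coefficients at $b_{\min}/2$ must return exactly $\pa(i)$; since true parent coefficients have magnitude $\ge b_{\min}$ and all others are $0$, this holds as soon as $\|\hat\beta-\beta^\star\|_\infty<b_{\min}/2$. Everything thus reduces to (a) concentrating the Lasso prediction error so the empirical MSE lands on the correct side of $1+b_{\min}^2/2$, and (b) bounding $\|\hat\beta-\beta^\star\|_\infty$.

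Both controls require a restricted eigenvalue (compatibility) condition for the design $X_T$, and the whole point is to obtain one \emph{without any dependence on $\kappa$}. This is where Assumption~\ref{ass:bounded_var} and the quantity $\tau$ enter. The key observation is that $\tau$ from \eqref{eq:tau_def} is exactly the largest diagonal entry of the precision matrix $\Theta=(I-B)(I-B)^\top$, because the $j$-th row of $I-B$ has squared norm $1+\sum_{l:j\to l}b_{jl}^2$. Using the Loewner domination of the inverse of a submatrix by the submatrix of the inverse, $(\Sigma_{SS})^{-1}\preceq(\Sigma^{-1})_{SS}=\Theta_{SS}$, together with $\lambda_{\max}(\Theta_{SS})\le \mathrm{tr}(\Theta_{SS})\le |S|\tau$, we obtain the condition-number-free sparse-eigenvalue bound
\[
\lambda_{\min}(\Sigma_{SS})\ \ge\ \frac{1}{d\tau}\qquad\text{for every }S\text{ with }|S|\le d .
\]
This replaces the usual $1/\kappa$ bound and is what keeps the sample complexity independent of the condition number. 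A whitened Gaussian-design (restricted isometry) argument then transfers this multiplicatively to the empirical Gram matrix $\tfrac1m X_T^\top X_T$ uniformly over $O(d)$-sparse directions once $m\gtrsim d\log(n/\delta)$, with no dependence on $R$ or $\kappa$, giving an empirical restricted eigenvalue $\phi^2\gtrsim 1/(d\tau)$.

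With the empirical RE condition in hand, the correct-node case is routine. The prescribed $\lambda=O\!\big(R\sqrt{\log(n/\delta)/m}\big)$ dominates $\|\tfrac1m X_T^\top\varepsilon_i\|_\infty$, since $\varepsilon_i\perp X_T$ and each $\Var(X_j)\le R^2$, so the standard Lasso oracle inequality gives prediction error $\tfrac1m\|X_T(\hat\beta-\beta^\star)\|^2\lesssim d\lambda^2/\phi^2$ and, via $\|\hat\beta-\beta^\star\|_\infty\le\|\hat\beta-\beta^\star\|_2\lesssim \sqrt d\,\lambda/\phi^2$, control of the thresholding error; forcing these below $b_{\min}^2/4$ and $b_{\min}/2$ is what feeds the polynomial in $d,\tau,R,b_{\min}^{-1}$ (each factor of $1/\phi^2\sim d\tau$ contributing the powers of $d$ and $\tau$). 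The genuinely delicate case, and the \emph{main obstacle}, is ruling out a false positive: showing that when a parent of $i$ lies outside $T$ the empirical Lasso MSE stays above $1+b_{\min}^2/2$. The danger is that $|T|$ may be as large as $n\gg m$, so an \emph{unregularized} fit would interpolate $X_i$ exactly and drive the empirical MSE to $0$; the argument must show the $\ell_1$ penalty prevents this. My plan is to write the residual as $X_i-X_T\hat\beta=r+X_T\gamma$, where $r=X_i-\E[X_i\mid X_T]$ is the irreducible Gaussian noise with $\tfrac1m\|r\|^2\ge 1+b_{\min}^2-o(1)$ by $\chi^2$ concentration and, crucially, $r\perp X_T$. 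Then $\tfrac1m\|X_i-X_T\hat\beta\|^2\ge \tfrac1m\|r\|^2-2\big\|\tfrac1m X_T^\top r\big\|_\infty\|\gamma\|_1$, where $\big\|\tfrac1m X_T^\top r\big\|_\infty\lesssim R\sqrt{\log(n/\delta)/m}$. The crude basic-inequality bound $\|\hat\beta\|_1\lesssim R^2/\lambda$ makes the cross term $O(R^2)$, which is too large to protect the small margin $b_{\min}^2/2$; the real work is to control $\|\gamma\|_1$ on the effective cone using the compatibility condition (equivalently, to lower bound the minimal $\ell_1$ norm required to fit the independent noise $r$), and it is this sharpening that is responsible for the extra factors of $d$, $\tau$, and $b_{\min}^{-1}$ beyond the support-recovery requirement.

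Finally, a union bound over the at most $n$ regressions in each of the at most $n$ iterations (absorbing the $\log n$ into $\log(n/\delta)$) makes every ordering decision and every parent set correct with probability at least $1-\delta$. Collecting the requirements from the empirical-RE step, the prediction-error and $\ell_\infty$ bounds for the correct nodes, and the false-positive bound yields the stated sample complexity $m\ge C\,R^2\log(n/\delta)\,d^4\tau^3 b_{\min}^{-4}$. Since each step solves a convex Lasso program on at most $n$ variables with $m$ samples, the total running time is $\mathrm{poly}(n,m,\tau,R)$.
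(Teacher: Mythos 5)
Your high-level skeleton (induct on a correct prefix $T$, use the variance gap of Proposition~\ref{prop:sort} for the ordering test, threshold Lasso coefficients at $b_{\min}/2$ for the parents, and replace condition-number bounds by a $\tau$-based eigenvalue bound) matches the paper. Your eigenvalue bound $\lambda_{\min}(\Sigma_{SS})\ge 1/(d\tau)$ via $(\Sigma_{SS})^{-1}\preceq\Theta_{SS}$ is a fine variant of the paper's Lemma~\ref{lem:kappa}, which proves the global bound $\lambda_{\min}(\Sigma)\ge 1/((d+1)\tau)$ directly by Cauchy--Schwarz on the quadratic form $x^\top\Theta x=\sum_i(x_i-\sum_j b_{j\to i}x_j)^2$ and then feeds it into the Lagrangian-Lasso bound of Lemma~\ref{lma: weinwright}.

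However, there is a genuine gap, and you have located it yourself: the rejection case, i.e.\ showing that when some parent of $i$ lies outside $T$ the reported MSE exceeds $1+b_{\min}^2/2$. Your plan lower-bounds the empirical MSE of the \emph{Lasso fit over all of $X_T$}, and you concede that the cross term $\|\tfrac1m X_T^\top r\|_\infty\|\gamma\|_1$ is only $O(\mathrm{poly}(R))$ under the basic inequality, which does not protect a margin of $b_{\min}^2/2$; the promised ``sharpening on the effective cone'' is exactly the missing argument, and it is not routine (the Lasso residual direction is not $d$-sparse, so your sparse-eigenvalue bound does not apply to it, and an $\ell_1$-cone version would reintroduce dependence on quantities you have not controlled). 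The paper avoids this problem structurally rather than analytically: in the full version of the algorithm (Algorithm~\ref{alg: efficient full}), the Lasso is used \emph{only as a screening step} to produce the set $J=\{k:|\tilde\beta_k|\ge b_{\min}/2\}$; if $|J|>d$ the node is rejected outright, and otherwise the MSE test is applied to a fresh \emph{OLS} regression of $X_i$ on the at-most-$d$-dimensional $X_J$. Since $|J|\le d$, the rejection case then follows verbatim from Proposition~\ref{prop:sort} together with the Phase~1 concentration lemmas (Lemmas~\ref{lem: prediction-noneff} and~\ref{lem: variance-noneff}) already proved for Theorem~\ref{thm:inefficient_upper} --- no lower bound on a high-dimensional Lasso MSE is ever needed. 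The paper states this motivation explicitly (``it is harder to argue about the MSE of Lasso when not all parents are involved''). To repair your proof you should either adopt this screen-then-OLS structure, or actually supply the cone-based lower bound on the Lasso objective that you currently only gesture at; as written, the claim that the false-positive probability is controlled is unsupported.
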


A comparison with prior work follows in the next Section. 

\subsection{Discussion and Comparison}\label{sec:discussion_comparison}

For completeness of presentation, we now offer a more detailed technical comparison of our results with the most relevant prior work. 

Regarding upper bounds on the sample complexity, \cite{ghoshal2017learning} obtain a sample complexity of $O(k^4 \log n)$, where $k$ is the size of the maximum Markov blanket, which could be much larger than the maximum in-degree. Furthermore, they need to assume that the infinity norm of $\Theta$ is bounded, which is a form of condition number assumption that we do not require. In \cite{chen2019causal} they obtain an improved $O(d^2 \log n)$ dependence, but they also need an upper bound on the maximum variance of each node in the graph. 
Finally, \cite{gao2022optimal} introduce an algorithm that needs $O(M^5 d \log n)$ samples, which has the optimal dependence on $d $ and $n$, but also depends on $M$, which is an upper bound on the square root of the condition number of $\kappa$ of $\Sigma$. In contrast, Algorithm~\ref{alg: inefficient} needs $O(\max(b_{\min}^{-4}, \tau b_{\min}^{-2}) d \log (n/d))$ samples, which is also optimal in the dependence on $n$ and $d$, 
but only depends on $\tau$, which is smaller than $\kappa$ by Lemma~\ref{lem:cond comp 1}.

The quantity $\tau$ is information theoretically necessary, as Theorem~\ref{thm:lower_bound} implies an $\Omega(\max(b_{\min}^{-4}, \tau b_{\min}^{-2}) \log (n))$ lower bound. The only lower bound in prior work that is directly comparable to ours is the one in \cite{gao2022optimal}. Using our notation, Theorem 3.1 from their work establishes the lower bound $\Omega(\max(\frac{d \log (n/d)}{M^2-1}, b_{\min}^{-2}\log n))$. It is clear that our bound has the improved $b_{\min}^{-4}$ factor compared to $b_{\min}^{-2}$ of \cite{gao2022optimal}. At first glance, it seems that their bound $\Omega(\frac{d \log (n/d)}{M^2-1})$ is stronger by a $d$ factor. However, we can show (see Appendix \ref{proof: eigenvalues}.) that $M^2 -1 \geq d b_{\min}^2$ always holds,
which means that the bound in \cite{gao2022optimal} reduces to $\frac{\log n}{b_{\min}^{2}}$, which is strictly worse than the lower bound in Theorem~\ref{thm:lower_bound}. In fact, this
$d$-factor discrepancy between upper and lower bounds is a known limitation in the literature on learning undirected GGMs \cite{wang2010information,misra2020information,kelner2020learning}. Thus, improving the lower bound for our problem could have significant implications for other related questions as well. Also, \cite{ghoshal2017information} establishes general lower bounds for Bayesian Networks where the conditional distributions are exponential families. However, when instantiated in the equal variances model, these results scale as $\frac{k \log n}{w_{\max}^2}$, where $w_{\max}$ is the maximum $\cL_2$-norm of the coefficients vector of the parents of a node. Thus, this bound is of the order of $\log n$ and fails to capture the dependence on $\tau$. 

Our result also gives us the opportunity to compare the problems of learning directed and undirected graphs in relatively equal footing. In \cite{misra2020information}, they propose an algorithm that uses
$(d/\beta_{\min}) \log n$ samples to find the undirected graph, where $\beta_{\min}$ is the minimum normalized edge strength. Thus, we can view $\tau$ as the analog of $\beta_{\min}$ for directed graphs. 

Regarding the efficient algorithm, the most relevant prior work to compare with is \cite{ghoshal2018learning}. They perform $\cL_1$-regularized Gaussian MLE and their analysis assumes that the inverse covariance matrix of the variables is upper bounded in $\cL_1$-norm. This assumption is qualitatively different from the ones we make, which makes the two results incomparable. 

In addition, Algorithm \ref{alg: inefficient} needs knowledge of $d,b_{\min}$ to perform. We give a modification of the algorithm when $\sigma^2$, $d$ or $b_{\min}$ are unknown to the learner, and we refer the reader to Appendix \ref{sec: extension}, where a similar guarantee (for Phase 2) is obtained with a similar order of sample complexity. 

We also note that Algorithm~\ref{alg: inefficient} is not directly comparable to the PC algorithm \cite{spirtes2001causation}.  Although PC algorithm does not require the structural equation as Equation \eqref{eqn: model}, the faithfulness assumption is not always satisfied because of the path cancellation. We also performed experimental simulation for the PC algorithm, and the result shows that the PC algorithm behaves worse than Algorithms \ref{alg: inefficient}, \ref{alg: efficient}, and \cite{gao2022optimal} even for the case of small $n,d$.
Further comparison and discussion can be found in Section \ref{sec: pc} in the supplementary material. 

Finally, Phase 1 of Algorithm \ref{alg: inefficient} can be extended to the case when $\varepsilon_i$ are only centered sub-Gaussian with equal variance (see Section \ref{sec:subgaussian}).

\section{Proof Sketch of the Analysis}\label{sec:sketch}

Before presenting the proof sketch for Theorem~\ref{alg: inefficient}, we would like to highlight the technical novelty that leads to an algorithm with sample complexity that is independent of the condition number. The general technique is to use the gap between conditional variances if some parents are missing from the conditioning set. For Algorithm \ref{alg: inefficient}, we conduct a more detailed analysis of the conditional variance to achieve a better sample complexity. Specifically, we use concentration properties of the OLS error for both Phase 1 and Phase 2 in Algorithm \ref{alg: inefficient}. For Phase 1, the analysis of the conditional variance is sufficient to make the claim in Theorem \ref{alg: inefficient}, which is a bound already free of conditional number. However, for Phase 2, the variance gap only leads to $O(d\log(n)\cdot \frac{\tau(G)^2}{\beta_{\min}^4})$ sample complexity, where the ratio $\frac{\tau(G)^2}{\beta_{\min}^4}$ is not information optimal. Therefore, we shift the method to additionally examine the OLS coefficients in \cite{misra2020information} to distinguish the correct parent set. Therefore, we can then sharpen the sample complexity to $O(\log(n)/d\cdot \frac{\tau(G)}{\beta_{\min}^2})$ by using this method. For Algorithm \ref{alg: efficient}, we employ LASSO regression and also utilize the technique of examining the MSE and the coefficients. For the MSE part (similar to Phase 1 in Algorithm \ref{alg: inefficient}), we can reuse the proof in Theorem~\ref{thm:inefficient_upper}, and we replace the analysis for Phase 2 in Algorithm \ref{alg: inefficient} with LASSO results for the coefficients in \cite{wainwright2019high}.

We now provide a proof sketch for Theorem~\ref{thm:inefficient_upper} (for Algorithm \ref{alg: inefficient}) and Theorem \ref{thm: lasso} (for Algorithm~\ref{alg: efficient}), which can be analyzed using the same basic principles. 
Suppose we have a variable $X_i$ and a subset $I \subseteq [n]\setminus \{i\}$ with $|I| \leq d$. Since our algorithms operate inductively, we will always assume that all nodes in $I$ come before $i$ in the topological ordering. The basic observation comes from analyzing the regression
\begin{gather*}
\hat{\beta} := \argmin_\beta \frac{1}{m}\sum_{j=1}^m \lp(X_i^{(j)} - \beta^\top X_I^{(j)}\rp)^2 \\
\beta^* := \argmin_\beta \E\lp[ \lp(X_i^{(j)} - \beta^\top X_I^{(j)}\rp)^2\rp]
\end{gather*}
In particular, we care about the empirical and population \emph{Mean Squared Error} (MSE) of this regression, which is 
\begin{gather*}
    \widehat{MSE}(X_i,X_I) = \frac{1}{m} \sum_{j=1}^m \lp(X_i^{(j)} - \hat{\beta}^\top X_I^{(j)}\rp)^2 \\
    MSE(X_i,X_I) = \E\lp[\lp(X_i^{(j)} - (\beta^*)^\top X_I^{(j)}\rp)^2\rp]
\end{gather*}
Since all the variables are Gaussian, we can also write 
\[
MSE(X_i,X_I) =\Var\lp(X_i | X_I\rp)
\]
In order to estimate the conditional variance $\Var\lp(X_i | X_I\rp)$, previous analyses (\cite{chen2019causal,gao2022optimal}) use the well known identity
\[
\Var\lp(X_i | X_I\rp) = \Sigma_{ii} - \Sigma_{iI}\Sigma_{II}^{-1}\Sigma_{Ii}\,,
\]
where we use the indices to denote subsets of the rows and columns. They proceed by estimating each of these subblocks of the covariance matrix, which requires condition number assumptions. Instead, we use the observation that this conditional variance is equal to the MSE of the regression of $X_i$ with $X_I$. 
Using simple properties of our model, we can establish that $\widehat{MSE}(X_i,X_I)$ will be close to $MSE(X_i,X_I)$ when using $O(d \log n)$ samples, with no dependence on the condition number.

Continuing with the proof, we distinguish between two possible cases:
\begin{itemize}
    \item If $\pa(i) \subseteq I$, then clearly $\Var\lp(X_i |X_I\rp) = \Var\lp(X_i | X_{\pa(i)}\rp) = \Var(\varepsilon_i)=1$, by conditional independence of $X_i$ from all previous nodes, given the parents.
    \item If $\pa(i) \setminus I \neq \emptyset$, meaning that some parents are omitted from $I$, then
    $\Var\lp(X_i | X_{\pa(i)}\rp) = \Var(\epsilon_i) + \Var\lp(\sum_{j \in \pa(i)}b_{ji}X_j | X_I\rp) > 1$. 
\end{itemize}
Thus, our ability to distinguish between these two cases depends on how big the variance gap between them is. A smaller gap would make the task harder. 
We now explain why the tasks of finding the ordering and finding the parents have \emph{different} variance gaps, which will result in different factors appearing in the sample complexity of each task. 

\subsection{Phase 1: Finding the topological ordering}
Algorithm~\ref{alg: inefficient} works inductively. Suppose we have found the first $r$ nodes in the ordering, which is the subset $T_r$. For every $i \notin T_r$, $i$ should be the next node in the ordering only if all of its parents lie in $T_r$. If it does, then there exists a subset $I \subseteq T_r$ with $|I| = d$ such that $\Var(X_i|X_I) = 1$. 
On the other hand, if there exists $j \notin T_r$ such that $j \to i$, then for every $I \subseteq T_r$ we have $\Var(X_i |X_I) \geq \Var(b_{ji}\epsilon_j + \epsilon_i |X_I) = b_{ij}^2 + 1 \geq 1 + b_{\min}^2$, where we used the fact that $j$ comes after $I$ in the ordering. Thus, we can see that the variance difference between the two cases is at least $b_{\min}^2$. The empirical variance behaves like a chi-squared distribution, which is subexponential. Thus, in order to distinguish between these two cases, we need $O(b_{\min}^{-4})$ samples. This is how the first bound of Theorem~\ref{thm:inefficient_upper} arises. 

\subsection{Phase 2: Finding the parents}
After we find the ordering, we turn to locate the parents for each node $i$. Let $B_i$ be the nodes before $i$ in the ordering. Suppose a subset $I \subseteq B_i$ contains all the parents of $i$. Then $\Var(X_i |X_I) = 1$. Now suppose there exists $j \in \pa(i)$ with $j \notin I$. As we observed earlier, in that case $\Var(X_i|X_I) \geq 1 + \Var(b_{ji}X_j |X_I) = 1 + b_{ji}^2 \Var(X_j|X_I)$. The situation now is different because $I$ could also contain some descendants of $j$. In general, we establish the following Lemma.

\begin{lemma}\label{l:separation}
    For any $i \in [n]$ and any $I \subseteq [n]\setminus \{i\}$, we have $\Var(X_i | X_I) \geq 1+b_{\min}^2/\tau$. 
\end{lemma}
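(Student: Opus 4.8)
The plan is to reduce the claim to a clean linear-algebra statement about distances in the space of the driving noises. Throughout I read the lemma in the regime relevant to Phase~2: $I$ consists of nodes that precede $i$ in the topological order and $I$ omits at least one parent $j\in\pa(i)$ (if $\pa(i)\subseteq I$ then $\Var(X_i\mid X_I)=1$ and there is nothing to separate). Write $L:=I-B$ and $A:=(I-B)^{-\top}$, so that $X=A\varepsilon$ with $\varepsilon\sim\mathcal N(0,I)$, and let $A_k$, $L_m$ denote the $k$-th and $m$-th rows of $A$ and $L$, with $e_k$ the standard basis vectors. Since every node in $I$ precedes $i$, the noise $\varepsilon_i$ is independent of $X_I$, and writing $X_i=Y+\varepsilon_i$ with $Y:=\sum_{k\in\pa(i)}b_{ki}X_k$ yields $\Var(X_i\mid X_I)=1+\Var(Y\mid X_I)$. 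Hence it suffices to prove $\Var(Y\mid X_I)\geq b_{\min}^2/\tau$.

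Next I would pass to the noise coordinates. The variable $Y=\langle c,\varepsilon\rangle$ has coefficient vector $c=A_i-e_i$, and each $X_k=\langle A_k,\varepsilon\rangle$. Because $\varepsilon$ is isotropic Gaussian, the conditional variance is a squared distance:
\[
\Var(Y\mid X_I)=\mathrm{dist}\big(c,\,V\big)^2,\qquad V:=\mathrm{span}\{A_k:k\in I\}.
\]
To lower-bound this distance it is enough to exhibit a single unit vector $u\perp V$ with $\langle c,u\rangle$ large, since then $\mathrm{dist}(c,V)\geq|\langle c,u\rangle|$ by Cauchy--Schwarz.

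The key step — and the part I expect to be the crux — is the choice of test vector. I would exploit the biorthogonality $A L^\top=(I-B)^{-\top}(I-B)^\top=I$, i.e.\ $\langle A_k,L_m\rangle=\delta_{km}$, and take $u:=L_j/\|L_j\|$ for the missing parent $j$. Biorthogonality immediately gives $\langle A_k,L_j\rangle=\delta_{kj}=0$ for every $k\in I$ (as $j\notin I$), so $u\perp V$. For the inner product with $c$, note $\langle A_i,L_j\rangle=\delta_{ij}=0$ and that the $i$-th entry of $L_j$ is $L_{ji}=-b_{ji}$ (since $j\to i$), whence $\langle c,L_j\rangle=\langle A_i,L_j\rangle-\langle e_i,L_j\rangle=b_{ji}$. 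Finally $\|L_j\|^2=1+\sum_{l:j\to l}b_{jl}^2\leq\tau$ by the definition of $\tau$, so
\[
\mathrm{dist}(c,V)\geq\frac{|\langle c,L_j\rangle|}{\|L_j\|}=\frac{|b_{ji}|}{\sqrt{1+\sum_{l:j\to l}b_{jl}^2}}\geq\frac{b_{\min}}{\sqrt{\tau}},
\]
using Assumption~\ref{ass:beta_lower_bound}. Squaring and adding back the $\varepsilon_i$ contribution gives $\Var(X_i\mid X_I)\geq 1+b_{\min}^2/\tau$.

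The conceptual obstacle is recognizing that the factor $1/\tau$ should emerge from the out-edges of the missing parent $j$ rather than from any global spectral quantity: the naive bound $\Var(X_i\mid X_I)\geq 1+b_{ji}^2\,\Var(X_j\mid X_I)$ is useless, because descendants of $j$ that lie inside $I$ can drive $\Var(X_j\mid X_I)$ close to zero. The biorthogonality trick sidesteps this entirely — the row $L_j$ is, up to scaling, exactly the direction orthogonal to every $X_k$ with $k\neq j$, and its squared norm $1+\sum_{l:j\to l}b_{jl}^2\le\tau$ is precisely what converts the edge weight $b_{ji}$ into the stated bound. The remaining points to handle carefully are justifying the distance formula for the Gaussian conditional variance and the independence of $\varepsilon_i$ from $X_I$, both of which rely on $I$ preceding $i$ in the ordering.
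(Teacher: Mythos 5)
Your proof is correct. A point of context worth having: the paper never actually supplies a proof of this lemma --- it is stated in the proof-sketch section only to explain why the naive MSE test in Phase 2 would cost $O(\tau^2/b_{\min}^4)$ samples, after which the paper abandons that route in favor of the coefficient test of Corollary~\ref{cor:vuffray_dag}. So there is no official argument to match yours against. You were also right to repair the statement before proving it: as literally written (``any $I$'') the claim is false --- taking $I=\pa(i)$ gives conditional variance exactly $1$, and letting $I$ contain a child of $i$ can push it below $1$ --- and the intended hypotheses are exactly the ones you impose, namely that $I$ consists of non-descendants of $i$ and omits some parent $j$.

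The argument itself checks out at every step: $X=(I-B)^{-\top}\varepsilon$, the rows of $A=(I-B)^{-\top}$ and of $L=I-B$ are biorthogonal, $L_j\perp\mathrm{span}\{A_k\}_{k\in I}$ because $j\notin I$, $\langle A_i-e_i,L_j\rangle=b_{ji}$, and $\|L_j\|^2=1+\sum_{l:j\to l}b_{jl}^2\le\tau$. The route most naturally suggested by the paper's own machinery is different in packaging but identical in substance: Lemma~\ref{lem:theta} gives $\Theta_{jj}=1+\sum_{l:j\to l}b_{jl}^2\le\tau$, the standard identity $\Var(X_j\mid X_{[n]\setminus\{j\}})=1/\Theta_{jj}$ then gives $\Var(X_j\mid X_S)\ge 1/\tau$ for every $S\not\ni j$, and finally $\Var(X_i\mid X_I)\ge 1+b_{ji}^2\,\Var\lp(X_j\mid X_{I\cup\pa(i)\setminus\{j\}}\rp)\ge 1+b_{\min}^2/\tau$, where the first inequality holds because the contribution of the remaining parents can be absorbed into the projection (this absorption step is exactly what makes the paper's displayed inequality $\Var(X_i\mid X_I)\ge 1+b_{ji}^2\Var(X_j\mid X_I)$ rigorous). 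Your $\|L_j\|^2$ is precisely $\Theta_{jj}$ and your test vector $L_j/\|L_j\|$ is precisely the direction realizing $\Var(X_j\mid X_{-j})$, so the two proofs carry the same linear-algebraic content; yours has the advantage of being self-contained and of making visible that only the out-edges of the missing parent $j$, and no global spectral quantity, enter the bound.
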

Thus, $\tau$ arises exactly as a lower bound in the conditional variance of a node if we also condition on some of its descendants. From Lemma~\ref{l:separation} it becomes
clear that the variance gap is now lower bounded by $b_{\min}^2/\tau$. Thus, by implementing this strategy of finding the parents, we would need $O(\tau^2/b_{\min}^4)$ samples to distinguish the true parent set, but it does not give the optimal dependence on $\tau$ or $b_{\min}$. 

To obtain the optimal dependence, we instead use a strategy that was employed in \cite{misra2020information} to learn undirected GGMs. Specifically, instead of using the MSE to distinguish the correct parent set, we use the coefficients computed by the regression. The idea is that if a subset $I$ contains all the parents, then if we regress $X_i$ with $X_{I\cup J}$ for any other $J \subseteq B_i$, the coefficients of the regression vector in $J$ will be small. On the other hand, if there exists $j \in \pa(i)$ with $j \notin I$, then there exists a set $J$ with $j \in I$, such that $I\cup J$ includes all parents. 
Thus, if we regress $X_i$ with $X_{I\cup J}$, the coefficient for $X_j$ will be large. So, this becomes our distinguishing criterion for the correct neighborhood.
It turns out that if we have $O(\tau/b_{\min}^2)$ samples, the accuracy with which we can compute the coefficients in the regression is sufficient to make that distinction. Hence, we obtain the optimal sample complexity.

\section{Simulation Results}\label{sec:simulation}

To validate our findings, we run Algorithm \ref{alg: inefficient}, Algorithm \ref{alg: efficient}, and the algorithm in \cite{gao2022optimal} on synthetic datasets generated by DAGs) and compare their performance. The samples are generated i.i.d. from a randomly generated DAG $G$, which is constructed as follows:

(1) Draw a random permutation of $\{0,1,\dots,n-1\}$ as $\sigma(0),\dots,\sigma(n-1)$.

(2) For each node $\sigma(j)$, choose $\sigma(0),\dots,\sigma(j-1)$ as a parent of $\sigma(j)$ i.i.d. with probability $\min(\frac{d}{j+2},\frac{1}{2})$. Then, truncate the parent set to at most $d$ nodes.

(3) For each edge $i\to j$, we assign $b_{i,j}$ i.i.d. uniformly sampled from $[-b_{\max},-b_{\min}]\cup[b_{\min},b_{\max}]$ where $0<b_{\min}<b_{\max}$ are tunable parameters.

We note that this is similar to the topology that was chosen in \cite{gao2022optimal} to simulate their algorithm. After sampling the graph, we generate $m$ independent samples according to the model specified by equation (\ref{eqn: model}). 
\begin{figure}[h]
    \centering
    \includegraphics[width=1\linewidth]{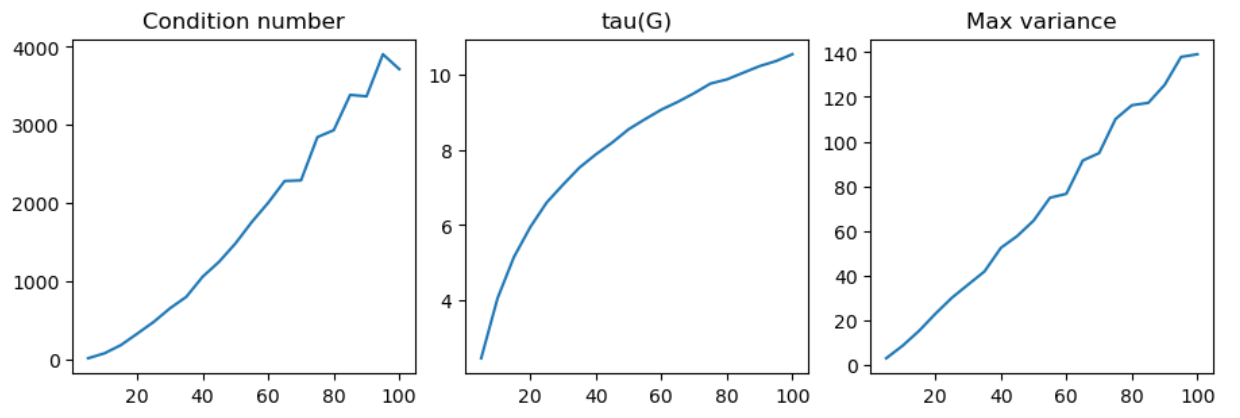}
    \vskip - 1em
    \caption{ Growth of condition number, $\tau$ and maximum variance as $n$ grows, averaged over $1000$ random graphs with same $d=4,b_{\min}=0.5,b_{\max}=1$.}
    \vskip - 1em
    \label{fig:growth-main}
\end{figure}

Figure \ref{fig:simulation-main} shows the results of running the three Algorithms with an increasing number of nodes $n=10,15,20,25$,  degree $d=4$, $b_{\min}=0.5$, $b_{\max}=1$, and an increasing number of samples $m=200,400,600,800,1000$. For each combination of parameters $n$, $d$, $m$, the accuracy is tested by generating $45$ random graphs, running all three algorithms on them, and calculating the fraction of successful recoveries of the graph. 

We also present a plot (Figure \ref{fig:fpr-main}) of the average number of false positive (reported but not in the ground truth) edges for the graph with the 95\% confidence intervals. We observe that the number of false positive edges in Algorithms \ref{alg: inefficient}, \ref{alg: efficient}, and \cite{gao2020polynomial} have a trend of decreasing to zero, and Algorithms \ref{alg: inefficient}, \ref{alg: efficient} converge faster then \cite{gao2020polynomial} when the number of samples exceeds $600$. More simulations and code, including comparisons with the PC algorithm, can be found in Supplementary Materials \ref{sec: pc}, \ref{more simulation results}, and \ref{code}.

\begin{figure}[h]
    \centering
    \begin{subfigure}[t]{0.45\textwidth}
    \centering
    \includegraphics[width = \textwidth]{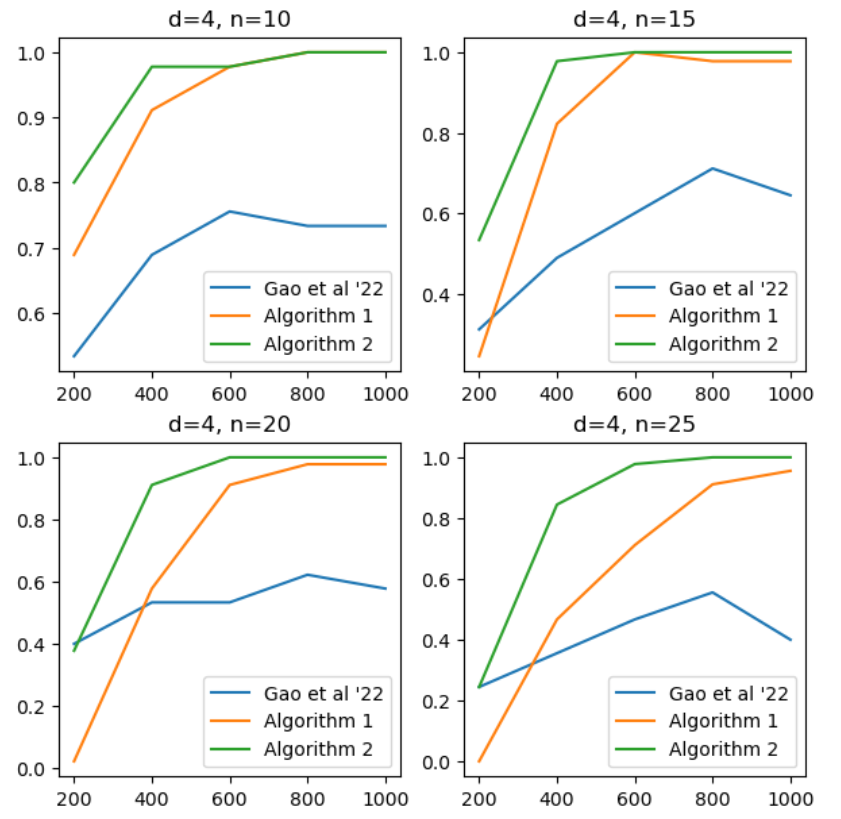}
    \caption{Accuracy results for three algorithms. The blue lines represent the algorithm in \cite{gao2022optimal}, the orange line is Alg. \ref{alg: inefficient}, and the green line is Alg. \ref{alg: efficient}. The $x$-axis represents the number of samples $m$, and the $y$-axis represents the accuracy.}
    \label{fig:simulation-main}
    \end{subfigure}
    \hfill
    \begin{subfigure}[t]{0.45\textwidth}
    \centering
    \includegraphics[width = \textwidth]{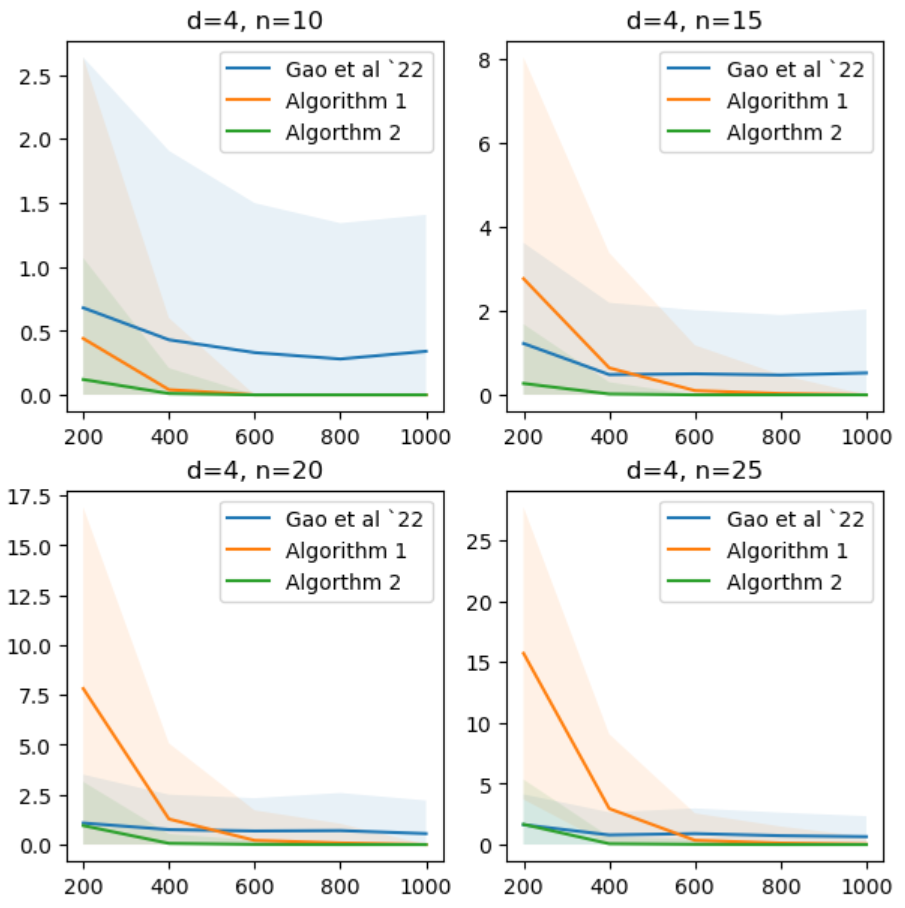}
    \caption{False positive results for three algorithms. We still use blue lines to represent the algorithm in \cite{gao2022optimal}, the orange line is Alg. \ref{alg: inefficient}, and the green line is Alg. \ref{alg: efficient}. The $x$-axis represents the number of samples $m$, and the $y$-axis represents the average false positive edges. Also, the shade represents the 95\% confidence region (centered with mean and $\pm2\sigma$ error region) of each case.}
    \label{fig:fpr-main}
    \end{subfigure}
\end{figure}

The first thing to notice is that our algorithms seem to be less accurate than \cite{gao2022optimal} for a small number of samples (up to around $400$ samples). On the other hand, our methods clearly outperform \cite{gao2022optimal} once the number of samples becomes large enough, with the latter even failing to be consistent. One possible explanation for this behavior is that the sample complexity of Algorithms~\ref{alg: inefficient} and ~\ref{alg: efficient} have a worse constant than \cite{gao2022optimal}. 
For Algorithm~\ref{alg: inefficient}, in order to find the parents, we iterate over all $d$-sized subsets both to find the candidate neighborhood and to test it with respect to all other neighborhoods, while \cite{gao2022optimal} simply choose the subset with the smallest conditional variance, which could result in a better constant. 
But if the number of samples becomes even slightly larger than $400$, especially for larger $n$, the effect of the condition number begins to show, as \cite{gao2022optimal} fails to converge to the truth. Indeed, in Figure~\ref{fig:growth-main}, we have plotted the growth of the average condition number $\kappa$, $\tau(G)$, and maximum variance $R$ as $n$ increases. 
It is clear that $\kappa$ grows superlinearly, $\tau$ grows sublinearly, and $R$ grows linearly. This could explain the deteriorating performance of \cite{gao2022optimal} 
when $\kappa$ increases, compared to our methods, which remain consistent. 

The experiments were run using Python with \texttt{numpy} and \texttt{scikit-learn} package on an 11th Gen Intel Core i7-11800H 2.30 GHz CPU with 16GB of memory.

\bibliography{ref}
\bibliographystyle{alpha}

\newpage
\appendix
\onecolumn
\section{Proof of Theorem \ref{thm:inefficient_upper}}\label{proof:inefficient}

The Algorithm consists of two parts: finding the topological order and finding the parents. We use the notation $i \preceq j$ ( $i \prec j$ ) to denote that $i$ comes (strictly) before $j$ in the topological ordering.  $\succeq$ and $\succ$ are defined similarly. 

We will sometimes use the notation $a\gets b$ when we want to apply some Lemma that has variable $b$ with the value $a$.  e.g. in the proof of Theorem \ref{thm:inefficient_upper}, we use Lemma \ref{lem: variance-noneff}, we are going to take $\delta$ in the Lemma \ref{lem: variance-noneff} as $\delta/(2N)$, where this $\delta$ is the $\delta$ in Theorem \ref{thm:inefficient_upper}

As we have explained in Section~\ref{sec:sketch}, the general approach in both phases of the algorithm is to use the Mean Squared Error (MSE) of linear regression to estimate the conditional variance of a node given some subset of nodes that appear before it in the ordering. 
The variance gap between correct and incorrect subsets of nodes will be different in each phase, which gives rise to different sample complexities for finding the ordering
and finding the parents. 
We give the details for each phase below.
Without loss of generality, $n\ge 2$, $b_{\min}\le\frac{1}{\sqrt{3}}$ in the above discussion.

\subsection{Finding the ordering}

Algorithm \ref{alg: inefficient} builds the topological ordering iteratively by finding at each step the next node in the topological order.
Suppose inductively that the first $t$ nodes form a valid topological ordering, for some $t < n$ and let $T$ be this subset of nodes. We will show that Algorithm~\ref{alg: inefficient} correctly identifies the $t+1$-th node in the ordering. To achieve that, it considers all other nodes $i \in [n] \setminus T$ and calculates the conditional variance of $i$ given subsets $J \subseteq T$ with $|J| \leq d$.
This is the step where
previous analyses (\cite{chen2019causal,gao2022optimal} introduce condition number assumptions, in order to estimate this conditional variance. Instead, we will use the MSE of linear regression to estimate this conditional variance. Indeed,
we can show that the empirical and true MSE are close to each other using a number of samples that is independent of the condition number.  

The Empirical mean squared error consists of two parts. We can write it as 
\begin{align*}
    &\widehat{MSE}=\frac{1}{m}\sum_{j=1}^m(X_i^{(j)}-X_J^{(j)}\beta_J)^2\\
    = & \underbrace{\frac{1}{m}\sum_j(X_i^{(j)}-X_J^{(j)}\beta_J^*)^2}_{\text{variance error}}+\underbrace{\frac{1}{m}\sum_j(X_J^{(j)}\hat\beta_J-X_J^{(j)}\beta_J^*)^2}_{\text{beta error}}+2{\frac{1}{m}\sum_j(X_i^{(j)}-X_J^{(j)}\beta_J^*)(X_J^{(j)}\hat\beta_T-X_T^{(j)}\beta_J^*)}
\end{align*}

In the above, we have identified the two key quantities of the error, which we call the Emprirical Variance Error (call it $VE$) and Empirical Beta Error (call it $BE$) accordingly. By Cauchy-Schwartz inequality, we have
$$VE^2+BE^2-2\sqrt{VE\cdot BE}\le \widehat{MSE}\le VE^2+BE^2+2\sqrt{VE\cdot BE}$$

In order to decide whether $i$ can be the next node in the ordering, we have to determine whether all parents of $i$ are contained in $T$ or not. 
The following Lemma shows that, in the population case (infinite samples) there is way to distinguish between these two cases by looking at the conditional variance. 

\begin{proposition}\label{prop:sort}
    (1) If $T$ contains all the parents, then there exists some $J\subseteq T$ and $|J|\le d$ such that the conditional variance of $X_i|x_J$ is $1$.

    (2) If $T$ does not contain all the parents, then there exists a parent that is not in $T$, so for any coefficient vector $\beta_T$ each $X_i-X_T\beta_T$ is a Gaussian random variable that has variance at least $1+b_{\min}^2$. Specifically, for any $J\subseteq T$, for any coefficient $\beta_J$, $X_i-X_J\beta_J$ also has variance $\geq 1 + b_{\min}^2$.
\end{proposition}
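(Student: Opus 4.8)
The plan is to handle the two parts separately, with part (2) carrying essentially all of the difficulty. For part (1), I would simply produce the witness $J = \pa(i)$. By the in-degree bound we have $|\pa(i)| \le d$, and by hypothesis $\pa(i) \subseteq T$. Substituting the structural equation \eqref{eqn: model} into the conditional variance and using that $\varepsilon_i$ is independent of $\{X_j : j \in \pa(i)\}$ (every parent precedes $i$), conditioning on $X_{\pa(i)}$ freezes the deterministic term $\sum_{j \in \pa(i)} b_{ji} X_j$, so $\Var(X_i \mid X_{\pa(i)}) = \Var(\varepsilon_i) = 1$. This requires no estimation and is immediate.

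For part (2) the crux is choosing the right omitted parent to reason about, in order to avoid path cancellation. Since $T$ is a topological prefix that misses at least one parent of $i$, let $j^\ast$ be the parent of $i$ with $j^\ast \notin T$ appearing \emph{latest} in the topological order. The key structural claim is that there is no directed path of length $\ge 2$ from $j^\ast$ to $i$: if $j^\ast = v_0 \to v_1 \to \dots \to v_\ell = i$ with $\ell \ge 2$, then $v_{\ell-1}$ is a parent of $i$ with $v_{\ell-1} \succ j^\ast$; it cannot lie in $T$ (every node of the prefix $T$ precedes $j^\ast$) nor outside $T$ (that would contradict the maximality of $j^\ast$), so no such path exists. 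Hence the only $j^\ast\!\to\! i$ path is the direct edge, and the coefficient of the fresh noise $\varepsilon_{j^\ast}$ in $X_i$ is exactly $b_{j^\ast i}$.

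To finish, I would expand $X_i$ and each $X_m$ ($m \in T$) in the independent-noise basis via $X = (I-B)^{-\top}\varepsilon$, so that coefficients become path sums. Fix any $I \subseteq T$ and any $\beta$. In the residual $X_i - \beta^\top X_I$, the coefficient of $\varepsilon_{j^\ast}$ is $b_{j^\ast i}$, since no $X_m$ with $m \in T$ contains $\varepsilon_{j^\ast}$ (no node of $T$ is reachable from $j^\ast$, as all of $T$ precedes it); and the coefficient of $\varepsilon_i$ is $1$, since no $X_m$ with $m \in T$ contains $\varepsilon_i$ ($i$ succeeds $T$) and the self-coefficient of $\varepsilon_i$ in $X_i$ is $1$. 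As $\varepsilon_{j^\ast}$ and $\varepsilon_i$ are distinct independent standard Gaussians, independence of the noise coordinates gives $\Var(X_i - \beta^\top X_I) \ge b_{j^\ast i}^2 + 1 \ge 1 + b_{\min}^2$ by Assumption~\ref{ass:beta_lower_bound}. Taking $I = T$ and $I = J$ yields both assertions of the proposition.

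The main obstacle is precisely path cancellation: for an \emph{arbitrary} omitted parent $j$, the total coefficient of $\varepsilon_j$ in $X_i$ is a signed sum over all $j\!\to\! i$ paths, which could be far smaller than $b_{ji}$ (even zero), so the naive bound $\Var(X_i - \beta^\top X_I) \ge b_{ji}^2 + 1$ can fail. The maximal choice of $j^\ast$ is exactly what eliminates every indirect path and restores the clean coefficient $b_{j^\ast i}$; recognizing that this is the correct node to isolate is the one genuinely non-routine step, while the rest of the argument is bookkeeping in the noise basis.
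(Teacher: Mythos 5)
Your proof is correct, and it isolates the same crucial object as the paper --- the \emph{last} parent $k = j^\ast$ of $i$ in the topological order, which necessarily lies outside the prefix $T$ --- but the variance computation proceeds differently. The paper applies the law of total variance twice: first conditioning on $X_{T_k}$ (all nodes preceding $k$), under which $X_T\beta_T$ is frozen, and then conditioning additionally on $X_k$, which covers all parents of $i$ and leaves the explicit excess term $\Var(b_{ki}X_k \mid X_{T_k}) = b_{ki}^2$. You instead expand everything in the independent-noise basis via $X = (I-B)^{-\top}\varepsilon$ and read off the coefficients of $\varepsilon_{j^\ast}$ and $\varepsilon_i$ in the residual $X_i - \beta^\top X_I$, using the path-sum interpretation of $(I-B)^{-\top}$ together with your structural observation that maximality of $j^\ast$ forbids any $j^\ast \to i$ path of length at least two (a fact the paper uses only implicitly, when it asserts that conditioning on $X_{T_k}$ and $X_k$ covers all parents of $i$). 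Both routes rest on the same two facts --- $j^\ast$ is unreachable from $T$, and the direct edge is the only $j^\ast$-to-$i$ path --- so neither is more general, but your noise-basis argument makes the non-cancellation phenomenon completely explicit and handles an arbitrary subset $I \subseteq T$ and arbitrary $\beta$ in one stroke, whereas the paper's conditioning argument is shorter to state and generalizes more readily to the sub-Gaussian-noise setting treated in the appendix, where joint Gaussianity of the path-sum expansion is not available but conditional variance manipulations still go through.
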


\begin{proof}

This property is also proved in \cite{gao2022optimal} , but for completeness we provide a proof. We first prove Property \ref{prop:sort}. The first point follows immediately from the definition. For the second point, let $k$ be the last parent of $i$ in the topological order.
By the preceding discussion, it follows that $k \notin T$. Let $T_k$ be the set of nodes that is prior to $k$. So $T\subseteq T_k$ Then, we have
\begin{align}
    \Var(X_i-x_T\beta_T) &= \mathbb{E}_{X_{T_k}}[\Var(X_i-X_{T_k}\beta_T|X_{T_k})]+\Var[\mathbb{E}(X_i-X_{T}\beta_{T})|X_{T_k}]\\
    &\ge \mathbb{E}_{X_{T_k}}[\Var(X_i-X_T\beta_T|X_{T_k})]= \mathbb{E}_{X_{T_k}}[\Var(X_i|X_{T_k})]\\
    &= \mathbb{E}_{X_{T_k}}[\mathbb E_{X_k}[\Var(X_i|X_{T_k\cup\{k\}})]]+\mathbb{E}_{X_{T_k}}[\Var[\mathbb E_{X_k}(X_i|X_{k})|X_{T_k}]]\\
    &= 1+\mathbb{E}_{X_{T_k}}[\Var(b_{k\to i}X_k)|X_{T_k}]=1+b_{k\to i}^2\ge 1+b_{\min}^2
\end{align}

We explain each line: (4) is by the law of total variance, (5) is because the variance is greater then zero and $X_T\beta_T$ is a constant under the condition of $X_{T_k}$ because $T\subseteq T_k$, (6) is again by law of total variance, (7) is because while conditioning on $X_{T_k}$ and $X_k$, all parents of $i$ are conditioned. Thus the variance of $X_i$ is $1$ because of the model. Also, the expected value of $X_{i}$ condition on $X_{T_k}$ and $X_k$ is $X_i=\sum_{j}b_{j\to i} X_j=\sum_{j\ne k}b_{j\to i} X_j+b_{k\to i} X_{k}$. Since conditioning on $X_{T_k}$, that is all prior nodes than $k$, the value $\sum_{j\ne k}b_{j\to i} X_j$ is a constant, so the conditional variance of $\mathbb{E}(X_i|X_k)$ is the same as that of $b_{k\to i} X_{k}$, which by the definition of the model is $b_{k\to i}^2$ times $1$, which is $b_{k\to i}^2$.
\end{proof}

The preceding property holds in the population setting. We would like to establish a similar separation using finite samples. 
This is done using the following Lemmas. The first one quantifies how large $BE$ is when the set $J$ might or might not contain all parents of node $i$. 
To state it, it will be helpful to we extend the meaning of the ground truth coefficient: since the joint distribution $(X_J,X_i)$ is a multivariate Gaussian distribution with covariance matrix $\begin{pmatrix}
    \Sigma_{JJ} & \Sigma_{Ji}\\
    \Sigma_{iJ} & \Sigma_{ii}
\end{pmatrix}$
we can always write
$$
\E\lp[X_i | X_J\rp] = (\beta^*)^\top X_J
$$
where $\beta^* = \Sigma_{JJ}^{-1}\Sigma_{Ji}$. We call this vector $\beta^*$ the \emph{population coefficients} of $X_i$ with respect to $X_J$. The reason is that if we performed
linear regression of $X_i$ on $X_J$ with infinite samples, the result would be $\beta^*_J$ and the conditional variance is $\Sigma_{ii}-\Sigma_{iJ}\Sigma_{JJ}^{-1}\Sigma_{Ji}$.  If $J$ contains all of $i$'s parents and none of its descendants, then we have the population coefficiens of $X_i$ wrt $X_J$ are exaclty  the coefficients $b_{j\to i}$.

\begin{lemma}\label{lem: prediction-noneff}
    \textbf{(For beta error.)} In Algorithm \ref{alg: inefficient}, let $J \subseteq [n] \setminus \{i\}$ with $|J|=k\le d$ and $\Var(X_i|X_J)=V^2$, $\beta^*$ be the populations coefficients of $X_i$ w.r.t. $X_J$ and let $\hat{\beta_J}$ be the result of linear regression of $X_i$ on $X_J$ using $m$ independent samples $X^{(1)}, \ldots, X^{(m)}$ drawn from this distribution. Then, there is a universal constant $C_B=4$, such that for any $\varepsilon<1/2, \delta <1/2$, if we are given $m\ge C_B(\log(1/\delta))/\varepsilon$ samples, then
    $$\frac{1}{m}\sum_{j=1}^m(X_T^{(j)}(\beta^*_J-\hat\beta_J))^2\le \varepsilon V^2$$ with at least $1-\delta$ probability. 
\end{lemma}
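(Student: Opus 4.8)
The plan is to exploit the exact Gaussian structure of the regression residual so that the in-sample beta error collapses to a scaled chi-squared variable whose law does not involve $\Sigma_{JJ}$ at all; this is precisely what removes any dependence on the condition number. Since $(X_J, X_i)$ is jointly Gaussian I would first write $X_i = (\beta^*_J)^\top X_J + \xi$, where $\xi \sim \mathcal{N}(0, V^2)$ with $V^2 = \Var(X_i \mid X_J)$. Crucially, for a jointly Gaussian pair the conditional variance does not depend on the value of the conditioning vector, so the residual $\xi = X_i - \E[X_i\mid X_J]$ is uncorrelated with $X_J$ and hence (being jointly Gaussian) independent of it. Stacking the $m$ samples into a design matrix $\mathbf{X} \in \R^{m \times k}$ with rows $X_J^{(j)}$ and a noise vector $\boldsymbol{\xi} = (\xi^{(1)}, \ldots, \xi^{(m)})^\top \sim \mathcal{N}(0, V^2 I_m)$ that is independent of $\mathbf{X}$, the OLS solution obeys $\hat\beta_J - \beta^*_J = (\mathbf{X}^\top\mathbf{X})^{-1}\mathbf{X}^\top \boldsymbol{\xi}$, so the quantity to be controlled is
\[
\frac{1}{m}\sum_{j=1}^m \lp(X_J^{(j)}(\beta^*_J - \hat\beta_J)\rp)^2 = \frac{1}{m}\Lrnorm{\mathbf{X}(\hat\beta_J - \beta^*_J)}^2 = \frac{1}{m}\boldsymbol{\xi}^\top P \boldsymbol{\xi},
\]
where $P = \mathbf{X}(\mathbf{X}^\top\mathbf{X})^{-1}\mathbf{X}^\top$ is the orthogonal projection onto the column space of $\mathbf{X}$ (well defined almost surely, since $\mathbf{X}$ has full column rank once $m \ge k$).

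Next I would condition on the design $\mathbf{X}$. Given $\mathbf{X}$, the vector $\boldsymbol{\xi}/V$ is a standard isotropic Gaussian independent of $\mathbf{X}$, and $P$ is a fixed rank-$k$ orthogonal projection; therefore $\boldsymbol{\xi}^\top P \boldsymbol{\xi}/V^2$ is exactly distributed as $\chi^2_k$, with no residual dependence on the geometry or spectrum of $\mathbf{X}$. This is the core of the argument: the projection launders away the matrix $\Sigma_{JJ}$ that prior analyses had to estimate and invert, leaving only its rank. I would then apply the Laurent--Massart chi-squared tail bound $\Pr[\chi^2_k \ge k + 2\sqrt{kt} + 2t] \le e^{-t}$ with the choice $t = \log(1/\delta)$.

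Finally I would calibrate constants. With $m \ge C_B \log(1/\delta)/\varepsilon$ and $C_B = 4$ the target threshold is $\varepsilon m = 4\log(1/\delta) = 4t$. The dimension enters only through the chi-squared mean $k$, and the constant $C_B = 4$ is calibrated so that whenever $k \le t = \log(1/\delta)$ --- the regime in which the lemma is invoked, since it is always applied after replacing $\delta$ by $\delta/(2N)$ for $N$ the number of candidate size-$d$ subsets, giving $\log(1/\delta) \ge \log N \gtrsim d\log(n/d) \ge k$ --- one has $k + 2\sqrt{kt} + 2t \le 4t$. The Laurent--Massart bound then yields $\boldsymbol{\xi}^\top P \boldsymbol{\xi}/V^2 \le 4t = \varepsilon m$ with probability at least $1 - \delta$, so $\frac{1}{m}\boldsymbol{\xi}^\top P\boldsymbol{\xi} \le \varepsilon V^2$; since the conditional statement holds for every realization of $\mathbf{X}$, it holds unconditionally. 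The main obstacle is making the reduction airtight --- the exact independence of the Gaussian residual from the design, and that conditioning produces a genuine $\chi^2_k$ rather than merely an approximately Gaussian quadratic form --- after which the concentration step and the constant bookkeeping that absorbs the chi-squared mean into the $\log(1/\delta)$ term (so that no separate additive $k/\varepsilon$ sample cost appears) are routine.
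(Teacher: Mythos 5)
Your reduction is essentially the paper's own argument: the paper also observes that $\sum_{j}\bigl((X_J^{(j)})^\top(\beta^*_J-\hat\beta_J)\bigr)^2$ is exactly $V^2\chi^2_k$ (it phrases this via the conditional law of $\hat\beta_J-\beta^*_J$, covariance $V^2(\sum_j X_J^{(j)}{X_J^{(j)}}^\top)^{-1}$, whereas you phrase it via the hat matrix $P=\mathbf{X}(\mathbf{X}^\top\mathbf{X})^{-1}\mathbf{X}^\top$ acting on the independent residual vector --- the same computation, and indeed the projection formulation is the one the paper itself uses later for the sub-Gaussian extension), and then applies a chi-square tail bound (an MGF-based one rather than Laurent--Massart). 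So conceptually you are aligned with the paper, including the key point that the design's spectrum is laundered out.

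The one genuine flaw is your constant calibration. The claim ``whenever $k\le t=\log(1/\delta)$ one has $k+2\sqrt{kt}+2t\le 4t$'' is false: at $k=t$ the left side is $5t$. The inequality $k+2\sqrt{kt}+2t\le 4t$ is equivalent to $\sqrt{k}\le(\sqrt3-1)\sqrt{t}$, i.e.\ $k\le(4-2\sqrt3)\,t\approx 0.54\,t$, so under your hypothesis $k\le t$ you only get the threshold $5t$, i.e.\ $C_B=5$, not $4$. You can repair this either by strengthening the regime assumption to $k\le t/2$ (which does hold in the application, since $\log(2N/\delta)\ge\log N\ge d\log(n/d)\ge 2d\ge 2k$ once $n\ge e^2 d$), or by doing what the paper actually does: its proof does not absorb $k$ into $\log(1/\delta)$ at all, but concludes that $m\ge(2k+4\log(1/\delta))/\varepsilon$ suffices, and the invocation in the Phase~1 analysis correspondingly uses $m\ge C_B\bigl(k+\log(2N/\delta)\bigr)/\varepsilon$ with the additive $k$ kept explicit. (The paper's lemma statement, which omits the $k$ term, is itself slightly inconsistent with its own proof; your attempt to reconcile that is reasonable, but the arithmetic as written does not close.)
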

Thus, this shows that $BE$ will be relatively small if $J$ contains all parents.
The next Lemma will be used to argue that $VE$ will also be small in that case. It's proof is standard and uses the subexponential property for the $\chi^2$ distribution. 

\begin{lemma}\label{lem: variance-noneff}
    \textbf{(For variance error.)}
    Let $Y_1, Y_2, \dots, Y_m$ be i.i.d. Gaussian random variables with variance $\sigma^2$. Then, there is a universal constant $C_V=6$ such that for any $\varepsilon<1/2,\delta<1$, if we are given $m\ge C_V(\log(2/\delta)/\varepsilon^2)$ samples, then with at least $1-\delta$ probability, we have
    $$(1-\varepsilon) \sigma^2\le\frac{1}{m}\sum_{i=1}^m Y_m^2\le (1+\varepsilon) \sigma^2$$
\end{lemma}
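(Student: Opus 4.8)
The plan is to reduce the statement to the standard Chernoff tail bound for the chi-squared distribution. Treating the $Y_i$ as centered (as they are in every use of the Lemma, being residuals of the population regression), the normalized sum $Z := \sum_{i=1}^m (Y_i/\sigma)^2$ follows a $\chi^2_m$ distribution, and the target event $(1-\varepsilon)\sigma^2 \le \frac{1}{m}\sum_i Y_i^2 \le (1+\varepsilon)\sigma^2$ is exactly $(1-\varepsilon)m \le Z \le (1+\varepsilon)m$. So it suffices to show that $\Pr[|Z - m| \ge \varepsilon m] \le \delta$ whenever $m \ge 6\log(2/\delta)/\varepsilon^2$, which I would obtain by bounding the two tails separately and applying a union bound.

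For the tails I would apply the Chernoff method using the moment generating function $\E[e^{sZ}] = (1-2s)^{-m/2}$, valid for $s < 1/2$. For the upper tail, $\Pr[Z \ge (1+\varepsilon)m] \le \inf_{0<s<1/2} e^{-s(1+\varepsilon)m}(1-2s)^{-m/2}$; the minimizer is $s^* = \varepsilon/(2(1+\varepsilon))$, which yields the exponent $\frac{m}{2}(\varepsilon - \log(1+\varepsilon))$, so that $\Pr[Z \ge (1+\varepsilon)m] \le \exp(-\frac{m}{2}(\varepsilon - \log(1+\varepsilon)))$. Symmetrically, using $\E[e^{-tZ}] = (1+2t)^{-m/2}$ for $t>0$ and optimizing at $t^* = \varepsilon/(2(1-\varepsilon))$ gives $\Pr[Z \le (1-\varepsilon)m] \le \exp(-\frac{m}{2}(-\varepsilon - \log(1-\varepsilon)))$.

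It then remains to lower bound the two rate functions by $\varepsilon^2/3$. For the lower tail this is immediate, since $-\varepsilon - \log(1-\varepsilon) = \sum_{k\ge 2}\varepsilon^k/k \ge \varepsilon^2/2 \ge \varepsilon^2/3$. For the upper tail I would use the inequality $\log(1+\varepsilon) \le \varepsilon - \varepsilon^2/2 + \varepsilon^3/3$, which gives $\varepsilon - \log(1+\varepsilon) \ge \frac{\varepsilon^2}{2}(1 - \frac{2\varepsilon}{3}) \ge \frac{\varepsilon^2}{3}$, where the last step uses the hypothesis $\varepsilon < 1/2$. Consequently each tail is at most $\exp(-m\varepsilon^2/6)$, the union bound yields $\Pr[|Z-m| \ge \varepsilon m] \le 2\exp(-m\varepsilon^2/6)$, and this is at most $\delta$ precisely when $m \ge 6\log(2/\delta)/\varepsilon^2$, which is the claimed bound with $C_V = 6$.

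All of these computations are routine, so I do not expect a genuine obstacle; the only place that needs care is tracking the constant through the $s$-optimization and the Taylor bounds so that it comes out as $6$ rather than a looser value. The hypothesis $\varepsilon < 1/2$ is invoked exactly once, in the final step, to absorb the cubic correction term in the upper-tail rate function into the clean quadratic lower bound.
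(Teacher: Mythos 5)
Your proposal is correct and follows essentially the same route as the paper: both reduce to the standard Chernoff tail bounds for $\chi^2_m$ (which the paper cites as folklore in its Lemma~\ref{lem:chi-square} and you derive explicitly), and both lower-bound the rate function $\varepsilon-\log(1+\varepsilon)$ by $\varepsilon^2/3$ using $\varepsilon<1/2$ to arrive at $C_V=6$. The only cosmetic difference is that the paper observes the lower tail is dominated by the upper tail and handles only the latter, whereas you bound both tails separately before the union bound.
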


The proof of the Lemma \ref{lem: prediction-noneff} is presented in Section \ref{proof: prediction-noneff} and the proof for Lemma \ref{lem: variance-noneff} is in Section \ref{proof:vars}. We are now ready to prove the first assertion of Theorem \ref{thm:inefficient_upper}, which says that Phase 1 of Algorithm~\ref{alg: inefficient} correctly identifies the topology with high probability. Aided with these two lemma, we return to the proof of Theorem \ref{thm:inefficient_upper}

\begin{proof}
Let's define $N=n(\binom{n}{0}+\binom{n}{1}+\dots+\binom{n}{d})$, which is an upper bound of the number of linear regressions performed in Phase 1 of Algorithm \ref{alg: inefficient}. We would like to argue that all of these regressions succeed with high probability, so we employ a union bound. This involves bounding $\log N$, which we now do. Since we have $n>>d$, we have $N<2n\binom{n}{d}$ and thus by Stirling Formula, 
$$\log N \le \log 2+(d+1)\log n-d\log (d/e) = 2+(d+1)\log (n/d)+d+\log d<5d\log(n/d).$$

Now let's assume inductively that Algorithm~\ref{alg: inefficient} has identified the first $t$ nodes in the ordering correclty and let $T$ be this subset of nodes.
The Algorithm then examines every $i \in [n] \setminus T$ to determine if it is the next in the ordering. 
There are two possible cases. 

\paragraph{$T$ contains all the parents of $i$.} If $T$ contains all the parents of $i$, then $i$ can be used as the $t+1$-th node in the ordering. We prove that Algorithm~\ref{alg: inefficient} will indeed select it. By definition, there exists a $J \subseteq T$ with $|J|\le d$ that contains all the parents. We want to upper bound the MSE of regressing $X_i$ on $X_J$.  To apply \ref{lem: variance-noneff}, we notice that  $X_i^{(j)}-X_J^{(j)}\beta^*_J$ are i.i.d. $\cN(0,1)$ random variables.  

Suppose it holds that 
\begin{equation}\label{eq:ve_be_bound}
VE\le 1+b_{\min}^2/8\quad,\quad BE\le b_{\min}^4/64.  
\end{equation}
 Then, wherever $b_{\min}<2$ (which we can wlog assume ) the empirical MSE can be bounded as
\begin{equation}\label{eq:MSE_bound}
MSE\le VE+BE+2\sqrt{VE\cdot BE}\le 1+b_{\min}^2/8+b_{\min}^4/64+2\sqrt{1+b_{\min}^2/8}\cdot b_{\min}^2/8<1+b_{\min}^2/2.    
\end{equation}
Thus, the condition in the If statement of line 9 of Algorithm~\ref{alg: inefficient} would be satisfied, which means $i$ would be selected as the next node.
We now establish that \eqref{eq:ve_be_bound} holds with high probability. 
 We first notice that $\Var(X_i|X_J) = \Var(\epsilon_i) =  1$. Thus, we can apply Lemma \ref{lem: prediction-noneff} by plugging in $\delta\gets \delta/(2N)$, $\varepsilon = b_{\min}^4/64$. We also notice that $X_i^{(j)}-X_J^{(j)}\beta^*_J$ are i.i.d. $\cN(0,1)$ random variables by definition of the model. Thus, we can apply Lemma \ref{lem: variance-noneff} by plugging in $\delta\gets \delta/(2N)$, $\varepsilon = b_{\min}^2/8$. , By a union bound, these two lemmas give that with probability at least $1-\delta/N$ , \eqref{eq:ve_be_bound} holds for all regressions in Phase 1, as long as the number of samples satisfies 
 $$
 m\ge \max\lp(C_B(k+\log(2N/\delta))/(b_{\min}^4/64), C_V\log(4N/\delta)/(b_{\min}^2/8)^2\rp).
 $$
 Therefore, if we choose $C=5120\ge\max(1280C_B, 640C_V)$, $m>C(d\log(n/d)+\log(1/\delta))/b_{\min}^4$ will imply the previous bound on $m$.

\paragraph{$T$ does not contain all parents of $i$} We want to establish that $i$ will not be selected as the next node. Let $J\subseteq T$ be any subset of $T$ that is considered by the algorithm for $i$. By Property~\ref{prop:sort} we have 
$$
\Var(X_i|X_T)\ge 1+b_{\min}^2\,,
$$

Thus, the law of total variance yields 
$$
\Var(X_i|X_J)=\mathbb{E}_{X_{J\backslash T}}[\Var(X_i|X_T)]+\mathbb{E}[\Var(X_i|X_{T\backslash J})|X_J]\ge \mathbb{E}_{X_{J\backslash T}}[\Var(X_i|X_T)] \ge 1+b_{\min}^2.
$$
Letting $V^2 = \Var(X_i|X_J)$, our goal will be to lower bound the empirical $MSE$. Suppose the following holds
\begin{equation}\label{ve_be_lower}
    VE\ge (1-b_{\min}^2/8)V^2 \quad,\quad BE\le (b_{\min}^4/64)V^2\,,
\end{equation} 
then this would imply (the third inequality is due to the monotonicity of $VE$.)
$$
\frac{MSE}{V^2}\ge VE+BE-2\sqrt{VE\cdot BE} > VE-2\sqrt{VE\cdot BE}\ge 1-b_{\min}^2/8-2\sqrt{1-b_{\min}^2/8}\cdot b_{\min}^2/8>1-\frac{3}{8}b_{\min}^2.
$$
Thus, since $V^2>1+b_{\min}^2$, this would imply for $b_{\min}>\frac{1}{\sqrt{3}}$ (again, we can w.l.o.g. assume this), 
$$
MSE\ge (1-\frac{3}{8}b_{\min}^2)V^2> 1+b_{\min}^2/2.
$$
That would mean that the condition in the If statement of line 9 of Algorithm~\ref{alg: inefficient} is not satisfied, which means $i$ will not be selected as the next node.
Establishing that \eqref{ve_be_lower} holds with probability at least $1-\delta$ then involves a similar application of Lemmas\ref{lem: prediction-noneff} and \ref{lem: variance-noneff} to the one we saw in the previous case, which we omit. 

\end{proof}

\subsection{Finding the parents}

The next phase focuses on finding the parents of each node $i$, assuming we know the subset $T$ of nodes that come before it in the topological ordering.  
Here, we can make a connection between the linear regression problem for the directed graphs and the one for undirected ones.
In particular, we focus on the marginal distribution of $(X_i,X_T)$, which is itself coming from a Bayesnet, which is obtained from the original Bayesnet by keeping all nodes up to $i$ in the ordering. 

We start by proving some preliminary properties of the inverse covariance of a Bayesnet model in the following lemma: 
\begin{lemma}\label{lem:theta}
     Let $(X_1,X_2,\dots,X_n)$ be a multivariate Gaussian, with covariance matrix $\Sigma$ and inverse covariance matrix $\Theta=\Sigma^{-1}$, distributed according to model~\ref{eqn: model}. Then we have: for $i\ne j$, $$\Theta_{ij}=b_{i\to j}+b_{j\to i}+\sum_{l}b_{i\to l}b_{j\to l},$$ and for any $i$, $$\Theta_{ii}=1+\sum_{j}b_{i\to j}^2.$$
\end{lemma}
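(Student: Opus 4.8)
The plan is to reduce everything to the closed form for the precision matrix already recorded in the preliminaries. Since $\sigma = 1$, we have $\Theta = (I-B)(I-B)^\top$, where $B$ is the coefficient matrix with $B_{ik} = b_{i\to k}$ (nonzero only when $i \to k$, and in particular $B_{ii}=0$ since $G$ has no self-loops). This identity is itself immediate from the structural relation $(I - B^\top)X = \varepsilon$ together with $\E[\varepsilon\varepsilon^\top] = I$, so no probabilistic work remains; the lemma is a pure matrix computation, and I would simply expand the $(i,j)$ entry of $(I-B)(I-B)^\top$.

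Writing $M := I - B$, so that $M_{ik} = \mathbbm{1}[i=k] - b_{i\to k}$, I would compute $\Theta_{ij} = (MM^\top)_{ij} = \sum_k M_{ik}M_{jk} = \sum_k (\mathbbm{1}[i=k] - b_{i\to k})(\mathbbm{1}[j=k] - b_{j\to k})$ and expand the product into four sums. The double-indicator sum contributes $\mathbbm{1}[i=j]$; the two mixed sums $\sum_k \mathbbm{1}[i=k]\,b_{j\to k}$ and $\sum_k b_{i\to k}\,\mathbbm{1}[j=k]$ collapse under the Kronecker delta to $b_{j\to i}$ and $b_{i\to j}$ respectively; and the remaining sum is exactly the common-children term $\sum_l b_{i\to l}b_{j\to l}$. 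Collecting these yields the single identity $\Theta_{ij} = \mathbbm{1}[i=j] - b_{i\to j} - b_{j\to i} + \sum_l b_{i\to l}b_{j\to l}$, valid for all pairs $(i,j)$.

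Both claimed formulas then follow by specialization. For $i \ne j$ the indicator vanishes, leaving the off-diagonal expression in the single-edge coefficients $b_{i\to j}, b_{j\to i}$ together with the common-children sum; for $i = j$ the two linear terms are $b_{i\to i} = 0$ (no self-loops) and the indicator contributes $1$, giving $\Theta_{ii} = 1 + \sum_l b_{i\to l}^2$, which is the stated diagonal formula.

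There is no genuine obstacle here: the only content is invoking $\Theta = (I-B)(I-B)^\top$ and tracking which index each Kronecker delta collapses onto. The one point requiring care is the sign/orientation bookkeeping of the linear terms: they inherit the minus sign from the $-B$ in $I-B$ and depend on fixing the convention $B_{ik} = b_{i\to k}$ consistently throughout, whereas the quadratic term, being a product of two such entries indexed over common children $l$ (those with both $i \to l$ and $j \to l$), is insensitive to this choice. Verifying the identity on the two-node chain $1 \to 2$ (where a direct inversion gives $\Theta_{12} = -b_{1\to 2}$ and $\Theta_{11} = 1 + b_{1\to 2}^2$) is a quick consistency check I would run before writing up.
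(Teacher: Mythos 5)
Your proof is correct, and it takes a genuinely different (and cleaner) route than the paper. The paper's proof works on the density side: it factorizes the likelihood along the topological order to get $X^\top\Theta X=\sum_{i}\bigl(X_i-\sum_{j\to i}b_{j\to i}X_j\bigr)^2$ and then reads off $\Theta_{ii}$ and $\Theta_{ij}$ as the coefficients of $X_i^2$ and $2X_iX_j$ in that quadratic form. You instead invoke the already-recorded identity $\Theta=(I-B)(I-B)^\top$ and expand the $(i,j)$ entry directly; this avoids the density/Bayes-rule argument entirely and reduces the lemma to bookkeeping with Kronecker deltas. The two computations are of course the same calculation in disguise (the quadratic form $X^\top(I-B)(I-B)^\top X=\|(I-B)^\top X\|^2$ is exactly the paper's sum of squares), but your version is shorter and less error-prone. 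Indeed, your care with signs surfaces a discrepancy worth flagging: your (correct) expansion gives $\Theta_{ij}=-b_{i\to j}-b_{j\to i}+\sum_l b_{i\to l}b_{j\to l}$ for $i\ne j$, with \emph{minus} signs on the linear terms — consistent with your two-node sanity check $\Theta_{12}=-b_{1\to 2}$ — whereas the lemma as stated (and the paper's own proof, which drops the minus from the cross term $-2X_j b_{i\to j}X_i$) writes $+b_{i\to j}+b_{j\to i}$. This is a sign typo in the paper rather than an error in your argument, and it is harmless downstream since the lemma is only ever used through $\Theta_{ii}$ and $|\Theta_{ij}|$ (e.g.\ in Corollary~\ref{cor:vuffray_dag} and Lemma~\ref{lem:tau_kappa}); still, in your write-up you should state the off-diagonal formula with the minus signs rather than asserting it matches the statement verbatim.
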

\begin{proof}
    We can show it by calculating the likelihood of $X=(X_1,X_2,\dots,X_n)$: it is $\exp(-\frac{1}{2}X^\top\Theta X)$ because of the joint Gaussianity. Let $\sigma(1),\sigma(2),\dots,\sigma(n)$ be the topological order. By the Bayes rule, the likelihood of $X=(X_1,X_2,\dots,X_n)$ is also 
    \begin{align*}
        p(X_1,X_2,\dots,X_n)&=\prod_{j=1}^np(X_{\sigma(i)}|X_{\sigma(1),\dots,X_{\sigma(i-1)}})\\
        &=\prod_{j=1}^n\exp\left(-\frac{1}{2}(X_{\sigma(i)}-\sum_{j=1}^{i-1}b_{\sigma(i-1)\to\sigma(i)}X_{\sigma(i-1)})^2\right)\\
        &=\exp\left(-\frac{1}{2}\sum_{i=1}^n(X_{\sigma(i)}-\sum_{j=1}^{i-1}b_{\sigma(i-1)\to\sigma(i)}X_{\sigma(i-1)})^2\right)\\
        &=\exp\left(-\frac{1}{2}\sum_{i=1}^n(X_i-\sum_{j\to i}b_{j\to i}X_j)^2\right)
    \end{align*}.

So, we have $$X^\top\Theta X=\sum_{i=1}^n(X_i-\sum_{j\to i}b_{j\to i}X_j)^2.$$

Therefore, $\Theta_{ii}$ is the coefficient of $X_i^2$, in the above sum.  We get a contribution of $1$ from the term $(X_i-\sum_{j\to i}b_{j\to i}X_j)^2$, and a contribution of the sum of $b_{i\to j}^2$ from the sum of $(X_j-\sum_{k\to j}b_{k\to j}X_j)^2$. i.e. $\Theta_{ii}=1+\sum_{j}b_{i\to j}^2.$ 

Similarly, $\Theta_{ij}$ is the coefficient of $2X_iX_j$. We have a contribution of $b_{i\to j}$ if $i$ is a parent of $j$, coming from the sum of $(X_j-\sum_{k\to j}b_{k\to j}X_j)^2$, or $b_{j\to i}$ if $j$ is a parent of $i$, coming from the sum of $(X_i-\sum_{k\to i}b_{k\to i}X_j)^2$. Also, for each common child $l$ of $i,j$, we get a contribution $b_{i\to l}b_{j\to l}$ in the sum of $(X_l-\sum_{k\to j}b_{k\to l}X_l)^2$. Overall, $\Theta_{ij}=b_{i\to j}+b_{j\to i}+\sum_{l}b_{i\to l}b_{j\to l}$.
\end{proof}

Now we combine these observations with a lemma in \cite{misra2020information}, which is stated for arbitrary undirected Gaussian Graphical Models. It quantifies the accuracy of estimating the coefficients in OLS for a node, if we regress it with a small subset of nodes that contains its Markov Blanket. We will apply it to the marginal distribution of $(X_i,X_T)$. 

\begin{lemma}
    \textbf{(For general $L_0$ constrained sparse linear regression)}\label{lem:vuffray} (Proposition 2 in \cite{misra2020information}) Suppose we have a multivariate $n$-dimensional zero-mean Gaussian model with inverse covariance matrix $\Theta$, and $m$ i.i.d. samples from this model, where
    \[m\ge2d+\frac{8}{\varepsilon^2}d\log(n)+\frac{4}{\varepsilon^2}\log(\frac{2d}{\delta}),\]
    For a node $i$, let $\pi(i)$ be the neighbors of $i$ in the undirected graphical model induced by $\Theta$.  
    Suppose $A \subseteq [n] \setminus \{i\}$ with $|A| \leq 2d$ and $\pi(i) \subseteq A$. 
    Then the OLS coefficients $\hat{\beta}$ when we regress $X_i$ on $X_A$ using $m$ samples satisfy  with probability at least $1-\delta$ the following bound : $\forall j\in A$, 
    \[\left|\hat{\beta_{ij}}-\frac{\Theta_{ij}}{\Theta_{ii}}\right|\le\varepsilon\sqrt{1+\frac{\Theta_{jj}}{\Theta_{ii}}}\]
\end{lemma}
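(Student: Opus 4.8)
The plan is to observe that this is exactly Proposition~2 of \cite{misra2020information}, so one may simply invoke it; but for completeness I sketch the two-step argument I would follow to prove it from scratch. The two steps are: (i) identify the population regression coefficients and the residual variance in terms of $\Theta$, using the Markov blanket property; and (ii) control the deviation of the empirical OLS coefficients from the population ones with the stated number of samples.

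For step (i), I would use the Markov blanket property of Gaussian graphical models: the conditional law of $X_i$ given $X_{-i}$ has mean $-\Theta_{ii}^{-1}\sum_{k \neq i}\Theta_{ik}X_k$ and variance $\Theta_{ii}^{-1}$, and since $\Theta_{ik}=0$ for $k \notin \pi(i)$, this conditional law depends only on $X_{\pi(i)}$. Hence for any $A$ with $\pi(i) \subseteq A$ and $i \notin A$, conditioning on the superset $X_A$ yields the same conditional distribution, so the population coefficients $\beta^* = \Sigma_{AA}^{-1}\Sigma_{Ai}$ satisfy $\beta^*_j = \pm\,\Theta_{ij}/\Theta_{ii}$ (matching the target up to the sign convention of the reference), with $\beta^*_j = 0$ for $j \in A \setminus \pi(i)$, consistent with $\Theta_{ij}=0$. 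Writing $\eta := X_i - (\beta^*)^\top X_A$, Gaussianity gives $\eta \perp X_A$ and $\Var(\eta) = \Var(X_i \mid X_A) = \Theta_{ii}^{-1}$.

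For step (ii), I would decompose $\hat\beta - \beta^* = \hat\Sigma_{AA}^{-1} g$, where $g = \frac1m\sum_{j} X_A^{(j)}\eta^{(j)}$ and $\hat\Sigma_{AA}$ is the empirical covariance of $X_A$. Conditioning on the design $X_A^{(1)},\dots,X_A^{(m)}$, the vector $g$ is zero-mean Gaussian with covariance $\frac{\Var(\eta)}{m}\hat\Sigma_{AA}$, so each coordinate $\hat\beta_j - \beta^*_j$ is conditionally $\mathcal N\big(0,\ \tfrac{\Var(\eta)}{m}[\hat\Sigma_{AA}^{-1}]_{jj}\big)$. A Gaussian tail bound with a union bound over the $|A| \le 2d$ coordinates then gives $|\hat\beta_j - \beta^*_j| \lesssim \sqrt{\tfrac{\Var(\eta)}{m}[\hat\Sigma_{AA}^{-1}]_{jj}\log(2d/\delta)}$. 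It remains to bound $[\hat\Sigma_{AA}^{-1}]_{jj}$: using concentration of the (at most $2d$)-dimensional empirical covariance (which is where the $d\log n/\varepsilon^2$ and $\log(2d/\delta)/\varepsilon^2$ sample terms enter), one has $[\hat\Sigma_{AA}^{-1}]_{jj} \approx [\Sigma_{AA}^{-1}]_{jj} = 1/\Var(X_j \mid X_{A\setminus j})$, and by monotonicity of conditional variance under enlarging the conditioning set, $\Var(X_j \mid X_{A\setminus j}) \ge \Var(X_j \mid X_{-j}) = \Theta_{jj}^{-1}$, so $[\Sigma_{AA}^{-1}]_{jj} \le \Theta_{jj}$. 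Combining with $\Var(\eta) = \Theta_{ii}^{-1}$ produces a bound of the form $\varepsilon\sqrt{\Theta_{jj}/\Theta_{ii}}$; absorbing the concentration slack into an additive $1$ gives the stated $\varepsilon\sqrt{1 + \Theta_{jj}/\Theta_{ii}}$.

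The hard part will be step (ii), specifically controlling $[\hat\Sigma_{AA}^{-1}]_{jj}$, i.e. showing that the empirical inverse covariance of the $\le 2d$-dimensional subvector $X_A$ concentrates around its population value with only $O(d\log n/\varepsilon^2)$ samples; this is a spectral (restricted-eigenvalue) concentration statement and is the source of the precise sample-complexity threshold. Obtaining the clean variance-dependent factor $\sqrt{1 + \Theta_{jj}/\Theta_{ii}}$, rather than a loose bound, also relies on the conditional-variance monotonicity bookkeeping above; everything else—the Gaussian tail bound, the union over the $2d$ coordinates, and the residual decomposition—is routine.
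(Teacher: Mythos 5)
The paper does not prove this lemma at all---it is imported verbatim as Proposition 2 of \cite{misra2020information}---so your primary move of simply invoking that citation matches the paper exactly. Your supplementary from-scratch sketch is a sound reconstruction of the standard argument (Markov-blanket identification of the population coefficients, then conditional-Gaussian concentration of the OLS error $\hat\beta-\beta^*=\hat\Sigma_{AA}^{-1}g$), though note one imprecision: the $\frac{8}{\varepsilon^2}d\log n$ term in the sample requirement arises in \cite{misra2020information} from a union bound over all $\binom{n}{O(d)}$ candidate conditioning sets $A$ (the result is meant to hold uniformly over such sets, which is how the paper later uses it), not from concentration of a single fixed $2d\times 2d$ empirical covariance, which by itself would only cost $O(d+\log(1/\delta))$ samples.
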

We now make the connection with our DAG model more precise. 
As discussed earlier, let $i$ be some node and the set of the node before $i$ in the ordering is $T$. We consider the marginal distribution of $(X_T,X_i)$, which is still a Bayesnet. Let $\Theta$ be the inverse covariance matrix of $(X_T,X_i)$.
Since $i$ is the last node in the ordering for this smaller Bayesnet, it is easy to see that its Markov Blanket $\pi(i)$, if the joint distribution is viewed as an undirected Graphical Model induced by $\Theta$, is exactly equal to the set of parents $\pa(i)$ in the DAG. This can be inferred for example from Lemma~\ref{lem:theta} by noticing that $\Theta_{ij}$ is non-zero if and only if $b_{ji} \neq 0$ (since $i$ has no children in this marginal distribution).

This allows us to apply Lemma~\ref{lem:vuffray} to find the parents of $i$ and gives rise to the following Corollary.

\begin{corollary}\label{cor:vuffray_dag}
    Let $i$ be a node and $T$ be the subset of nodes that come before $i$ in the ordering. Suppose we use $m$ i.i.d. samples to regress $X_i$ on $X_J$, where $J \subseteq T$, $|J| \leq d$ and $\pa(i) \subseteq J$. Let $\hat{\beta}$ be the coefficients output by the regression.
    Suppose 
    $$
    m\geq 64d\log(n)\cdot \tau\cdot b_{\min}^{-2}
    $$
    Then, with probability at least $1-\delta$ we have that for all $j \in \pa(i)$, 
    \[
    |\hat{\beta_{ji}}| > b_{\min} / 2
    \]
    and for all $j \notin \pa(i)$
    \[
    |\hat{\beta_{ji}}| < b_{\min} / 4
    \]
\end{corollary}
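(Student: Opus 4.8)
The plan is to derive this Corollary directly from Lemma~\ref{lem:vuffray}, applied to the marginal distribution of $(X_T, X_i)$. Since $T$ consists of exactly the nodes preceding $i$ in the topological order, this marginal is itself a Bayesnet of the form~\eqref{eqn: model} in which $i$ is the sink (last) node, so that its Markov blanket $\pi(i)$ coincides with $\pa(i)$, as already observed before the Corollary. Setting $A = J$ in Lemma~\ref{lem:vuffray} is legitimate: we have $|J| \le d \le 2d$, and $\pi(i) = \pa(i) \subseteq J$ by hypothesis, while the ambient dimension $|T|+1 \le n$ only improves the $\log n$ term. This reduces the whole statement to two sub-tasks: (i) identifying the population targets $\Theta_{ij}/\Theta_{ii}$, and (ii) controlling the error factor $\sqrt{1 + \Theta_{jj}/\Theta_{ii}}$ in terms of $b_{\min}$ and $\tau$.

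For (i), I would invoke Lemma~\ref{lem:theta} for the inverse covariance $\Theta$ of $(X_T, X_i)$. Because $i$ has no children in this marginal, the out-sums in Lemma~\ref{lem:theta} vanish, giving $\Theta_{ii} = 1$ and $\Theta_{ij} = b_{j \to i}$ for every $j \in J$; hence the population regression coefficient $\Theta_{ij}/\Theta_{ii}$ equals exactly $b_{j \to i}$, which is $0$ when $j \notin \pa(i)$ and at least $b_{\min}$ in magnitude when $j \in \pa(i)$ by Assumption~\ref{ass:beta_lower_bound}. For (ii), Lemma~\ref{lem:theta} gives $\Theta_{jj} = 1 + \sum_{l : j \to l} b_{j\to l}^2$ where the sum ranges only over children of $j$ surviving in the marginal, which is at most $\tau$ by the definition~\eqref{eq:tau_def}. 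Combined with $\Theta_{ii} = 1$, the error factor obeys $\sqrt{1 + \Theta_{jj}/\Theta_{ii}} \le \sqrt{1+\tau} \le \sqrt{2\tau}$.

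Putting these together, Lemma~\ref{lem:vuffray} yields $|\hat{\beta}_{ji} - b_{j \to i}| \le \varepsilon\sqrt{2\tau}$ simultaneously for all $j \in J$ with probability at least $1-\delta$. Choosing $\varepsilon$ of order $b_{\min}/\sqrt{\tau}$ so that $\varepsilon\sqrt{2\tau} < b_{\min}/4$, the triangle inequality separates the two cases: for $j \in \pa(i)$ we get $|\hat{\beta}_{ji}| \ge b_{\min} - b_{\min}/4 > b_{\min}/2$, and for $j \notin \pa(i)$ we get $|\hat{\beta}_{ji}| < b_{\min}/4$. Substituting $\varepsilon^{-2} = \Theta(\tau/b_{\min}^2)$ into the sample-complexity requirement of Lemma~\ref{lem:vuffray} produces $m = \Omega(\tau d \log(n)/b_{\min}^2)$, matching the stated threshold up to the absolute constant. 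I expect the only genuine subtlety to lie in sub-task (i): the clean identities $\Theta_{ii}=1$ and $\Theta_{ij}=b_{j \to i}$ rely crucially on $i$ being the sink of the marginalized Bayesnet, and it is precisely the diagonal entry $\Theta_{jj}$ of a non-sink $j$ that injects the factor $\tau$ into the error bound. Everything else is a routine substitution into the black-box guarantee of \cite{misra2020information}.
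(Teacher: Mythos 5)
Your proposal is correct and follows essentially the same route as the paper: apply Lemma~\ref{lem:vuffray} to the marginal of $(X_T,X_i)$ where $i$ is the sink, use Lemma~\ref{lem:theta} to identify $\Theta_{ii}=1$, $\Theta_{ij}=b_{j\to i}$, and $\Theta_{jj}\le\tau$, and choose $\varepsilon \asymp b_{\min}/\sqrt{\tau}$ to separate the two cases. If anything, your choice of error threshold $b_{\min}/4$ (rather than the paper's $b_{\min}/2$) is the one actually needed to deliver the stated conclusion $|\hat{\beta}_{ji}|<b_{\min}/4$ for non-parents, so your constants are slightly more careful than the paper's.
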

\begin{proof}
    By the preceding observations, we can apply Lemma~\ref{lem:vuffray} to analyze the regression. If $\Theta$ is the inverse covariance matrix of the joint distribution of $(X_i, X_T)$, then by Lemma \ref{lem:theta}, since $X_i$ is the last node, we have $\Theta_{ij}$ is $b_{j\to i}$, $\Theta_{ii}=1$ and $\Theta_{jj}=1+\sum_{j\prec k  \preceq i}b_{j \to k}^2$.

Thus $|\Theta_{ij}/ \Theta_{ii}| = |b_{ji}| \geq b_{\min}$ if $j \in \pa(i)$. Also, if $j \notin \pa(i)$, then $|\Theta_{ij}/ \Theta_{ii} |= 0$. Thus, by applying Lemma~\ref{lem:vuffray}, we need to ensure $\varepsilon\sqrt{1+\sum_{j\prec k\preceq i}b_{j \to k}^2}\le b_{\min}/2$ for all $i,j$. So, it suffices to have $\varepsilon<b_{\min}/(2\sqrt{\tau})$. By union bound, we need, $\delta\gets \delta/n$. To determine the sample complexity, we plug these values in Lemma \ref{lem:vuffray} 
$$m\ge 2d+\frac{8}{(b_{\min}/2\sqrt{\tau})^2}d\log n+\frac{4}{(b_{\min}/2\sqrt{\tau})^2}\log(2dn).$$

    Where we take $m=64d\log(n)\cdot \tau\cdot b_{\min}^{-2}$ suffices, and this finishes the proof.
\end{proof}
Below we give the details of the proof of the second half of Theorem \ref{alg: inefficient}:

\begin{proof}
    We first prove that for all $i$ and prior nodes $T$ in the topological ordering, Algorithm~\ref{alg: inefficient} correctly identifies a subset $J$ that contains $\pa(i)$.
    The strategy is similar to \cite{misra2020information}. For every candidate $J \subseteq T$ with $|T| = d$, we iterate over all other $K \subseteq T$ with $|K| = d$ and regress $X_i$ on $X_{J\cup K}$. 

    \begin{itemize}
        \item Suppose $\pa(i) \subseteq J$. Then, for all other $K$, for each $j \in K$ we have $|\hat{\beta_{ji}}| < b_{\min}/2$ with prob $\geq 1-\delta$ by Lemma~\ref{cor:vuffray_dag}, so the condition in line 17 of Algorithm~\ref{alg: inefficient} doesn't hold and $J$ is accepted as a true superset of the parents.
        \item Suppose there is a $j \in \pa(i)$ with $j \notin J$. Then, there exists a $K$ with $j  \in K$ and $\pa(i) \subseteq J \cup K$. 
        Then, regressing with $J \cup K$ will yield $|\hat{\beta_{ji}}| > b_{\min}/2$ by Lemma~\ref{cor:vuffray_dag}, leading to the condition in line 17 to be satisfied. Thus, set $J$ is rejected. 
    \end{itemize}
After we find a superset $J$ of the parents, we can simply remove the nodes with small $\beta$ coefficients in the regression and appeal to Lemma~\ref{cor:vuffray_dag} again for correctness in an identical fashion. 
    
\end{proof}

\section{Proof of Lower Bound (Theorem \ref{thm:lower_bound})}\label{proof:lower bound}

The standard testing setting involves a family of distributions indexed by a set $\cV$. A random variable $V$ is drawn uniformly at random from $\cV$ and $X$ is generated according to the distribution corresponding to $V$. Note that $X$ could be one or multiple samples from this distribution. Then, an algorithm that takes as input $X$ and guesses $V$ is denoted by $\hat{V}$.
We use Corollary 9.4.2 from \cite{duchi2016lecture}, which is a version of Fano's Inequality for testing.

\begin{lemma}
    Assume that $V$ is uniform on $\cV$. For any Markov chain $V\to X\to \hat V$,
 \[P(V=\hat V) \geq 1- \frac{I(V;X)+\log2}{\log|V|}.\]
\end{lemma}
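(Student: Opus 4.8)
The plan is to recall the classical proof of Fano's inequality, since the statement is exactly its testing form. The key object is the binary error indicator $E = \mathbbm{1}\{\hat V \neq V\}$, and the strategy is to bound the conditional entropy $H(V \mid \hat V)$ from above in two complementary ways, and then convert this into a statement about $\Pr[V \ne \hat V]$ using the uniformity of $V$ together with the data processing inequality along the Markov chain $V \to X \to \hat V$.

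Concretely, first I would expand the joint conditional entropy $H(E, V \mid \hat V)$ via the chain rule in two orders. Writing $H(E, V \mid \hat V) = H(V \mid \hat V) + H(E \mid V, \hat V)$ and noting that $E$ is a deterministic function of the pair $(V, \hat V)$, the second term vanishes, so $H(E, V \mid \hat V) = H(V \mid \hat V)$. Expanding in the other order gives $H(E, V \mid \hat V) = H(E \mid \hat V) + H(V \mid E, \hat V)$. Here $H(E \mid \hat V) \le H(E) \le \log 2$ since $E$ is binary, and the last term I would split by conditioning on the value of $E$: on the event $E = 0$ we have $V = \hat V$, so the conditional entropy is $0$, while on the event $E = 1$ the variable $V$ is supported on at most $|\cV| - 1$ values, contributing at most $\Pr[E = 1]\,\log|\cV|$. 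Combining these gives the core estimate
\[
H(V \mid \hat V) \le \log 2 + \Pr[V \ne \hat V]\,\log|\cV|.
\]

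The final step is to lower bound the left-hand side. Since $V$ is uniform on $\cV$ we have $H(V) = \log|\cV|$, hence $H(V \mid \hat V) = \log|\cV| - I(V; \hat V)$. The data processing inequality applied to the Markov chain $V \to X \to \hat V$ yields $I(V; \hat V) \le I(V; X)$, so $H(V \mid \hat V) \ge \log|\cV| - I(V; X)$. Chaining this with the core estimate and rearranging produces the advertised bound on $\Pr[V \ne \hat V]$ (equivalently, the complementary bound on the success probability $\Pr[V = \hat V]$), which is the form needed later to certify that $I(V; X)$ being small forces any estimator to fail.

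Since this is a standard and well-understood result (cited here from \cite{duchi2016lecture}), there is no genuine obstacle; the only points requiring care are the correct direction of the data processing inequality and the bookkeeping in the two-way chain-rule expansion. The real difficulty in the lower bound lies not here but in the subsequent construction of the family $\cG$ and in controlling $I(V; X)$ for the corresponding Gaussian samples, which is where the dependence on $\tau$ and $b_{\min}$ will emerge.
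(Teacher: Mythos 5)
Your proof is correct and is the canonical Fano argument: the two-way chain-rule expansion of $H(E,V\mid \hat V)$ with the error indicator $E=\mathbbm{1}\{\hat V\neq V\}$, the bound $H(V\mid E,\hat V)\le \Pr[E=1]\log|\cV|$, and the data-processing inequality $I(V;\hat V)\le I(V;X)$ along the Markov chain. The paper itself supplies no proof of this lemma; it is imported verbatim as Corollary~9.4.2 of the cited lecture notes, so there is no alternative argument to compare against. One point worth flagging: your derivation yields
\[
\Pr[V\neq \hat V]\;\ge\; 1-\frac{I(V;X)+\log 2}{\log|\cV|},
\]
i.e.\ a lower bound on the \emph{error} probability, whereas the statement as printed bounds the \emph{success} probability $P(V=\hat V)$ from below by the same quantity --- which would be vacuous (indeed false in general) when $I(V;X)$ is small. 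The version you prove is the standard one and is exactly what the paper uses downstream (small mutual information forces the error probability above $1/2$), so the printed ``$V=\hat V$'' is evidently a typo for ``$V\neq \hat V$''; your proof establishes the intended statement.
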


In our case, the set $\cV$ corresponds to a set of graphs $\cG=\{G_1,G_2,\dots,G_k\}$ that we would like to distinguish from each other. $V$ denotes a graph chosen uniformly at random among $\cV$. Let $Q_V$ denote the distribution of one sample drawn from graph $V$ and $Q_V^{\otimes m}$ denotes the product distribution on $m$ i.i.d. samples drawn from $V$. The mapping $X\to \hat V$ is the estimator of the graph from the samples for a model. From Equation (9.4.5) in \cite{duchi2016lecture}, we have 

$$
I(V;X)\le \frac{1}{|\cV|^2}\sum_{V,V'\in \cV}KL(Q_{V}^{\otimes m}||Q_{V'}^{\otimes m})=\frac{m}{|\cV|^2} \sum_{G,G'\in \cG}KL(Q_V||Q_{V'})
$$
(This is due to the tensorization of KL for product distributions.) Therefore, if we can construct a family of $|\cG|$ graphs such that for any $G,G'\in \cG$, we have $KL(Q_G||Q_{G'})\le\alpha$, then we need at least $m\ge \frac{\log(|\cG|)/2-\log 2}{\alpha}$ samples to obtain prediction probability $\ge 1/2$. We consider the following ensembles:

\paragraph{Ensemble for $\log(n)/b_{\min}^4$.} Assume $2$ divides $n$, consider $\cG=\{G_0,G_1,G_1,G_2,\dots,G_{n/2}\}$. As Figure \ref{fig:ensemble1} shows, all graphs are defined on random variables $Y_1,Y_2,\dots,Y_{n/2}$ and $Z_1,\dots, Z_{n/2}$. $G_0$ is a direct matching of $Y_i\to Z_i$, with each edge having weight $b_{\min}$. $G_k$ is $k$th model and is equal to $G_0$, except that the edge between $Y_k$ and $Z_k$ is reversed. 
\begin{figure}[H]
    \centering
    \includegraphics[width=0.5\linewidth]{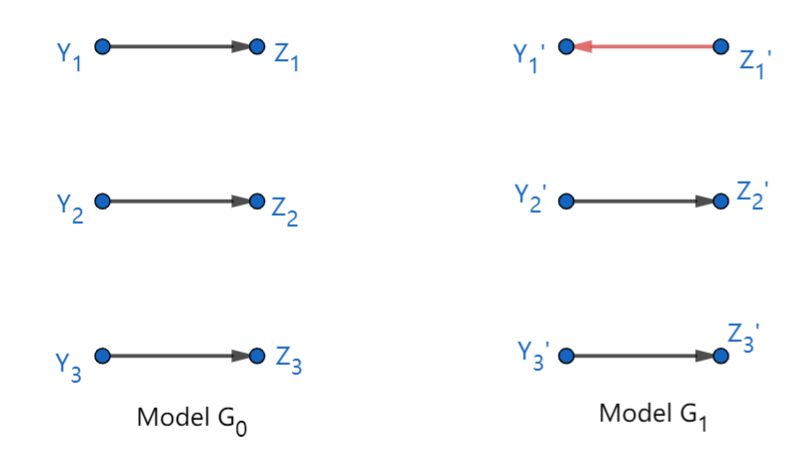}
    \caption{Schematic graph for the ensemble. Every $G_i$ for $j \geq 1$ is obtained from $G_0$ by reversing one edge in the matching of model $G_0$, e.g. $G_1$ is just reversing $Y_1\to Z_1$ in $G_0$.}
    \label{fig:ensemble1}
\end{figure}
We notice that this model is a product measure of $n/2$ disjoint pairs of variables. Therefore, any two models differ in at most two edges. 
Let $(Y,Z)$ be the joint distribution of a two variable DAG with edge $Y \to Z$ with weight $b_{\min}$. 
Then, by the preceding observations, for any two models $G_i,G_j$, their
KL divergence is no more then
$2\times KL((Y,Z)||(Z,Y))$. Next, we calculate this distance.  

First, the distribution of $(Y_1,Z_1)$ has the multivariate Gaussian distribution of covariance matrix
$$\Sigma_1=\begin{pmatrix}
    1 & b_{\min }\\ b_{\min} & 1+b_{\min}^2
\end{pmatrix}.$$
Similarly, $(Z,Y)$ follows a multivariate Gaussian distribution of covariance matrix
$$\Sigma_2=\begin{pmatrix}
    1+b_{\min}^2 & b_{\min }\\ b_{\min} & 1
\end{pmatrix}.$$

Therefore, the KL distance is
$$KL((Y,Z)||(Z,Y))=\frac{1}{2}\left(\log\frac{|\Sigma_1|}{|\Sigma_2|}-2+\Tr(\Sigma^{-1}\Sigma_2)\right)=\frac{1}{2}\left(-2+\Tr\begin{pmatrix}
    1+b_{\min}^2+b_{\min}^4 & b_{\min}^3\\ -b_{\min}^3 & 1-b_{\min}^2
\end{pmatrix}\right)=\frac{1}{2}b_{\min}^4$$

So, in this case, any two models has KL distance at most $b_{\min}^4$, and this implies that with probability $\geq 1/2$ we need at least $(\log(n/2)/2-\log 2)/b_{\min}^4=\log(n/8)/(2b_{\min}^4)$ samples to find the right model. 

\paragraph{Ensemble for $\log(n)\tau/b_{\min}^2$.} Assume $3$ divides $n$, consider $\cG=\{G_0,G_1,G_1,G_2,\dots,G_{n/3}\}$. Our graphs now consist of independent blocks on variables $X_i,Y_i,Z_i$.  As Figure \ref{fig:ensemble2} shows, in $G_0$ the topology of each triplet is $X_i\to Y_i, X_i\to Z_i, Y_i\to Z_i$. The weights are: $X_i\to Z_i$ with $b_{\min}$ weight, $X_i\to Y_i$ with $B$ weight, and $Y_i\to Z_i$ with $b_1$ weight for some $B>b_1>b_{\min}>0$. For $G_i$ with $i>1$, the topology is the same as $G_0$ but we change exactly one of the triangles $X_iY_iZ_i$: it becomes $X_i\to Z_i$ with no edge, $X_i\to Y_i$ with $B$ weight, and $Y_i$ to $Z_i$ with $b_1+\frac{Bb_{\min}}{1+B^2}$ weight. 
\begin{figure}[H]
    \centering
    \includegraphics[width=0.6\linewidth]{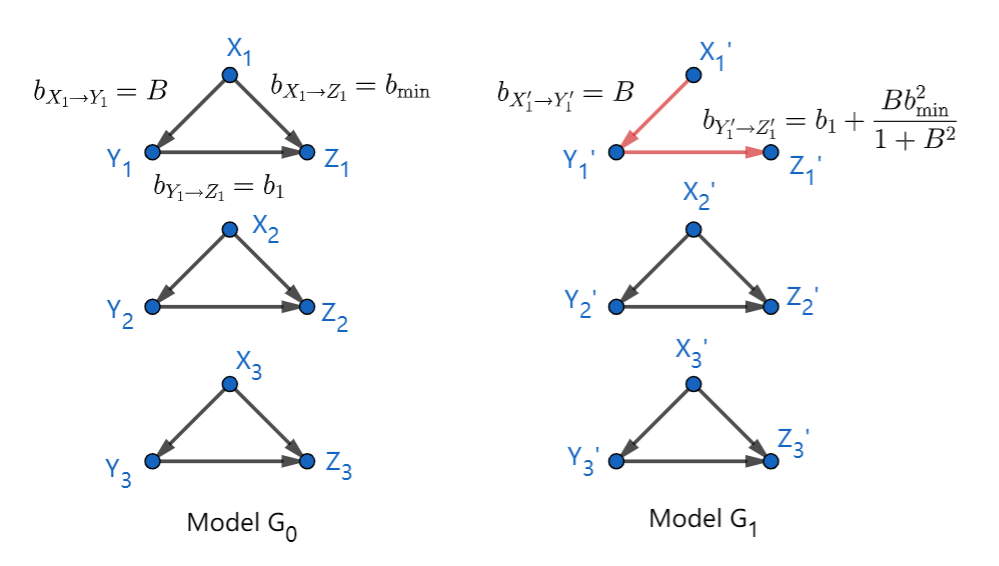}
    \caption{Schematic graph for the ensemble. Every model $G_i$ with $i \geq 1$ is obtained from $G_0$ by switching one triangle to a chain, i.e. $G_1$ is just changing the triangle $X_1Y_1Z_1$ to a chain $X_1\to Y_1\to Z_1$.}
    \label{fig:ensemble2}
\end{figure}
Let $(X,Y,Z)$ be the distribution of the DAG with topology $X\to Z$ with $b_{\min}$ weight, $X\to Y$ with $B$ weight, and $Y \to Z$ with $b_1$ weight. Let $(X',Y',Z')$ be the distribution of the DAG with topology $X'\to Y'$ with $B$ weight, and $Y'\to Z'$ with $b_1+\frac{b_{\min}}{1+B^2}$ weight. 
Similar to the observations for the previous ensemble, 
the KL distance between any two models is bounded by $2\cdot KL((X,Y,Z)||(X',Y',Z'))$. 

The distribution of $(X,Y,Z)$ has the multivariate Gaussian distribution of covariance matrix
$$\Sigma_1=\begin{pmatrix}
    1 & B & Bb_1+b_{\min}\\ B & 1+B^2 & b_1+B^2b_1+Bb_{\min}\\Bb_1+b_{\min} & b_1+B^2b_1+Bb_{\min} & 1+(Bb_1+b_{\min})^2+b_1^2
\end{pmatrix}.$$
The distribution of $(X',Y',Z')$ is a multivariate Gaussian distribution of covariance matrix
$$\Sigma_2=\begin{pmatrix}
    1 & B & Bb_1+\frac{B^2b_{\min}}{1+B^2}\\ B & 1+B^2 & b_1+B^2b_1+Bb_{\min}\\ Bb_1+\frac{B^2b_{\min}}{1+B^2} & b_1+B^2b_1+Bb_{\min} & 1+(b_1+\frac{Bb_{\min}}{1+B^2})^2(B^2+1)
\end{pmatrix}.$$
Therefore, we can calculate $|\Sigma_1|=|\Sigma_2|=1$ (because it is the inverse of a product of two upper triangle matrix with diagonal $1$: Notice that $|\Sigma_1|=|((I-B)^{-1})^\top(I-B)^{-1}|=|((I-B)^{-1})^\top|\cdot|(I-B)^{-1}|=1\cdot 1=1$) and 
$$\Sigma_1^{-1}\Sigma_2=\begin{pmatrix}
    1+\frac{b_{\min}^2}{1+B^2}&0&-b_{\min}\\
    \frac{b_{\min} b_1}{1+B^2} &1&\frac{b_{\min} B}{1+B^2}\\
    \frac{-b_{\min}}{1+B^2}&0&1
\end{pmatrix},\Sigma_2^{-1}\Sigma_1=\begin{pmatrix}
    1&0&b_{\min}\\
    -\frac{b_{\min} (b_{\min}B+b_1+B^2b_1)}{(1+B^2)^2} &1&-\frac{b_{\min} (B+Bb_{\min}^2+B^3+b_{\min}b_1+b_1b_{\min}B^2)}{(1+B^2)^2}\\
    \frac{b_{\min}}{1+B^2}&0&1+\frac{b_{\min}^2}{1+B^2}
\end{pmatrix}$$

Both of the traces are $3+\frac{b_{\min}^2}{1+B^2}$.
Therefore, 

$$KL((X,Y,Z)||(X',Y',Z'))=\frac{1}{2}\left(\log\frac{|\Sigma_1|}{|\Sigma_2|}-3+\Tr(\Sigma^{-1}\Sigma_2)\right)=\frac{1}{2}\left(-3+3+\frac{b_{\min}^2}{1+B^2}\right)=b_{\min}^2/(1+B^2)$$

So, in this case, any two models has KL distance at most $b_{\min}^2/(1+B^2)=b_{\min}^2/\tau$, and this implies that with probability $\geq 1/2$, we need at least $(\log(n/3)/2-\log 2)\tau/b_{\min}^2=\log(n/12)\tau/(2b_{\min}^2)$ samples to find the correct topology.

Thus, from both ensembles, Theorem \ref{thm:lower_bound} follows.

\section{Analysis of Efficient Algorithm (Theorem \ref{thm: lasso})}\label{proof: lasso}

We give the full version for Algorithm \ref{alg: efficient} below.

\begin{algorithm}
    \caption{Efficient algorithm for recovering the topology}
    \label{alg: efficient full}
    \begin{algorithmic}[1]
    \INPUT Samples $(X^{(1)}, X^{(2)}, \ldots,X^{(m)})$
    \OUTPUT Topology $\hat{G}$
        \STATE $T \gets \left[i: \widehat{\Var(X_i)} \le (1+b_{\min}^2/2)\right]$ 
        \WHILE{$|T| < n$}
        \STATE $M=\emptyset$
        \FOR{$i\in [n]\backslash T$}
        \STATE Choose $\lambda = O(\sqrt{\log(n/\delta)}R/\sqrt{m})$
        \STATE $\tilde{\beta} \gets$ LASSO regression of $X_i$ on $X_T$
        \STATE $J\gets \{k| |\tilde\beta_k|\ge b_{\min}/2\}$, if $|J|>d$ then go to line 4.
        \STATE Perform OLS for $X_i$ on $X_J$, get coefficient $\hat\beta$.
        \STATE $MSE\gets \frac{1}{m}\sum_j\lp(X_i^{(j)}-X_J^{(j)}\hat{\beta}\rp)^2$
        \IF{$MSE\le 1+b_{\min}^2/2$}
        \STATE $M \gets M \cup \{i\} $
        \STATE $\pa(i) \gets \{j \in J : |\hat{\beta}_j| \geq b_{\min}/2\}$
        \ENDIF
        \ENDFOR
        \STATE $T \gets T$.append($M$)
        \ENDWHILE
    \end{algorithmic}
\end{algorithm}

In line 6, we perform the LASSO algorithm to find the next vertex for the topological order as well as to find the parents. For any node $i$ and subset $T$ that appears before $i$ in the ordering, we are optimizing the Lagrangian LASSO:

\begin{equation}\label{eqn: lasso}
    \hat\beta = \argmin_{\beta\in\mathbb{R}^{|T|}}\left\{\frac{1}{2N}\sum_{i=1}^N(X_i^{(j)}-\beta^\top X_T^{(j)})^2+\lambda\lrnorm{\beta}_1\right\}
\end{equation}

For a vector $\beta$ and a subset $S$ of coordinates, we use $\beta_S$ to denote the subvector corresponding to the coordinates in $S$. 
As usual, let $\beta^*$ be the population coefficients of $X_i$ with respect to $X_T$. The following Theorem from \cite{wainwright2019high} provides guarantees about the estimation error of the LASSO.

\begin{lemma}
    \label{lma: weinwright}(Combination of Theorems 7.18, 7.19 in \cite{wainwright2019high}) Consider the Lagrangian Lasso where $X_T$ is generated i.i.d. from $\cN(0,\Sigma)$. The independent noise for each sample is  $w^{(j)}=X_i^{(j)}-{\beta^*}^{\top}X_T^{(j)}$. Choose the regularization parameter $\lambda \geq \frac{2}{N}\max_{k\in T}\left|\sum_{j=1}^m w^{(j)}X_k^{(j)}\right|$. For any $\theta^{*} \in \mathbb{R}^{d}$, with probability at least $1-\frac{e^{-m/32}}{1-e^{-m/32}}$ any optimal solution $\widehat{\theta}$ satisfies the bound
\begin{equation}\label{eqn: beta}
\left\|\hat{\beta}-\beta^{*}\right\|_{2}^{2} \leq 9216 \frac{\lambda^{2}}{\bar{\kappa}^{2}}|S|+128\frac{\lambda}{\bar{\kappa}}\left\|\beta^*_{S^c}\right\|_{1}+ 12800\frac{\rho^{2}({\Sigma})}{\bar{\kappa}} \frac{\log n}{m}\left\|\beta^*_{S^c}\right\|_{1}^{2}
\end{equation}

valid for any subset $S$ with cardinality $|S| \leq \frac{1}{25600}\frac{\bar{\kappa}}{\rho^{2}({\Sigma})} \frac{m}{\log n}$. Here, $\rho^2(\Sigma)$ is the maximum diagonal entry of $\Sigma$ and $\bar\kappa$ is the smallest eigenvalue of $\Sigma$.
\end{lemma}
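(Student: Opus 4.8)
The plan is to prove this as a standard primal-analysis Lasso bound, isolating two probabilistic ingredients — the deviation of the empirical noise gradient (which the hypothesis on $\lambda$ is designed to dominate) and a restricted strong convexity (RSC) property of the Gaussian design — and treating everything else as deterministic algebra. Write $\hat\Delta = \hat\beta - \beta^*$ and $N=m$. The starting point is the \emph{basic inequality} from optimality of $\hat\beta$ for the objective in \eqref{eqn: lasso}: since $\hat\beta$ has objective no larger than $\beta^*$ and $X_i^{(j)} = (\beta^*)^\top X_T^{(j)} + w^{(j)}$,
\[
\frac{1}{2m}\,\big\|X_T\hat\Delta\big\|_2^2 \;\le\; \frac{1}{m}\,\langle w,\, X_T\hat\Delta\rangle \;+\; \lambda\big(\|\beta^*\|_1 - \|\hat\beta\|_1\big).
\]
First I would control the noise term by H\"older: $\tfrac{1}{m}|\langle w, X_T\hat\Delta\rangle| \le \|\tfrac{1}{m}X_T^\top w\|_\infty\,\|\hat\Delta\|_1 \le \tfrac{\lambda}{2}\|\hat\Delta\|_1$, where the last step invokes exactly the hypothesis $\lambda \ge \tfrac{2}{m}\max_k|\sum_j w^{(j)}X_k^{(j)}|$. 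This is the only place the assumption on $\lambda$ is used.

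Next I would convert the $\ell_1$ differences into a cone constraint. Fixing the oracle subset $S$, decompose $\hat\Delta = \hat\Delta_S + \hat\Delta_{S^c}$ and apply the triangle inequality to get $\|\beta^*\|_1 - \|\hat\beta\|_1 \le \|\hat\Delta_S\|_1 - \|\hat\Delta_{S^c}\|_1 + 2\|\beta^*_{S^c}\|_1$. Combined with the previous display and nonnegativity of $\|X_T\hat\Delta\|_2^2$, this yields the \emph{approximate cone condition}
\[
\|\hat\Delta_{S^c}\|_1 \;\le\; 3\,\|\hat\Delta_S\|_1 + 4\,\|\beta^*_{S^c}\|_1,
\]
so that $\|\hat\Delta\|_1 \le 4\sqrt{|S|}\,\|\hat\Delta\|_2 + 4\|\beta^*_{S^c}\|_1$. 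In parallel, carrying the cone bookkeeping into the basic inequality gives the clean upper estimate $\tfrac{1}{2m}\|X_T\hat\Delta\|_2^2 \le \tfrac{3\lambda}{2}\sqrt{|S|}\,\|\hat\Delta\|_2 + 2\lambda\|\beta^*_{S^c}\|_1$.

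The crux is then lower bounding the left side. Here I would invoke RSC for the Gaussian ensemble: with probability at least $1-\frac{e^{-m/32}}{1-e^{-m/32}}$, simultaneously for all $\Delta$,
\[
\frac{1}{m}\,\|X_T\Delta\|_2^2 \;\ge\; \frac{\bar\kappa}{2}\,\|\Delta\|_2^2 \;-\; c_0\,\rho^2(\Sigma)\,\frac{\log n}{m}\,\|\Delta\|_1^2,
\]
with $\bar\kappa=\lambda_{\min}(\Sigma)$ and $\rho^2(\Sigma)$ the largest diagonal of $\Sigma$. Substituting this into the left side, the upper estimate into the right side, and squaring the cone bound $\|\hat\Delta\|_1^2 \le 32|S|\,\|\hat\Delta\|_2^2 + 32\|\beta^*_{S^c}\|_1^2$ reduces everything to a scalar quadratic inequality in $t=\|\hat\Delta\|_2$. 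The cardinality restriction $|S|\le \tfrac{1}{25600}\tfrac{\bar\kappa}{\rho^2(\Sigma)}\tfrac{m}{\log n}$ is precisely what lets the tolerance term $c_0\rho^2(\Sigma)\tfrac{\log n}{m}\cdot 32|S|\,\|\hat\Delta\|_2^2$ be absorbed into $\tfrac{\bar\kappa}{2}\|\hat\Delta\|_2^2$ on the left; solving the resulting quadratic $\tfrac{\bar\kappa}{4}t^2 \le \tfrac{3\lambda}{2}\sqrt{|S|}\,t + 2\lambda\|\beta^*_{S^c}\|_1 + c_1\rho^2(\Sigma)\tfrac{\log n}{m}\|\beta^*_{S^c}\|_1^2$ yields the three terms of \eqref{eqn: beta}, with the stated (crude) numerical constants $9216,128,12800$ coming out of this balancing.

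The main obstacle is establishing the RSC display with the stated probability and the correct dependence on $\bar\kappa$, $\rho^2(\Sigma)$, $\log n$, and $m$; this is where the entire randomness of the design enters and is the content of Theorem~7.16 of \cite{wainwright2019high}. The argument there lower bounds $\min_{\Delta}\|X_T\Delta\|_2/\sqrt m$ over $\ell_1$-balls via a Gaussian comparison inequality (Gordon / Sudakov--Fernique), then uses a peeling argument over the ratio $\|\Delta\|_1/\|\Delta\|_2$ to upgrade to the uniform-in-$\Delta$ statement with the additive $\ell_1^2$ slack; the $e^{-m/32}$ factor is the Gaussian concentration of a Lipschitz functional of the design. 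Since the claimed Lemma is a verbatim specialization of Theorems~7.16, 7.18 and 7.19 of \cite{wainwright2019high} to $p=|T|\le n$, I would cite that development for this step rather than reprove it, noting only that $\bar\kappa>0$ and $\rho^2(\Sigma)<\infty$ hold for any fixed DAG of the form \eqref{eqn: model}, so that no condition-number control is required to invoke it.
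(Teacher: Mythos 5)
The paper does not prove this lemma at all: it is imported verbatim as a combination of Theorems 7.18 and 7.19 of \cite{wainwright2019high}, so there is no in-paper argument to compare against, and your reconstruction is exactly the argument of that source. Your chain — basic inequality, H\"older with the stated choice of $\lambda$, the cone condition $\|\hat\Delta_{S^c}\|_1 \le 3\|\hat\Delta_S\|_1 + 4\|\beta^*_{S^c}\|_1$, the Gaussian restricted-strong-convexity bound (Theorem 7.16 there) carrying the probability $1-\frac{e^{-m/32}}{1-e^{-m/32}}$, absorption of the $\ell_1^2$ slack via the cardinality restriction on $|S|$, and the final scalar quadratic — is sound and reproduces the three terms of \eqref{eqn: beta} up to the (deliberately crude) numerical constants, and deferring the RSC ingredient to the citation is precisely what the paper itself does.
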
 

There is a twist in the algorithm after computing the LASSO. First of all, if the output vector has more than $d$ ``large'' coefficients, this clearly means that $T$ does not contain the full neighborhood and thus $i$ is not the next node in the ordering. If it has at most $d$ ``large'' coefficients, instead of just calculating the MSE using the output of the LASSO, we run an additional OLS to verify whether $i$ is the next node in the ordering. Then, we will use the property \ref{prop:sort} to judge whether we have the correct parents. Specifically, if $i$ is the next node after $T$, then LASSO will find the correct parent set $J$ (\textbf{not} a superset) and the expected MSE after the OLS is $1$. If $i$ is not the next node after $T$, then for any $J\subseteq T$, the expected MSE after the OLS is $1+b_{\min}^2$. Since the sample complexity for Theorem \ref{thm:inefficient_upper} is upper bounded by that of \ref{thm: lasso}, this verifying process is going to be successful with high probability: we can see this by repeating the proof of Theorem \ref{thm:inefficient_upper}. The reason we use OLS instead of Lasso is that it is harder to argue about the MSE of Lasso when not all parents are involved.

Therefore, the only task is to guarantee that LASSO will determine the true parents for any $i$ that is next in the topological order of $T$. In order to have that, we need the $\cL_2$ norm of the distance of $\tilde\beta$ and $\beta^*$ to be smaller than $b_{\min}/2$. Applying Lemma \ref{lma: weinwright} above for LASSO in our algorithm, we need to select $\lambda$ and $m$ appropriately for this to hold. Therefore, it suffices to choose $\lambda,m$ so that the following inequalities hold simultaneously:
\begin{align}
     \frac{1}{25600}\frac{\bar{\kappa}}{\rho^{2}({\Sigma})} \frac{m}{\log n}  \ge & |S|\ge d\label{eqn:S}\\
     9216\frac{\lambda^2}{\bar\kappa^2}|S| \le & \frac{b_{\min}^2}{16}\label{eqn:W1}\\
     128\frac{\lambda}{\bar\kappa}W_1 \le & \frac{b_{\min}^2}{16}\label{eqn:W2} \\
     12800\frac{R^2\log n}{\bar\kappa N}W_1^2 \le & \frac{b_{\min}^2}{16}\label{eqn:W3}
\end{align}
 Equation (\ref{eqn:S}) requires $|S|$ to be large enough, and Equations (\ref{eqn:W1}), (\ref{eqn:W2}) and (\ref{eqn:W3}) require $\lrnorm{\tilde\beta-\beta^*}^2$ smaller than $b_{\min}^2/4$, which achieves the guarantee. Also, here we introduce $W_1$ to be the maximum $\cL_1$ norm of the coefficient vector corresponding to the parents of each node: that is, $$W_1=\max_{i}\sum_{j\to i}|b_{j\to i}|.$$ 

Now, we bound the variables one by one. By our assumption, we have $\rho^2(\Sigma)<R^2$, $W_1\le d\sqrt{\tau}$ (because $b_{i\to j}\le \sqrt{\tau}$) and by Cauchy-Schwartz inequality, $\bar\kappa\ge\frac{1}{(d+1)\cdot \tau}\ge\frac{1}{2d\cdot \tau}$. We apply the following lemma for all $X_T$ since the sub-Bayesnet has the same model as the whole Bayesnet: 

\begin{lemma}\label{lem:kappa}
    Assume we have the Bayesnet as Equation (\ref{eqn: model}). Let $\bar{\kappa}$ is the smallest eigenvalue of the covariance matrix of $X_1,X_2,\dots,X_n$. Then, we have $\bar\kappa\ge\frac{1}{(d+1)\tau}$.
\end{lemma}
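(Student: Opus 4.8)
The plan is to reduce the claimed lower bound on the smallest eigenvalue $\bar\kappa = \lambda_{\min}(\Sigma)$ to an upper bound on the largest eigenvalue of the precision matrix $\Theta = \Sigma^{-1}$, and then to bound the latter directly through its quadratic form. Recall from the preliminaries that $\Sigma = ((I-B)^{-1})^\top (I-B)^{-1}$, so $\Theta = (I-B)(I-B)^\top$, and since the eigenvalues of $\Theta$ are the reciprocals of those of $\Sigma$, we have $\bar\kappa = 1/\lambda_{\max}(\Theta)$. It therefore suffices to show $\lambda_{\max}(\Theta) \leq (d+1)\tau$.

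To do this, I would write $\lambda_{\max}(\Theta) = \max_{\|v\|=1} v^\top (I-B)(I-B)^\top v = \max_{\|v\|=1} \|(I-B)^\top v\|^2$. Setting $w = (I-B)^\top v$ and expanding coordinatewise, using that $B_{ji} = b_{j\to i}$ is nonzero only when $j \in \pa(i)$, gives $w_i = v_i - \sum_{j \in \pa(i)} b_{j\to i} v_j$, a sum of at most $d+1$ terms. Applying the power-mean inequality $\lp(\sum_{k=1}^{K} a_k\rp)^2 \leq K \sum_k a_k^2$ with $K \leq d+1$ yields
\[
w_i^2 \leq (d+1)\LR{v_i^2 + \sum_{j \in \pa(i)} b_{j\to i}^2 v_j^2}.
\]

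The crucial step is then to sum over $i$ and reindex the resulting double sum by grouping the contributions according to the \emph{source} $j$ of each edge rather than its \emph{target} $i$: $\sum_i \sum_{j\in\pa(i)} b_{j\to i}^2 v_j^2 = \sum_j v_j^2 \sum_{i:\, j\to i} b_{j\to i}^2 \leq (\tau - 1)\|v\|^2$, where the final inequality is exactly the definition $\tau = 1 + \max_j \sum_{l:\, j\to l} b_{j\to l}^2$ from \eqref{eq:tau_def}. Combining this with the display above gives $\|w\|^2 \leq (d+1)\lp(\|v\|^2 + (\tau-1)\|v\|^2\rp) = (d+1)\tau\,\|v\|^2$, hence $\lambda_{\max}(\Theta) \leq (d+1)\tau$ and the claim follows. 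The only place that requires genuine care is this reindexing: because $\tau$ is defined through the \emph{out}-edges of a node, the factor $\tau$ enters precisely when the inner sum over parents of $i$ is rewritten as a sum over children of $j$, while the $(d+1)$ factor is simply the cost of the power-mean inequality applied to a node together with its at most $d$ parents. Everything else is routine linear algebra.
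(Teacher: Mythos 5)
Your proof is correct and follows essentially the same route as the paper: both reduce $\lambda_{\min}(\Sigma)\ge \frac{1}{(d+1)\tau}$ to $\lambda_{\max}(\Theta)\le (d+1)\tau$, express the quadratic form $v^\top\Theta v$ as $\sum_i\lp(v_i-\sum_{j\in\pa(i)}b_{j\to i}v_j\rp)^2$, apply Cauchy--Schwarz over the at most $d+1$ terms per node, and regroup the double sum by the source of each edge to extract $\tau$. Your writeup is, if anything, more careful about the reindexing step than the paper's.
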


\begin{proof}
    We argue by Cauchy-Schwartz inequality. Let $\Sigma$ be the covariance matrix of $X_1,X_2,\dots,X_n$, and $\Theta=\Sigma^{-1}$. So the least eigenvalue of $\Sigma$ is the largest eigenvalue of $\Theta$. Let $x=(x_1,x_2,\dots,x_n)$ and $||x||=1$, so that we have $x^T\Theta x=\sum_{i=1}^n(\sum_{j=1}^n x_i-b_{j\to i}x_j)^2$ (by the proof of Lemma \ref{lem:theta}.) By Cauchy-Schwartz inequality, we have $\sum_{i=1}^n(\sum_{j=1}^n x_i-b_{j\to i}x_j)^2\le \sum_{i=1}^n(d+1)\sum_{j=1}^nx_j^2(1+\sum_{i=1}^nb_{i\to j}^2)\le (d+1)\tau.$ Thus, we have $\lrnorm{\Theta}_{op}\le(d+1)\tau.$ Since $\Sigma = \Theta^{-1}$, we have the least eigenvalue of $\Sigma$ is no less than $\frac{1}{(d+1)\tau}$.
\end{proof}

Also, to bound $\lambda$, we need the following Lemma, whose proof is given in Section~\ref{sec:bernstein-proof}.
\begin{lemma}\label{lem:Bernstein}
    Let $Z_i$ be a random variable of $X_i\cdot Y_i$ where $X_i, Y_i$ are independent $\cN(0,1)$. Then, there is a universal constant $C_{\lambda}=32$, such that for any $\varepsilon<1/2,\delta<1/2$  given $m\ge C_{\lambda}(\log(2/\delta))/\varepsilon^2$, we have that the following holds:
    \[\mathbb{P}\left(\left|\frac{1}{m}\sum_{i=1}^m Z_i\right|>\varepsilon\right)\le\delta\]
\end{lemma}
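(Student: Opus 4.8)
The plan is to treat this as a standard subexponential (Gaussian chaos) concentration inequality and prove it via a Chernoff bound on each tail. First observe that each $Z_i = X_iY_i$ has mean $\E[Z_i] = \E[X_i]\E[Y_i] = 0$ and that the $Z_i$ are i.i.d., so it suffices to control the moment generating function of a single $Z = XY$ and then tensorize. I would compute the MGF by iterated expectation: conditioning on $X$ and using $\E[e^{sY}] = e^{s^2/2}$ with $s = tX$ gives $\E[e^{tXY}\mid X] = e^{t^2 X^2/2}$, and then the Gaussian integral $\E[e^{aX^2}] = (1-2a)^{-1/2}$, valid for $a < 1/2$, applied with $a = t^2/2$ yields $\E[e^{tZ}] = (1-t^2)^{-1/2}$ for $|t| < 1$.

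Next I would turn this into a subexponential bound on a suitable range of $t$. Using the elementary inequality $-\tfrac12\log(1-u) \le u$ for $u \in [0,1/4]$, we obtain $\E[e^{tZ}] \le e^{t^2}$ for all $|t| \le 1/2$. By independence, the sum $S_m = \sum_{i=1}^m Z_i$ satisfies $\E[e^{t S_m}] \le e^{m t^2}$ on the same range. A Chernoff bound then gives, for $0 < t \le 1/2$, the estimate $\Pr[S_m > m\varepsilon] \le \exp(-tm\varepsilon + m t^2)$; optimizing the exponent over $t$ (the unconstrained minimizer $t = \varepsilon/2$ lies in $(0,1/2]$ precisely because $\varepsilon < 1/2$) produces $\Pr[S_m > m\varepsilon] \le \exp(-m\varepsilon^2/4)$. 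Since $Z = XY$ is symmetric (its law is invariant under $X \mapsto -X$), the lower tail obeys the identical bound, and a union bound yields $\Pr[|S_m|/m > \varepsilon] \le 2\exp(-m\varepsilon^2/4)$.

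Finally I would solve for the sample size: requiring $2\exp(-m\varepsilon^2/4) \le \delta$ is equivalent to $m \ge 4\varepsilon^{-2}\log(2/\delta)$, so any constant $C_\lambda \ge 4$ suffices, and in particular the stated $C_\lambda = 32$ works with room to spare. I do not anticipate a genuine obstacle here; the only points demanding care are keeping the free parameter $t$ inside the interval $|t| < 1$ on which the MGF is finite (this boundedness is exactly what makes the product of two Gaussians subexponential rather than subgaussian) and tracking the constants honestly. As an alternative that reuses machinery already established in the paper, one can invoke the polarization identity $XY = \tfrac12(U^2 - V^2)$, where $U = (X+Y)/\sqrt2$ and $V = (X-Y)/\sqrt2$ are independent standard Gaussians, so that $\tfrac1m S_m = \tfrac12\big(\tfrac1m\sum_i U_i^2 - \tfrac1m\sum_i V_i^2\big)$ is a difference of two independent normalized $\chi^2_m$ averages; bounding each deviation from its mean $1$ by $\varepsilon$ via Lemma~\ref{lem: variance-noneff} (applied with $\delta \gets \delta/2$) together with a union bound gives the same conclusion.
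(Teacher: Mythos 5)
Your proof is correct, but it takes a genuinely different route from the paper's. The paper verifies the moment-growth condition $\frac{1}{n}\sum_i \E|Z_i|^k \le \frac{k!}{2}K^{k-2}\sigma^2$ by computing $\E|Z|^k = \frac{2^k\Gamma(\frac{k+1}{2})^2}{\pi}$ explicitly and checking a ratio recursion, then invokes an off-the-shelf Bernstein inequality from \cite{van2013bernstein} and sets $t=\log(2/\delta)$. You instead compute the MGF in closed form, $\E[e^{tZ}]=(1-t^2)^{-1/2}$ for $|t|<1$, bound it by $e^{t^2}$ on $|t|\le 1/2$, and run a direct Chernoff argument; all the steps check out (the inequality $-\tfrac12\log(1-u)\le u$ does hold on $[0,1/4]$, the optimizer $t=\varepsilon/2$ stays in range since $\varepsilon<1/2$, and symmetry of $XY$ handles the lower tail), yielding the sharper constant $C_\lambda=4$. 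Your approach is more elementary and self-contained, exploiting the fact that the product of two Gaussians has an exactly computable MGF; the paper's moment-based route is the one that would survive if $X_i,Y_i$ were merely subgaussian rather than Gaussian, which is consistent with the paper's stated interest in subgaussian extensions elsewhere in the appendix. Your polarization alternative, $XY=\tfrac12(U^2-V^2)$ with $U,V$ independent standard Gaussians, is also valid and is arguably the most economical option since it reduces the lemma entirely to two applications of Lemma~\ref{lem: variance-noneff}, which the paper has already proved; a quick bookkeeping check shows it lands within the stated constant as well.
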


First, we calculate an $m$ to make all inequalities hold for $|S|\ge d$ that satisfy equation (\ref{eqn:S}). This requires that $m\ge 51200\cdot d^2\cdot\tau\cdot R^2\cdot\log(n)$.

Also, since we have assumed that for each $i$, the variance of $X_i$ is $\le R$, thus, for each $k$, $\sum_{j=1}^m w^{(j)}X_k^{(j)}$ has each $w^{(j)}$ is sampled independently from $\mathcal{N}(0,1)$ and $X_k^{(j)}$ is sampled independently from $\mathcal{N}(0,\Var(X_k))$, where $\Var(X_k)\le R^2$. Therefore, because we need to execute $n^2$ LASSOs, so we need to substitute $\delta\gets \delta/n^2$ in Lemma \ref{lem:Bernstein}, in order to apply a union bound. Therefore, by Lemma \ref{lem:Bernstein} and union bound, we substitute $\varepsilon\gets\lambda/(2R)$ and $\delta\gets\delta/{4n^2}$ (the reason for the extra factor of $4$ is to accommodate some failure probability for the accuracy of $r, VE, BE$ and for the failure of Lemma \ref{lma: weinwright}). Thus, it suffices to have $m\ge 128\log(8n^2/\delta)/\lambda^2$. Equivalently, this can be written as $\lambda\le\frac{\sqrt{128\log(8n^2/\delta)}R}{\sqrt{m}}$

Solving (\ref{eqn:W1}), (\ref{eqn:W2}), (\ref{eqn:W3}) and substituting the bound for $\bar\kappa,|S|,W_1$, we have
\begin{align*}
     9216\frac{{128\log(8n^2/\delta)}R^2}{{m}}(4d\tau)^2d \le & \frac{b_{\min}^2}{16}\\
     128\frac{\sqrt{128\log(8n^2/\delta)}R}{\sqrt{m}}d^2{\tau}^{1.5} \le & \frac{b_{\min}^2}{16}\\
     12800\frac{R^2\log n}{ m}d^3\tau^2 \le & \frac{b_{\min}^2}{16}
\end{align*}

Thus, we can choose $m\ge 2^{32}\cdot\log(n/\delta)\cdot d^4\cdot\tau^3\cdot R^2\cdot b_{\min}^{-4}$ to satisfy the bounds above. This finishes the proof for Theorem \ref{thm: lasso}.

\section{Additional Proofs }\label{proof:tau_kappa}
\subsection{Proof for Lemma \ref{lem:tau_kappa}}

By Lemma \ref{lem:theta}, the diagonal entry of $\Theta$ is $$
\Theta_{ii}=1+\sum_{i\to j}b_{i\to j}^2
$$
So, by choosing $x$ to be a standard basis vector, we get that there exist $x_1,x_2$ such that
$$
x_1^\top \Theta x_1 = \min_i |\Theta_{ii}| \quad ,\quad 
x_2^\top \Theta x_2 = \max_i |\Theta_{ii}|
$$
It is clear that
$$
\max_i |\Theta_{ii}| \geq \tau 
$$
by definition. 
Also, by choosing the diagonal element of the last node in the topological ordering, it is clear that
$$
\min_i |\Theta_{ii}| =1
$$
Therefore, the condition number of $\Theta$ is at least $\tau$, and the same holds for $\Sigma$.    

\subsection{Proof for Lemma \ref{lem:cond comp 1}}\label{sec:proof cond comp 1}
\paragraph{For (1)}

Let $\Sigma$ be the covariance matrix, then consider all unit vector $x$, $x^\top \Sigma x$ ranges from $1$ (choose the first vertex in the topological order) to $\max_i\Var(X_i)=R$. Also, we know that the smallest and largest eigenvalues of $\Sigma$ ($\lambda_{\min}$ and $\lambda_{\max}$) satisfy that $\lambda_{\min}=\min_{\lrnorm{x}=1}x^\top \Sigma x$ and $\lambda_{\max}=\max_{\lrnorm{x}=1}x^\top \Sigma x$. So we have $\lambda_{\min}\le 1$ and $\lambda_{\max}\ge R$ and thus the condition number of $\Sigma$ is at least $R$.

\paragraph{For (2)}

We consider a directed rooted tree with branching factor $2$ with variables $(X_1,\dots,X_n)$. Consider root $X_1\sim \mathcal{N}(0,1)$. And for any $i$, $X_i$ has two children $X_{2i+1}$ and $X_{2i+1}$ with recurrence $X_{2i}=\lambda x_i+\varepsilon_{2i}$ and $X_{2i+1}=\lambda x_i+\varepsilon_{2i+1}$ where $\varepsilon_{2i},\varepsilon_{2i+1}$ are i.i.d. $\mathcal{N}(0,1)$. Assume the tree has height $h$, and thus $n=2^{h+1}-1$. 
Choose some  $\lambda\in(\frac{1}{\sqrt{2}},1)$. For any leaf ($X_{i}$ for $i\ge 2^h$), we can write $X_i=\varepsilon_i+\lambda\varepsilon_{\lfloor i/2\rfloor}+\lambda^2\varepsilon_{\lfloor i/4\rfloor}+\dots+\lambda^h\varepsilon_{\lfloor i/2^h\rfloor}$. 
Therefore, its variance is $1+\lambda^2+\dots+\lambda^{2h}<\frac{1}{1-\lambda^2}$.

On the other hand, to lower bound the condition number, we need to find two different values of $x^\top \Sigma x$ for some $\lrnorm{x}=1$. If we choose $x$ to be the unit vector with $X_1$ coordinate to be $1$, the variance is $1$. We can also choose $x$ to be $1/\sqrt{2^h}$ for coordinates corresponding to leaves and $0$ otherwise. Then, $x^\top \Sigma x$ corresponds to the variance of  the linear combination $\frac{1}{\sqrt{2^h}}(X_{2^h}+\dots+X_{2^{h+1}-1})$. We are going to prove that the variance is 
\begin{equation}\label{eqn:var}
    \Var(X_{2^h}+\dots+X_{2^{h+1}-1})= 1+(2\lambda^2)+(2\lambda^2)^2+\dots+(2\lambda^2)^h.
\end{equation} 
If this holds, the condition number is at least $\frac{(2\lambda^2)^{h+1}-1}{(2\lambda^2-1)(1-\lambda)}\ge \frac{n^{\log_2(2\lambda^2)}-1}{(2\lambda^2-1)(1-\lambda)}$. By choosing $\lambda = 2^{(\alpha-1)/2}$ we can get the desired bound.

Now we prove (\ref{eqn:var}). Suppose $V_\ell$ is the conditional variance of $\frac{1}{\sqrt{2^\ell}}(X_{2^\ell}+\dots+X_{2^{\ell+1}-1})$ conditioned on $X_1$, for any value of $\ell$. We have $V_1$ to be the conditional variance of $\frac{1}{\sqrt{2}}(X_2+X_3)$ on $X_1$, which is $2/2=1$. We consider a recurrence. By the law of total variance

\begin{align*}
    &\Var(\frac{1}{\sqrt{2}^h}(X_{2^h}+\dots+X_{2^{h+1}-1})|X_1)\\
    =&\Var(\frac{1}{\sqrt{2}^h}(X_{2^h}+\dots+X_{2^{h+1}-1})|X_2,X_3)+\Var(\frac{1}{\sqrt{2}^h}(\mathbb{E}[X_{2^h}+\dots+X_{2^{h+1}-1}|X_2,X_3]|X_1)
\end{align*}
The first part we can write as
\begin{align*}
    \frac{1}{2}\Var(\frac{1}{\sqrt{2}^{h-1}}(X_{2^h}+\dots+X_{2^{h}+2^{h-1}-1}+X_{2^{h}+2^{h-1}}+\dots,X_{2^{h+1}-1})|X_2,X_3)
\end{align*}
Notice that $X_{2^h},\dots,X_{2^{h}+2^{h-1}-1}$ are the leaves of the subtree rooted at $X_2$, and $X_{2^{h}+2^{h-1}},\dots,X_{2^{h+1}-1}$ are the leaves of the subtree rooted at $X_3$. Both subtrees with root $X_2$ and with root $X_3$ have the same topology as a tree of height $h-1$. So this term can be written as
\begin{align*}
    &\frac{1}{2}\Var(\frac{1}{\sqrt{2}^{h-1}}(X_{2^h}+\dots+X_{2^{h}+2^{h-1}-1}+X_{2^{h}+2^{h-1}}+\dots,X_{2^{h+1}-1})|X_2,X_3)\\
    =&\frac{1}{2}(\Var(\frac{1}{\sqrt{2}^{h-1}}(X_{2^h}+\dots+X_{2^{h}+2^{h-1}-1}|X_2))+\Var(\frac{1}{\sqrt{2}^{h-1}}(X_{2^{h}+2^{h-1}}+\dots,X_{2^{h+1}-1})|X_3))\\
    =&\frac{1}{2}(V_{h-1}+V_{h-1})=V_{h-1}
\end{align*}

For the second term, we have for every $2^h\le i< 2^{h}+2^{h-1}$, $\mathbb E[X_i|X_2]=\lambda^{h-1}X_2$ (the other noise on this branch is mean zero, and the coefficient of $X_2$ is $\lambda^{h-1}$.) For identical reasons, for every $2^h+2^{h-1}\le i< 2^{h+1}$, $\mathbb E[X_i|X_3]=\lambda^{h-1}X_3$. Therefore, we have

\begin{align*}
    &\Var(\frac{1}{\sqrt{2}^h}(\mathbb{E}[X_{2^h}+\dots+X_{2^{h+1}-1}|X_2,X_3]|X_1)\\
    =&\Var(\frac{1}{\sqrt{2}^h}(\mathbb{E}[X_{2^h}+\dots+X_{2^{h}+2^{h-1}-1}|X_2]+\mathbb{E}[X_{2^{h}+2^{h-1}}+\dots,X_{2^{h+1}-1}|X_3])|X_1)\\
    =&\Var(\frac{1}{\sqrt{2}^h}(2^h\lambda^{h-1}X_2+2^{h-1}\lambda^{h-1} X_3)|X_1)=(2\lambda^2)^{h-1}\times \frac{1}{2}\Var(V_2+V_3|V_1)=(2\lambda^2)^{h-1}
\end{align*}

So, by induction, we have $V_h=1+(2\lambda^2)^2+\dots+(2\lambda^2)^{h-1}$. Finally, for all $2^h\le i<2^{h+1}$, we have $\mathbb E[X_i|X_1]=\lambda^{h}X_1$. This yields
\begin{align*}
    &\Var(\frac{1}{\sqrt{2}^h}(X_{2^h}+\dots+X_{2^{h+1}-1}))\\
    =&V_h+\Var(\frac{1}{\sqrt{2}^h}\mathbb{E}[(X_{2^h}+\dots+X_{2^{h+1}-1})]|X_1)\\
    =&V_h+\Var(\frac{1}{\sqrt{2}^h}\times (2\lambda)^h X_1)=V_h+(2\lambda^2)^h=1+(2\lambda^2)+\dots+(2\lambda^2)^h \enspace.
\end{align*}

This finishes the proof.
\subsection{Proof of Lemma \ref{lem: prediction-noneff}}\label{proof: prediction-noneff}

We will use the following standard result about concentration of the chi-square distribution.

\begin{lemma}\label{lem:chi-square}
    For Chi-square distribution, we have the following concentration inequality: if $t\sim \chi^2_k$
\[\mathbb{P}(t>(1+\varepsilon)k)\le (\frac{1+\varepsilon}{e^{\varepsilon}})^{k/2}\]

and
\[\mathbb{P}(t<(1-\varepsilon)k)\le ((1-\varepsilon)e^{\varepsilon})^{k/2}\]
\end{lemma}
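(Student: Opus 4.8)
The plan is to prove both tail bounds by the standard Chernoff (exponential moment) method, exploiting the representation of a $\chi^2_k$ variable as a sum of $k$ independent squared standard Gaussians. First I would record the moment generating function: writing $t = \sum_{i=1}^k Y_i^2$ with $Y_i \sim \mathcal{N}(0,1)$ i.i.d., a direct Gaussian integral gives $\mathbb{E}[e^{sY_i^2}] = (1-2s)^{-1/2}$ for $s < 1/2$, and hence by independence $\mathbb{E}[e^{st}] = (1-2s)^{-k/2}$ on the same range.

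For the upper tail, I would apply Markov's inequality to $e^{st}$ for $s \in (0, 1/2)$:
\[
\mathbb{P}(t > (1+\varepsilon)k) \le e^{-s(1+\varepsilon)k}(1-2s)^{-k/2}.
\]
Minimizing the logarithm of the right-hand side over $s$ amounts to solving $(1+\varepsilon) - \tfrac{1}{1-2s} = 0$, which yields $1-2s = 1/(1+\varepsilon)$, i.e. the optimal choice $s^\star = \varepsilon/(2(1+\varepsilon))$, which indeed lies in $(0,1/2)$. Substituting $s^\star$ back collapses the bound to $e^{-\varepsilon k/2}(1+\varepsilon)^{k/2} = \big((1+\varepsilon)/e^\varepsilon\big)^{k/2}$, as claimed.

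For the lower tail, I would instead bound $\mathbb{P}(t < (1-\varepsilon)k) = \mathbb{P}\big(e^{-st} > e^{-s(1-\varepsilon)k}\big)$ for $s > 0$, giving
\[
\mathbb{P}(t < (1-\varepsilon)k) \le e^{s(1-\varepsilon)k}(1+2s)^{-k/2},
\]
where the MGF is now evaluated at the negative argument $-s$ and is therefore finite for all $s > 0$. The analogous first-order condition $(1-\varepsilon) - \tfrac{1}{1+2s} = 0$ gives $1+2s = 1/(1-\varepsilon)$, so $s^\star = \varepsilon/(2(1-\varepsilon))$, and plugging in yields $e^{\varepsilon k/2}(1-\varepsilon)^{k/2} = \big((1-\varepsilon)e^\varepsilon\big)^{k/2}$.

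The computation is entirely routine and there is no genuine obstacle; the result is classical. The only points requiring minor care are confirming that the MGF of $t$ is finite on the relevant range (so that the Chernoff bound is non-vacuous) and checking that each optimal $s^\star$ stays inside the admissible interval, namely $s < 1/2$ for the upper tail and $s > 0$ for the lower tail, both of which hold for every $\varepsilon \in (0,1)$.
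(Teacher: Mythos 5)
Your Chernoff-bound computation is correct, and it is exactly the argument the paper gestures at (the paper simply states the lemma is "folklore" provable via the MGF of the chi-squared distribution, without writing out the optimization over $s$). Both optimal choices $s^\star = \varepsilon/(2(1+\varepsilon))$ and $s^\star = \varepsilon/(2(1-\varepsilon))$ check out and yield the stated bounds, so nothing further is needed.
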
 

This is a folklore bound and can be proved by using the MGF of the chi-squared distribution. Now we present the proof of \ref{lem: prediction-noneff}.

\begin{proof}[Proof of Lemma~\ref{lem: prediction-noneff}]
We know that $\hat\beta-\beta^*$ is a multivariate zero-mean Gaussian with covariance matrix $(\sum_{j=1}^m X_J^{(j)}{X_J^{(j)}}^\top)^{-1}$. This follows from the closed form solution of OLS. So we know that the sum 
$$
\sum_{j=1}^m\lp(\lp(X_J^{(j)}\rp)^\top(\beta^*_J-\hat\beta_J)\rp)^2=(\beta^*_J-\hat\beta_J)^{\top}\sum_{j=1}^m X_J^{(j)}{X_J^{(j)}}^\top(\beta^*_J-\hat\beta_J)
$$
follows a $V^2\chi_k^2$ distribution. By Lemma \ref{lem:chi-square}, take $\varepsilon\gets 4\frac{\log(1/\delta)}{k}+1$, we have 
\begin{align*}
    \mathbb{P}(t>(1+\varepsilon)k\le &(\frac{1+\varepsilon}{e^{\varepsilon}}))^{k/2}=\lp(\frac{2+4\log{(1/\delta)}/k}{(1/\delta^{4/k})e}\rp)^{k/2}=\delta\cdot(\frac{2+4\log{(1/\delta)}/k}{e/\delta^{2/k}})^{k/2}
\end{align*}
Let $x=(1/\delta)^{2/k}$, we just need that $2+2\ln(x)\le ex$. This inequality holds for any $x>1$ and can be proved by taking derivatives. So accordingly, $m$ can be $(4\log (1/\delta)+2k)/\varepsilon$.    
\end{proof}

\subsection{Proof for Lemma \ref{lem: variance-noneff}}\label{proof:vars}

We will use Lemma \ref{lem:chi-square}. We know that $\sum_{i=1}^{m}Y_m^2$ follows the distribution $\sigma^2\chi_m^2$, so we just choose $m$ such that the probability of $\chi_m^2\ge(1+\varepsilon)m$ and $\chi_m^2\ge(1+\varepsilon)m$ are both at most $\delta/2$. Since $(1-\varepsilon)e^{\varepsilon}<\frac{1+\varepsilon}{e^\varepsilon}$, we can focus on the upper bound. We need $m$ to be $(\frac{1+\varepsilon}{e^\varepsilon})^{m/2}<\delta/2$, which gives $m\ge\frac{2\log(2/\delta)}{\varepsilon-\ln(1+\varepsilon)}$. Since $\varepsilon<1/2$, we have $\varepsilon-\ln(1+\varepsilon)>\frac{1}{3}\varepsilon^2$. So we have $m\ge 6\log(2/\delta)/\varepsilon^2$.

\subsection{Proof for Lemma \ref{lem:Bernstein}}\label{sec:bernstein-proof}

We cite a lemma of \cite{van2013bernstein}:
\begin{lemma}
(Theorem 1) Let $X_{1}, \ldots, X_{n}$ be independent random variables with values in $\mathbb{R}$ and with mean zero. Suppose that for some constants $\sigma$ and $K$, one has
\[
\frac{1}{n} \sum_{i=1}^{n} \mathbb{E}\left|X_{i}\right|^{m} \leq \frac{m!}{2} K^{m-2} \sigma^{2}, \quad m=2,3, \ldots
\]
Then for all $t>0$,
\[
\mathbb{P}\left(\frac{1}{\sqrt{n}}\left|\sum_{i=1}^{n} X_{i}\right| \geq \sigma \sqrt{2 t}+\frac{K t}{\sqrt{n}}\right) \leq 2 \exp (-t).
\]
\end{lemma}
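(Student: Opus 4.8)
The plan is to verify that the i.i.d.\ variables $Z_i = X_i Y_i$ satisfy the Bernstein moment condition of the cited Theorem~1 of \cite{van2013bernstein} with the explicit constants $\sigma^2 = 1$ and $K = 1$, and then to invert the resulting tail bound. First, since $X_i,Y_i$ are independent and centered, $\E[Z_i] = \E[X_i]\E[Y_i] = 0$, so the mean-zero hypothesis holds; and because the $Z_i$ are i.i.d., the averaged moment condition $\frac1m\sum_{i=1}^m \E|Z_i|^p \le \frac{p!}{2}K^{p-2}\sigma^2$ collapses to the single-variable bound $\E|Z_1|^p \le \frac{p!}{2}$ for every integer $p \ge 2$. (I write $p$ for the moment order to avoid clashing with the sample count $m$; in the cited lemma we take $n = m$.)

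Second, I would establish this moment bound, which is the computational heart of the argument. By independence, $\E|Z_1|^p = \lp(\E|X|^p\rp)^2$ for $X \sim \cN(0,1)$. Using the exact Gaussian absolute moments $\E|X|^{2k} = (2k-1)!!$ and $\E|X|^{2k+1} = \sqrt{2/\pi}\,2^k k!$, the claim $\E|Z_1|^p \le p!/2$ reduces, in the even case $p=2k$, to the inequality $\binom{2k}{k} \le 4^k/2$, and in the odd case $p = 2k+1$ to $(2k+1)\binom{2k}{k} \ge \tfrac{4}{\pi}4^k$; both follow from the standard two-sided bounds $\frac{4^k}{2\sqrt k} \le \binom{2k}{k} \le \frac{4^k}{2}$ for $k \ge 1$, while the base case $p=2$ gives equality $\E|Z_1|^2 = 1 = 2!/2$. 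The main obstacle is precisely keeping these elementary but slightly delicate factorial/binomial estimates tight enough to land on the clean constants $\sigma^2 = K = 1$; a looser bound would force a larger $K$ and degrade the final constant $C_\lambda$.

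Third, with $\sigma^2 = K = 1$ and $n = m$, Theorem~1 of \cite{van2013bernstein} yields
\[
\Pr\!\lp(\tfrac{1}{\sqrt m}\lvert\textstyle\sum_{i=1}^m Z_i\rvert \ge \sqrt{2t} + \tfrac{t}{\sqrt m}\rp) \le 2e^{-t}\qquad\text{for all } t > 0.
\]
I would then set $t = \log(2/\delta)$, so the right-hand side equals $\delta$, and use the hypothesis $m \ge 32\log(2/\delta)/\varepsilon^2 = 32t/\varepsilon^2$ to check the deterministic inequality $\sqrt{2t} + t/\sqrt m \le \varepsilon\sqrt m$. Indeed $m \ge 32t/\varepsilon^2$ gives $\varepsilon\sqrt m \ge 4\sqrt{2t}$, whence $\sqrt{2t} \le \tfrac14\varepsilon\sqrt m$ and, using $\varepsilon < 1/2$, also $t/\sqrt m \le \tfrac18\varepsilon\sqrt{2t} \le \tfrac{1}{32}\varepsilon^2\sqrt m \le \tfrac14\varepsilon\sqrt m$; summing gives $\sqrt{2t}+t/\sqrt m \le \tfrac12\varepsilon\sqrt m \le \varepsilon\sqrt m$. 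Since the event $\lp\{\frac1m\lvert\sum Z_i\rvert > \varepsilon\rp\} = \lp\{\frac{1}{\sqrt m}\lvert\sum Z_i\rvert > \varepsilon\sqrt m\rp\}$ is contained in the event controlled above, its probability is at most $2e^{-t} = \delta$, which is exactly the claim with $C_\lambda = 32$.
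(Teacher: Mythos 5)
Your proposal does not actually prove the statement it was asked to prove. The statement here is Theorem~1 of \cite{van2013bernstein} itself: a general Bernstein-type deviation inequality for \emph{arbitrary} independent, mean-zero, real-valued $X_1,\dots,X_n$ satisfying the moment-growth condition $\frac1n\sum_i\E|X_i|^m\le\frac{m!}{2}K^{m-2}\sigma^2$. Your three steps instead (i) specialize to products $Z_i=X_iY_i$ of standard Gaussians, (ii) verify this moment condition with $\sigma=K=1$, and (iii) \emph{apply} ``the cited Theorem~1'' to conclude. That is, you invoke as a black box exactly the inequality you were supposed to establish, so as a proof of the posed statement the argument is circular: nothing in your write-up explains why the moment condition forces the tail bound $\Pr\lp(\frac{1}{\sqrt n}\lrabs{\sum_i X_i}\ge\sigma\sqrt{2t}+Kt/\sqrt n\rp)\le 2e^{-t}$. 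A genuine proof is the classical Chernoff/MGF argument: for $0<\lambda<1/K$, expand $\E e^{\lambda X_i}=1+\sum_{m\ge2}\lambda^m\E[X_i^m]/m!$, bound the product over $i$ termwise using the averaged moment hypothesis and $1+x\le e^x$ to obtain the sub-gamma bound $\E e^{\lambda\sum_i X_i}\le\exp\lp(\frac{n\lambda^2\sigma^2}{2(1-\lambda K)}\rp)$ (the hypothesis makes the series geometric), then apply Markov's inequality to $e^{\lambda\sum_i X_i}$ at the threshold $\sigma\sqrt{2tn}+Kt$ with the optimized choice of $\lambda$, and repeat for $-X_i$ to get the two-sided bound with the factor $2$. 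None of these ingredients appears in your proposal.

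For what it is worth, the mathematics you did write is correct, but it is a proof of a \emph{different} result, namely the paper's Lemma~\ref{lem:Bernstein} (concentration of $\frac1m\lrabs{\sum_i X_iY_i}$), and it matches the paper's own proof of that lemma in Section~\ref{sec:bernstein-proof} almost step for step: the paper checks $\sigma=K=1$ via the ratio recursion $F(k+2)/F(k)=k/(k+2)$ for $F(k)=\E|Z_1|^k/k!$, where you use central-binomial-coefficient bounds (an equivalent computation), and the final calibration $t=\log(2/\delta)$ with $m\ge 32\log(2/\delta)/\varepsilon^2$ is identical, including the constant $32$. Note that the paper supplies no proof at all of the statement in question; it is quoted verbatim with the citation serving as justification, which is precisely why the missing Chernoff argument was the content you were expected to supply.
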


First we calculate the moments of $Z_i$. We can easily calculate $\mathbb{E}|Z_i|^{k}=(\mathbb{E}|X_i|^{k})^2=\frac{2^k\Gamma(\frac{k+1}{2})^2}{\pi}$. Let $F(k)=\frac{2^k\Gamma(\frac{k+1}{2})^2}{\pi\cdot k!}$. We have $F(2)=\frac{1}{2}$, $F(3)=\frac{4}{3\pi}<1/2$, and we also have that $F(k+2)/F(k)=\frac{k}{k+2}<1$, so we have $F(k)<1/2$ for all integers $k\ge 2$. This establish $\sigma=1,K=1$ in the Lemma. Therefore, by the lemma we have
\[
\mathbb{P}\left(\frac{1}{m}\left|\sum_{i=1}^{m} Z_{i}\right| \geq \sqrt{\frac{2t}{m}}+\frac{t}{m}\right) \leq 2 \exp (-t).
\]

We take $t=\log(2/\delta)$ and $m$ such that $\sqrt{\frac{2t}{m}}+\frac{t}{m}\le\varepsilon$. Since $\varepsilon<1/2$, it suffice that $\sqrt{\frac{2t}{m}}<\varepsilon/4$, which implies $m\ge 32\log(2/\delta)/\varepsilon^2$ suffices.

\subsection{Comparison with \cite{gao2022optimal}}\label{proof: eigenvalues}
As mentioned in Section \ref{sec:discussion_comparison}, our lower bound is at least as good as the one provided in
 \cite{gao2022optimal}. To argue about that, we need to prove that $M^2 - 1 \geq d b_{\min}^2$, where $\kappa \leq \sqrt{M}$.

Without loss of generality, we assume the variance to be $1$, or otherwise, we just scale the covariance matrix Recall that, the adjacency matrix $B=\{b_{ij}\}$ is lower triangular after sorted by topological order, and the inverse covariance matrix is $\Theta = (I-B)(I-B)^\top$. Therefore, the inverse covariance matrix is having $1$ as its determinant (because the determinant of both $I_B$ and $(I-B)^\top$ are $1$ since they are lower/upper triangular), and thus the smallest eigenvalue is at most $1$.

On the other hand, suppose $X_i$ has $d$ parents $X_{j_1},\dots,X_{j_d}$. We can write $X_{i}=\sum_{k=1}^d X_{j_k}b_{j_k\to i}+\varepsilon.$ We take $X_{i}^2+\sum_{k=1}^d X_{j_k}^2=1$. We can lower bound that
\[X^\top\Theta X=\sum_{i=1}^n(X_i-\sum_{j\to i}b_{j\to i}X_j)^2\ge (X_{i}-\sum_{k=1}^d X_{j_k}b_{j_k\to i})^2.\]

By Cauchy-Schwarz, we have the upper bound
$$(X_{i}-\sum_{k=1}^d X_{j_k}b_{j_k\to i})^2\le (1+\sum_{k=1}^d b_{j_k\to i}^2)(X_{i}^2+\sum_{k=1}^d X_{j_k}^2)=1+\sum_{k=1}^d b_{j_k\to i}^2.$$ 
This inequality can be achieved if
$$X_i=\frac{1}{\sqrt{1+\sum_{k=1}^d b_{j_k\to i}^2}}, ~~~~ X_{j_k}=\frac{-b_{j_k\to i}}{\sqrt{1+\sum_{k=1}^d b_{j_k\to i}^2}} ~~~~ \forall 1\le k\le d.$$
So, we proved that the maximum eigenvalue of $\Theta$ is at least $1+\sum_{k=1}^d b_{j_k\to i}^2\ge 1+db_{\min}^2$. Therefore, since the least eigenvalue of $\Theta$ is at most $1$, the ratio between the maximum and minimum eigenvalue of $\Theta$ is at least $1+db_{\min}^2$, or, the condition number is at least $1+db_{\min}^2$. So, we have concluded $M\ge \sqrt{1+db_{\min}^2}$.

\section{Discussion and Extension.}

\subsection{Identifiability for non-equal variance}\label{sec:non-equal-var}

By calculating the conditional variance, one can identify the topological order of the DAG and thus identify the topology. However, we here give an example where the equal variance assumption is violated and one cannot find the topology of the DAG, even up to the same Markov Equivalence Class.

Let $b$ be a small positive number. Consider two different DAGs $G_1,G_2$, on variables $X,Y,Z$ and $X',Y',Z'$, respectively. The topology of $G_1$ is $X\to Y,Y\to Z$ and the topology of $G_2$ is $X'\to Z',Z'\to Y',X'\to Y'$. These two DAGs belong in different Markov Equivalence Classes. The variables in $G_1$ satisfy the relations with equal variance:

\[X=\varepsilon_1,Y=bX+\varepsilon_2,Z=bY+\varepsilon_3\]
where $\varepsilon_1,\varepsilon_2,\varepsilon_3$ are drawn i.i.d. from $\cN(0,1)$ distribution. Therefore, we can calculate the covariance matrix of the joint distribution $(X,Y,Z)$ as
\[\mathrm{Cov}(X,Y,Z)=\begin{pmatrix}
    1 & b & b^2\\
    b & b^2+1 & b^3+b\\
    b^2 & b^3+b & b^4+b^2+1
\end{pmatrix}\]

On the other hand, the variables in $G_2$ satisfy 
\[X=\varepsilon_1',Z'=b^2X+\varepsilon_2,Y'=\frac{b}{b^2+1}X'+\frac{b}{b^2+1}Z'+\varepsilon_3,\]
where $\varepsilon_1,\varepsilon_2,\varepsilon_3$ are i.i.d. centered Gaussians, but the variances are $1,b^2+1,\frac{1}{b^2+1}$ respectively. We can calso check that the covariance matrix of the joint distribution $(X',Y',Z')$ is
\[\mathrm{Cov}(X',Y',Z')=\begin{pmatrix}
    1 & b & b^2\\
    b & b^2+1 & b^3+b\\
    b^2 & b^3+b & b^4+b^2+1
\end{pmatrix}\]

It follows that $(X,Y,Z)$ and $(X',Y',Z')$ follow the same distribution, hence the two DAGs are indistinguishable. 
Notice that the minimum strength of the edges among the two networks is $b^2$, whereas the ratio of the variances in the second network is $1+O(b^2)$. Thus, we have concluded our example to show a pair of graph that is not distinguishable if the variance ratio is at least $1+O(b_{\min})$.

\subsection{Comparison to the PC Algorithm}\label{sec: pc}

A standard approach for structure recovery is the PC algorithm \cite{spirtes2001causation}, which is based on conditional independence testing. In order to provide rigorous guarantees for correct recovery, most works rely on some form of the strong faithfulness assumption. It has been noted in prior work \cite{uhler2013geometry} that this assumption might be too restrictive. Indeed, we next demonstrate that this assumption need not hold in our setting.

More formally, the strong faithfulness assumption requires that for any nodes $a,b$ and subset $S\subseteq V$, if, by the graph's structure, $a,b$ are not independent conditioning on $S$, then the conditional correlation of $a,b$ conditioning on $S$ is lower bounded. However, in our model this does not necessarily hold. We give the following example: consider the graph with three nodes $X,Y,Z$, and we have the model as
\[X = \varepsilon_1, Y = b X_1 + \varepsilon_2, Z = bY - b^2 X + \varepsilon_3\]

Here, $\varepsilon_1,\varepsilon_2,\varepsilon_3$ are i.i.d. $\cN(0,1)$. We can derive that $Z$ and $X$ are independent, as we could write $Z=b\varepsilon_2+\varepsilon_3$. This happens due to path cancellation.

We complement the theoretical observations with experiments that show how our method compares to the PC algorithm.
 We use the PC algorithm package in \texttt{causal-learn} \cite{zheng2024causal}. We use the same setting as in the simulation experiments of Section~\ref{sec:simulation}, with the hyperparameters $n=5,10$, $d=2,3$, and with $b_{\max}=1$. We set $100$ trials and the figure shows how much portion of graphs are correctly identified. All the graphs are random graphs. We note that in general, the PC algorithm is only guaranteed to find the correct Markov Equivalence Class, since some orientations depend on the order of increasing variance, which is extra information about the model that PC doens't use. Therefore, we consider that the PC algorithm correctly identifies the graph if it outputs any graph in the same Markov Equivalence Class. This means that it detects all colliders (i.e. a child with at least two parents) and the skeleton (edges without directions). In Figure~\ref{fig:PC_comparison}, we present the percentage of times that the PC correctly identified the Markov Equivalence Class, as well as the percentage of times that it identifies the correct skeleton, which is a relaxed goal.  The result is as follows.
\begin{figure}[H]
    \centering
    \includegraphics[width=0.7\linewidth]{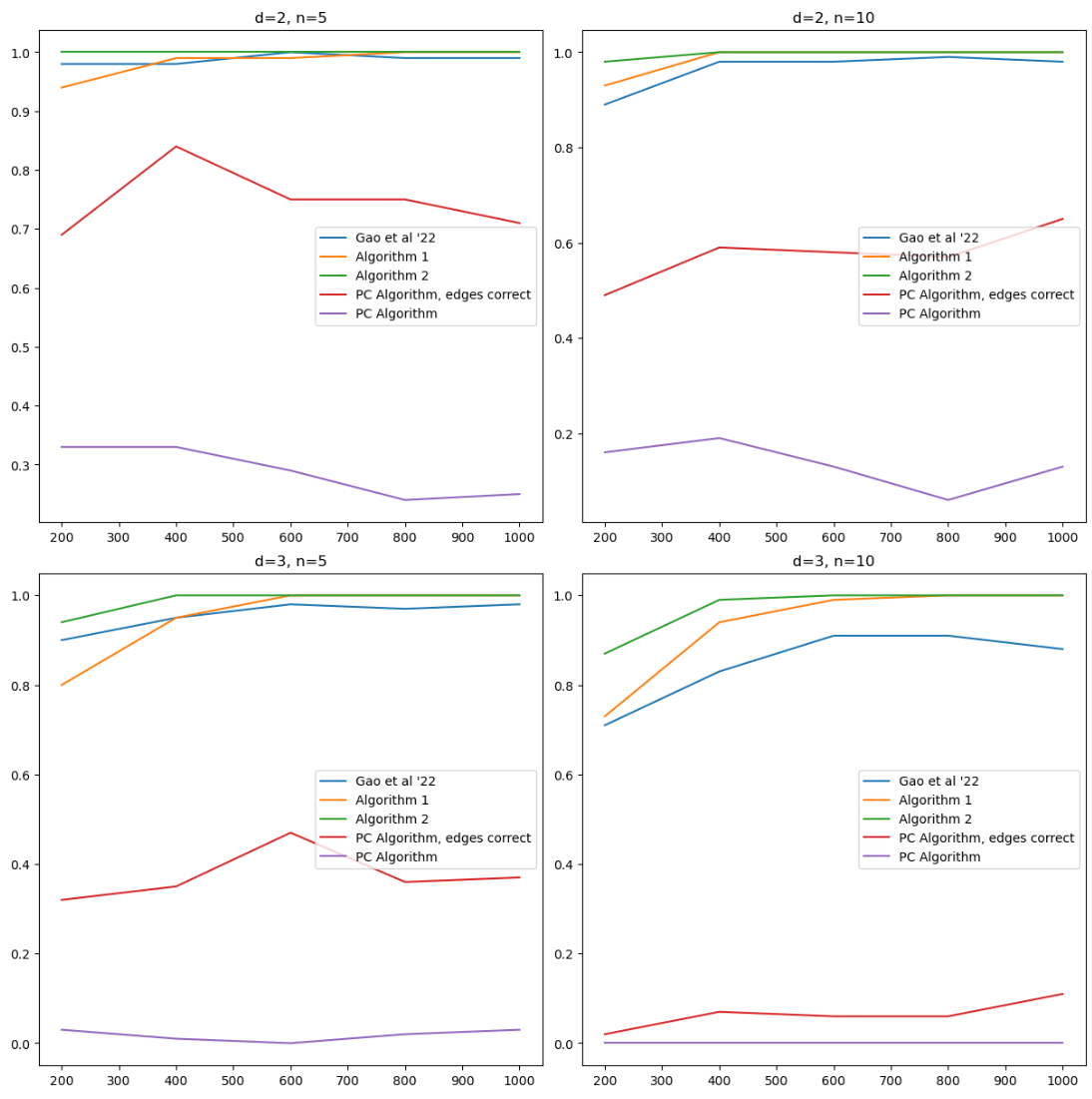}
    \caption{Comparison with the PC algorithm}
    \label{fig:PC_comparison}
\end{figure}

In Figure~\ref{fig:PC_comparison}, we notice that the PC algorithm behaves poorly even for the task of finding the skeleton of the graph, if the number of nodes becomes larger. This is explained by the observaton that the faithfulness assumption is less likely to hold if the number of nodes is larger. Note that we generate the edge weights with random signs, so as the number of nodes increases, the possibility of cancellations, such as the ones we saw in our previous example, also increases. 

\subsection{Adaptivity of Algorithm~\ref{alg: inefficient}: the case of unknown $\sigma^2,\beta_{\min}$ or $d$}\label{sec: extension}

One reasonable question concerns the situation when one or more parameters of the problem is unknown when we run Algorithm~\ref{alg: inefficient}. Below, we explain some simple modifications that ensure the algorithm adapts to the unknown parameters. Essentially, we would like to argue that even if we do not the parameters a priori, if the algorithm is given enough samples it will succeed in identifying the graph.

\paragraph{Unknown $\sigma^2$.} We notice that with the equal-variance assumption, $\min_i\Var(X_i)=\sigma^2$. Therefore, we can find the variable with empirical smallest variance, and take the empirical variance as $\hat\sigma^2$. As we're given $O(b_{\min}^{-4})$ samples, we can have with high probability,  $\hat\sigma^2/\sigma^2-1$ to be $\pm b_{\min}^2/10$ (proven by Lemma \ref{lem: variance-noneff}.) Thus, the error is small enough to make the Proposition \ref{prop:sort} hold, by slightly shrinking the threshold of judging whether a node is the immediate parent. In particular, the following modification of Proposition~\ref{prop:sort} holds, with a similar proof. 

\begin{proposition}[with unknown $\sigma^2$]\label{prop:sort 2}
    (1) If $T$ contains all the parents, then there exists some $J\subseteq T$ and $|J|\le d$ such that the conditional variance of $X_i|x_J$ is $\sigma^2$. With high probability, it is no more than $\hat\sigma^2(1+b_{\min}^2/10)$.

    (2) If $T$ does not contain all the parents, then there exists a parent that is not in $T$, so for any coefficient vector $\beta_T$ each $X_i-X_T\beta_T$ is a Gaussian random variable that has variance at least $(1+b_{\min}^2)\sigma^2$. Specifically, for any $J\subseteq T$, for any coefficient $\beta_J$, $X_i-X_J\beta_J$ also has variance $\geq 1 + b_{\min}^2$. With high probability, it is no less than $\hat\sigma^2(1+9b_{\min}^2/10)$.
\end{proposition}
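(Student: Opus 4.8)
The plan is to reduce both parts to the known-$\sigma$ statement of Proposition~\ref{prop:sort} by rescaling, and then to transfer the resulting population separation to the data-dependent threshold $\hat\sigma^2$ using concentration of the empirical variance. First I would observe that the map $\tilde X_i := X_i/\sigma$ produces a family $(\tilde X_i)$ obeying the equal-variance model \eqref{eqn: model} with unit noise variance, and that every conditional variance of the original process equals $\sigma^2$ times the corresponding conditional variance of $(\tilde X_i)$. Hence the population content of both parts follows verbatim from Proposition~\ref{prop:sort}: if $T$ contains all parents of $i$ there is a $J \subseteq T$ with $|J| \le d$ and $\Var(X_i \mid X_J) = \sigma^2$; and if some parent of $i$ lies outside $T$, then for every $\beta_T$ (and every $J \subseteq T$ with every $\beta_J$) the residual $X_i - X_T^\top \beta_T$ has variance at least $(1+b_{\min}^2)\sigma^2$. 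The law-of-total-variance computation is identical to the one in the proof of Proposition~\ref{prop:sort}, now carrying an overall factor of $\sigma^2$.

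Next I would control $\hat\sigma^2$. Because variance is non-decreasing along directed paths in the equal-variance model, the smallest true variance among all nodes is exactly $\sigma^2$, attained at any source. Taking $\hat\sigma^2 := \min_i \widehat{\Var}(X_i)$ and applying Lemma~\ref{lem: variance-noneff} to each node with accuracy parameter $\varepsilon = b_{\min}^2/20$, together with a union bound over the $n$ nodes, shows that with $m \ge C b_{\min}^{-4}\log(n/\delta)$ samples every $\widehat{\Var}(X_i)$ lies in $(1 \pm b_{\min}^2/20)\Var(X_i)$, with probability at least $1-\delta$. On this event the minimum is squeezed between $(1 - b_{\min}^2/20)\sigma^2$ and $(1 + b_{\min}^2/20)\sigma^2$, so $\lvert \hat\sigma^2/\sigma^2 - 1\rvert \le b_{\min}^2/20$.

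Then I would combine the two ingredients. For part (1), using $\hat\sigma^2 \ge (1 - b_{\min}^2/20)\sigma^2$ and $1/(1 - b_{\min}^2/20) \le 1 + b_{\min}^2/10$ gives $\sigma^2 \le \hat\sigma^2(1 + b_{\min}^2/10)$, the claimed bound. For part (2), using $\hat\sigma^2 \le (1 + b_{\min}^2/20)\sigma^2$ gives
\[
(1+b_{\min}^2)\sigma^2 \ge (1+b_{\min}^2)\,\frac{\hat\sigma^2}{1 + b_{\min}^2/20} \ge \hat\sigma^2\Bigl(1 + \tfrac{9}{10}b_{\min}^2\Bigr),
\]
where the last step holds because $(1+b_{\min}^2)/(1+b_{\min}^2/20) \ge 1 + \tfrac{9}{10}b_{\min}^2$ for $b_{\min} \le 1$. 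This establishes the two high-probability bounds in the statement, and a detection threshold placed at $\hat\sigma^2(1 + b_{\min}^2/2)$ then strictly separates the two cases.

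The main obstacle is the one-sidedness forced by the fact that the test threshold is itself the random quantity $\hat\sigma^2$: I must ensure that its relative error is a small enough fraction of $b_{\min}^2$ that the gap between $1 + b_{\min}^2/10$ and $1 + \tfrac{9}{10}b_{\min}^2$ survives with room to spare, which is exactly what forces the accuracy $\varepsilon \asymp b_{\min}^2$ and hence the $O(b_{\min}^{-4})$ sample count from Lemma~\ref{lem: variance-noneff}. Passing from these population/threshold statements to the empirical MSE actually computed by the algorithm is then routine: the variance-error and beta-error decomposition and its concentration are identical to the proof of Theorem~\ref{thm:inefficient_upper}, uniformly rescaled by $\sigma^2$, so no new estimates are required.
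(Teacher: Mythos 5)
Your proof is correct and follows essentially the same route the paper takes: it reduces the population statements to Proposition~\ref{prop:sort} (the paper simply says ``with a similar proof''), uses the fact that $\min_i \Var(X_i)=\sigma^2$ to define $\hat\sigma^2$ as the smallest empirical variance, and invokes Lemma~\ref{lem: variance-noneff} with accuracy $\Theta(b_{\min}^2)$ to transfer the gap to the data-dependent threshold. If anything you are slightly more careful than the paper: with the paper's stated relative accuracy of $b_{\min}^2/10$ the bound $\sigma^2 \le \hat\sigma^2(1+b_{\min}^2/10)$ in part (1) would fail by an $O(b_{\min}^4)$ margin, whereas your choice $\varepsilon=b_{\min}^2/20$ makes both stated constants check out.
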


Therefore, we can assume that $\sigma^2$ is known and furthermore assume $\sigma^2=1$.

\paragraph{For unknown $b_{\min}$ and $d$.}

Without knowing $b_{\min}$, we need to consider the following issue: because of the noise, the strength of the edge could be arbitrarily small. Therefore, we could never distinguish the case with faint signal versus no signal. Therefore, when sweeping through the possible values of $b_{\min}$, we may enter a region where all the detected edges have strength more than $b_{\min}$, and the rest have much less. We may not infinitely search for $b_{\min}$. Our solution is to change the algorithm as Algorithm \ref{alg: inefficient_unknown}.

We first explain what is changed from Algorithm \ref{alg: inefficient}. In Phase 1, we add a variable $\mathsf{FOUND}$ to determine whether the current estimate $\hat{d}$ is enough to explain the variance of the next node in the ordering. We changed the threshold for MSE in line 11 so that it does not include $d,b_{\min}$. If we are provided sufficient samples, by Proposition \ref{prop:sort}, we take the next node with small MSE. If no node has small enough MSE, then $\mathsf{FOUND}$ is False, and there is no next node. We can conclude that the number of neighbors $\hat{d}$ is not sufficient to make MSE small enough (there are missing parents), so we need to increment $\hat{d}$ to $\hat{d}+1$ to fit.

In Phase 2, we changed the threshold in line 29. In line 29, we need to find all the edges to have strength either $\hat b_{\min}$ or $\le \hat{b}_{\min}/40$ for some $K$. If we are given a sufficient number of samples, then with high probability, the ground truth for all those weights is either $\hat b_{\min}/2$ or $\le \hat{b}_{\min}/20$. If for some $\hat b_{\min}$ we cannot find such a $K$ (so $\mathsf{PASSED}$ is False), it means that even if $J\cup K$ covers all the parents, the coefficients are not within that range. Therefore, we need to make $\hat b_{\min}$ smaller and try the algorithm again.

We can perform Phase 1 of Algorithm \ref{alg: inefficient_unknown} without knowing $d$ and $b_{\min}$. However, we need $d$ to perform Phase 2 of Algorithm \ref{alg: inefficient_unknown}, but not $b_{\min}$. And we may change our guarantee to the following, which is comparable to Theorem \ref{thm:inefficient_upper}. Notice that the number of samples doesn't change in Informal Theorem~\ref{thm: informal phase1} and only slightly increases in Informal Theorem~\ref{thm: informal phase2}.

\begin{informal}\label{thm: informal phase1}
    Suppose we run Algorithm~\ref{alg: inefficient_unknown} using $m$ independent samples generated from a DAG $G$ according to \eqref{eqn: model}. If Assumption~\ref{ass:beta_lower_bound} holds, then there exists an absolute constant $C>0$, such that the following guarantees hold.
Phase 1 of Algorithm~\ref{alg: inefficient} succeeds in finding a correct topological ordering of the nodes with probability at least $1-\delta$, provided that
    \[
    m \geq C \frac{1}{b_{\min}^{4}}\lp(d\log(n/d)+\log(1/\delta)\rp)
    \] 
\end{informal}

\begin{informal}\label{thm: informal phase2}
    Suppose we run Algorithm~\ref{alg: inefficient_unknown} using $m$ independent samples generated from a DAG $G$ according to \eqref{eqn: model}. For a given node, we define a \textbf{good} set of parents as follows: there is a $b>0$, such that for all reported $i\to j$, $|b_{ij}|\ge b$ and for other not reported $b_{ij}$, $|b_{ij}|\le b/10$. If Assumption~\ref{ass:beta_lower_bound} holds, then there exists an absolute constant $C>0$, such that the following guarantees hold.
Provided that Phase 1 of Algorithm~\ref{alg: inefficient} succeeds in finding a correct ordering, Phase 2 finds the good set of parents for each node with probability at least $1-\delta$, provided that 
    \[
    m \geq C \frac{\tau(G)}{b_{\min}^2}\lp(d\log(n/d)+\log(\frac{1}{\delta\log (1/b_{\min})})\rp)
    \]
\end{informal}

\begin{algorithm}[H]
    \caption{Estimating the DAG Information Theoretically}
    \begin{algorithmic}[1]
    
    \INPUT  Samples $(X^{(1)}, X^{(2)}, \ldots,X^{(m)})$\\
    \OUTPUT  Topology $\hat{G}$
        \STATE\textbf{Phase 1: Finding the Topological Ordering, not knowing $d$ and $b_{\min}.$}
        \STATE $T=[]$
        \STATE Find $X_1=\argmin_{X}\widehat{\Var}(X)$ and put $i \in T$
        \STATE{$\hat{d} = 1$}
        \WHILE{$|T| < n$}
        \STATE $M\gets\emptyset,\mathsf{FOUND}\gets False$
        \FOR{$i\in [n]\backslash T$}
        \FOR{$J\subseteq T,|J|\le \hat d$}
        \STATE $\beta_J \leftarrow$ Regress $X_i$ on $X_J$ 
        \STATE $MSE\leftarrow \frac{1}{m}\sum_j\lp(X_i^{(j)}-X_J^{(j)}\beta_J\rp)^2$
        \IF{$MSE\le 1+O(\sqrt{\frac{\hat d \log n}{m}})$}
            \STATE $T \leftarrow T$.append($i$)
            \STATE $\mathsf{FOUND}\gets True$
        \ENDIF
        \ENDFOR
        \ENDFOR
        \IF{$\mathsf{FOUND}=False$}
        \STATE $\hat d\gets \hat d+1$
        \ENDIF
        \ENDWHILE
        
        \STATE\textbf{Phase 2: Finding the parents, knowing $d$ but not $b_{\min}$.}
        \STATE $\hat b_{\min}\gets 1$
        \FOR{$i\in T$}
        \STATE $S \leftarrow$ nodes in $T$ before $i$
        \FOR{$J\subseteq S,|J|= \min(|S|,d)$}
        \STATE $\mathsf{PASSED}\gets True$
        \FOR{$K\in S\backslash J$,$|K|\leq \min(|S|-|J|,d)$}
        \STATE $\beta_{J\cup K} \gets$ Regress $X_i$ on $X_{J\cup K}$
        \IF{$\exists k \in K: |\hat{\beta_k}|\ge \hat b_{\min}$ or $|\hat{\beta_k}|\le \hat b_{\min}/40$}
        \STATE $\mathsf{PASSED}\gets False$, \textbf{break.}
        \ENDIF
        \ENDFOR
        \IF{$\mathsf{PASSED}=True$}
            \STATE $\hat{\beta} \gets$ regress $X_i$ on $X_J$
            \STATE $\pa(i) \gets \{j \in J : |\hat{\beta}_j| \geq \hat b_{\min}\}$ , \textbf{break.}
        \ENDIF
        \ENDFOR
        \IF{$\mathsf{PASSED}=False$}
        \STATE $\hat b_{\min}\gets \hat b_{\min}/2$,go to line 23.
        \ENDIF
        \ENDFOR
    \end{algorithmic}
    \label{alg: inefficient_unknown}
\end{algorithm}

We give a sketch for proving the algorithm's correctness.

\begin{enumerate}
    \item \textbf{Sketch proof of Theorem \ref{thm: informal phase1}.} From Lemma \ref{lem: variance-noneff}, we know that the ratio between MSE and the variance is $\frac{1}{m}\chi_m^2$. Therefore, the largest deviation should be $O(\sqrt{d\log n/m})$, which can be a rubric for the conditional variance gap (i.e., if it is not the next node, then the conditional variance is $O(b_{\min}^2)$, which is $O(\sqrt{d\log n/m})$, by Proposition \ref{prop:sort}). Therefore, with high probability, this algorithm will terminate when $\hat d=d$ and finally find the true parents. The running time is at most $d$ times the Phase 1 of the original Algorithm \ref{alg: inefficient}.

    \item \textbf{Sketch proof of Theorem \ref{thm: informal phase2}.} Since we are given a sufficient number of samples, by the proof of Lemma \ref{lem:vuffray} and Corollary \ref{cor:vuffray_dag}, we can derive that if $J\cup K$ contains all the parents, then with high probability, the error of $\beta_j-\hat \beta_j$ is always $\le O(b_{\min})$. Then, if we have all weights either $\hat b_{ij}\ge b$ or $\hat b_{ij}\le b/40$, then with high probability $|b_{ij}|\ge b - b_{\min}/80$ or $|b_{ij}|\ge b/40 + b_{\min}/80$, which is $|b_{ij}|\ge b/2$ or $|b_{ij}|\le b/20$, so this achieves our guarantee. Notice that this algorithm terminates when $\hat b_{\min}<b_{\min}/2$, so the number of trials of $b_{\min}$ will be $O(\log(1/b_{\min}))$, and by union bound, we can have the algorithm succeed.
\end{enumerate}

\subsection{Extension to Sub-Gaussian Noise.}\label{sec:subgaussian}

One further extension of the model is that we relax the structure of the noise $\varepsilon_i$: the noise is sub-Gaussian. We assume that for all the samples $(X_1^{(k)},X_2^{(k)},\dots,X_n^{(k)})$, the model satisfies:

\begin{equation}\label{eqn: model}
    X_i^{(k)}=\sum_{j\in \pi(i)}b_{ji}X_j^{(k)}+\varepsilon_i^{(k)},
\end{equation}

Where all the noise $\varepsilon_i^{(k)}$ are independent. Furthermore, for all $i$, the noise $\varepsilon_i^{(1)},\varepsilon_i^{(m)}$ are independent sub-Gaussian matrices, where $\Var[\varepsilon_i^{(k)}]=1$ and the sub-Gaussian norm is no more than $\psi_i$. Now, we show that we can perform Phase 1 in \ref{alg: inefficient}.

We start with the proof of Theorem \ref{thm:inefficient_upper} in Section \ref{proof:inefficient}. To finish the proof of finding the ordering, we take the same proof by measuring $VE$ and $BE$. It boils down to proving that with this number of samples, $VE$ and $BE$ are upper bounded by $VE \le (1+O(b_{\min}^2))\Var[X_i|x_J]$ and also $BE \le O(b_{\min}^4)\Var[X_i|x_J]$. The correctness can be guaranteed by Lemmas \ref{lem: prediction-noneff} and \ref{lem: variance-noneff}, so it just suffices to prove these two lemmas when Gaussian noise is substituted by sub-Gaussian noise, and the inequality holds for some different $C_V$ and $C_B$.

Before that, we use the following Hanson-Wright Inequality \cite{rudelson2013hanson}.

\begin{lemma}[Hanson-Wright]\label{lem:hanson-wright}
    Let $X = (X_1,\ldots,X_n) \in \R^n$ be a random vector with independent
  components $X_i$ which satisfy
  $\E [X_i] = 0$ and $\|X_i\|_{\psi_2} \le K$.
  Let $A$ be an $n \times n$ matrix. Then, there is a universal constant $c$ such that for every $t \ge 0$,
  $$
  \mathbb P[|X^\top A X - \E [X^\top A X]| > t]
  \le 2 \exp \Big[ - c \min \Big( \frac{t^2}{K^4 \|A\|_F^2}, \frac{t}{K^2 \|A\|_{op}} \Big) \Big].
  $$
\end{lemma}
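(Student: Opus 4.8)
The plan is to establish the tail bound by controlling the moment generating function (MGF) of the centered quadratic form $S := X^\top A X - \E[X^\top A X]$ and then applying a Chernoff argument. First I would split $S$ into its diagonal and off-diagonal contributions, writing $S = \sum_i A_{ii}(X_i^2 - \E X_i^2) + \sum_{i \ne j} A_{ij} X_i X_j =: S_{\mathrm{diag}} + S_{\mathrm{off}}$, and bound the MGF of each piece separately. It suffices to show $\E \exp(\lambda S) \le \exp(C\lambda^2 K^4 \norm{A}_F^2)$ for all $|\lambda| \le c/(K^2 \norm{A}_{op})$; the claimed tail, with the $\min(t^2/(K^4\norm{A}_F^2),\, t/(K^2\norm{A}_{op}))$ exponent, then follows by the standard optimization of the Chernoff bound over $\lambda$ in this range.

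The diagonal term is the easy part. Each summand $X_i^2 - \E X_i^2$ is centered and sub-exponential because $X_i$ is sub-Gaussian with $\norm{X_i}_{\psi_2} \le K$, so its sub-exponential norm is $O(K^2)$. Since the $X_i$ are independent, I would apply Bernstein's inequality, using $\sum_i A_{ii}^2 \le \norm{A}_F^2$ to control the variance proxy and $\max_i |A_{ii}| \le \norm{A}_{op}$ to control the tail parameter, which yields an MGF bound of exactly the required form.

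The off-diagonal term $S_{\mathrm{off}}$ is where the real work lies, and I expect it to be the main obstacle. The plan is: (i) apply a decoupling inequality, so that up to an absolute constant it suffices to bound the MGF of the decoupled chaos $\sum_{i,j} A_{ij} X_i X_j'$, where $X'$ is an independent copy of $X$; (ii) condition on $X'$, so that the decoupled sum becomes a sum of independent sub-Gaussian terms $\sum_i X_i (AX')_i$ in the randomness of $X$, whose conditional MGF is at most $\exp(C\lambda^2 K^2 \norm{AX'}_2^2)$; (iii) handle the resulting quadratic form via the Gaussian-integral linearization $\exp(\tfrac{\mu}{2}\norm{AX'}_2^2) = \E_g \exp(\sqrt{\mu}\,\langle A^\top g, X'\rangle)$ with $g$ a standard Gaussian, which turns the square back into a sub-Gaussian sum in $X'$ and finally into a Gaussian chaos $g^\top A A^\top g$. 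This last object has the explicit MGF $\prod_k(1-2\nu\mu_k)^{-1/2}$ in the eigenvalues $\mu_k$ of $AA^\top$, which is bounded by $\exp(C\nu\, \trace(AA^\top)) = \exp(C\nu\norm{A}_F^2)$ provided $\nu < 1/(2\norm{A}_{op}^2)$. Threading these constraints back through the chain pins down the admissible range $|\lambda| \le c/(K^2\norm{A}_{op})$ and delivers the off-diagonal MGF estimate.

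Finally, I would combine the two MGF bounds, for instance by Cauchy–Schwarz applied to $\E\exp(\lambda S_{\mathrm{diag}} + \lambda S_{\mathrm{off}})$ at the cost of absolute constants, and invoke the Chernoff bound, optimizing over $\lambda$ to obtain the two-regime sub-gamma tail. The delicate points to get right are the decoupling constant and ensuring the off-diagonal estimate is non-circular: the Gaussian linearization in (iii) is precisely what prevents us from re-encountering an uncontrolled quadratic form in $X'$.
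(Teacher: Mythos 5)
The paper does not prove this lemma --- it is imported directly from Rudelson and Vershynin \cite{rudelson2013hanson} --- so there is no internal proof to compare against; your outline is a faithful sketch of the standard argument from that reference (diagonal/off-diagonal splitting with Bernstein for the diagonal, decoupling plus Gaussian linearization of the conditional MGF for the off-diagonal chaos, Cauchy--Schwarz to recombine, and a Chernoff optimization to finish). Your constraint-tracking that pins the admissible range $|\lambda| \le c/(K^2 \|A\|_{op})$ and produces the two-regime exponent is consistent with the stated tail, and the points you flag as delicate (the decoupling constant and the non-circularity of the linearization step) are exactly the right ones.
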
 

\paragraph{Proof of Lemma \ref{lem: prediction-noneff}.}

We know that $\hat\beta-\beta^*$ is a multivariate zero-mean Gaussian with covariance matrix $(\sum_{j=1}^m X_J^{(j)}{X_J^{(j)}}^\top)^{-1}$. This follows from the closed-form solution of OLS. So we know that
\[
\sum_{j=1}^m \bigl(\bigl(X_J^{(j)}\bigr)^\top(\beta^*_J-\hat\beta_J)\bigr)^2
=
(\beta^*_J-\hat\beta_J)^{\top}\Bigl(\sum_{j=1}^m X_J^{(j)}{X_J^{(j)}}^\top\Bigr)(\beta^*_J-\hat\beta_J).
\]
Let $X$ be the matrix obtained by vertically stacking $X_J^{(1)},X_J^{(2)},\dots,X_J^{(m)}$, and let $Y$ be the column vector $Y=(X_i^{(1)},X_i^{(2)},\dots,X_i^{(m)})$. The error vector $\varepsilon$ is also a column vector, $\varepsilon=(\varepsilon_i^{(1)},\dots,\varepsilon_i^{(m)})$. By OLS, we have 
\[
\hat\beta_J
= (X^\top X)^{-1}(X^\top Y)
= (X^\top X)^{-1}X^\top(X \beta^*+\varepsilon),
\]
which implies
\[
\sum_{j=1}^m \bigl(\bigl(X_J^{(j)}\bigr)^\top(\beta^*_J-\hat\beta_J)\bigr)^2
=
\varepsilon^\top\bigl(X(X^\top X)^{-1}X^\top\bigr)\varepsilon.
\]

Therefore, let $A = X(X^\top X)^{-1}X^\top$, which is a projection matrix. We know that $\|A\|_F^2 = k$, $\|A\|_{op} = 1$, and $K = \psi_i$. Also, since $\varepsilon_i^{(k)}$ are independent noise with variance $1$, we have $\mathbb{E}[\varepsilon^\top A \varepsilon] = k$. Accordingly, we can set 
\[
m = O\bigl((\log (1/\delta) + k)\,\psi_i^2/\varepsilon\bigr),
\]
and that suffices.

\paragraph{Proof of Lemma \ref{lem: variance-noneff}.}

The sum of squares of all random variables is a special case of the Hanson-Wright Inequality \cite{rudelson2013hanson} with $A = I_m$. Therefore, for the expected sum of squares of all $Y_i^{(k)}$'s, we have
\[
\mathbb{P}\Bigl[\Bigl|\sum_{i=1}^m Y_i^{(k)}{}^2 - mV^2\Bigr| > tV^2\Bigr]
\le 
2 \exp \Bigl[ - c \,\min \Bigl(\frac{t^2}{\psi_i^4\,m},\, \frac{t}{\psi_i^2}\Bigr) \Bigr].
\]
Hence, we know that
\[
m = O\bigl(\log(\tfrac{1}{\delta}) \,\tfrac{\psi_i^2}{\varepsilon^2}\bigr)
\]
suffices.

\section{Further Simulation Results}\label{more simulation results}

The data setting is the one described in Section \ref{sec:simulation}. We provide here some more simulation results. We have two configurations of data. We vary different $d,n,b_{\max},N$ parameters and test the results. 
\begin{figure}[H]
    \centering
    \includegraphics[width=1\linewidth]{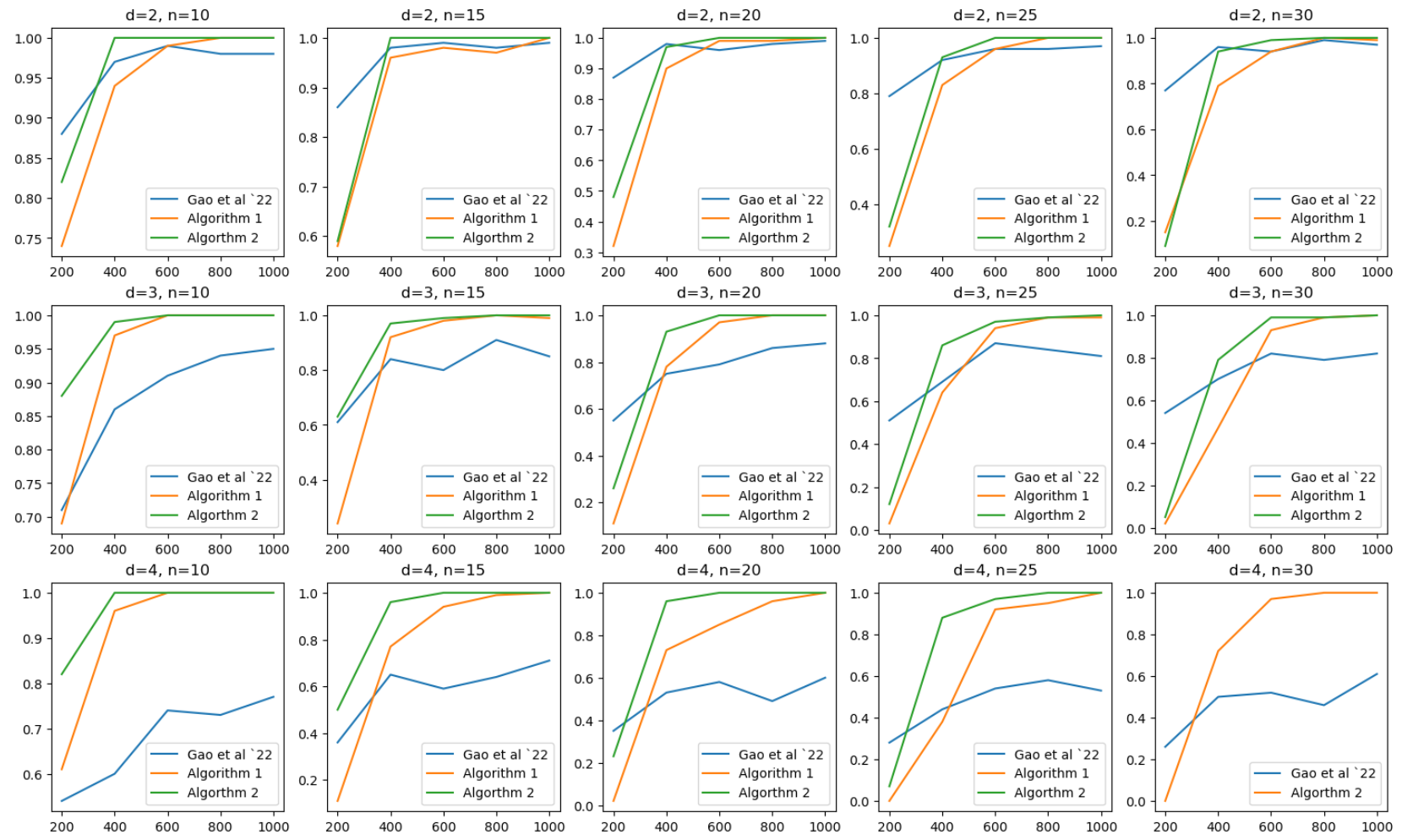}
    \caption{Figure for $b_{\max}=1$}
    \label{fig:bmax1}
\end{figure}

Figure \ref{fig:bmax1} shows a set of simulation results with $b_{\max}=1$. The test metric is to test $100$ different random graphs, and for each graph test for those graphs whether \cite{gao2022optimal}, Algorithm \ref{alg: inefficient} and Algorithm \ref{alg: efficient} discover the same topology. 

We also performed another set of experiments for $b_{\max} = 2$ and the results are shown in Figure~\ref{fig:bmax2}.

The test metric is to test $100$ different random graphs, and for each graph test for those graphs whether \cite{gao2022optimal}, Algorithm \ref{alg: inefficient} and Algorithm \ref{alg: efficient} discover the same topology. Also, we add the number of false positive edges with 95\% confidence interval (centered with mean and $\pm2\sigma$ error region) as in Figure \ref{fig:fpr}. Because of the long runtime, for the case when $d=4$ and $n=30$, the experiment of Algorithm \ref{alg: inefficient} is not performed.

For all the cases we can find out that eventually, when the number of samples $m$ grows, Algorithm \ref{alg: inefficient} and Algorithm \ref{alg: efficient} converge at a faster rate than \cite{gao2022optimal}. We can see that for $b_{\max}=1$ case all the algorithms perform better then for $b_{\max}=2$, which is expected from the theoretical results, since $\tau(G)$ and $R$ grows much faster (as an exponential function) when $b_{\max}$ is larger than $1$. For both \cite{gao2022optimal} and Algorithm \ref{alg: efficient}, we see a degradation in performance as $n$ increases, while comparatively, Algorithm \ref{alg: inefficient} is not affected as much. This can be explained by the growth of $\tau$, which seems to be sublinear in $n$, in contrast with  $R$ which grows linearly and the condition number which grows superlinearly (see also Figure~\ref{fig:growth-main}).

\begin{figure}[h]
    \centering
    \vskip -0.5em
    \includegraphics[width=0.9\linewidth]{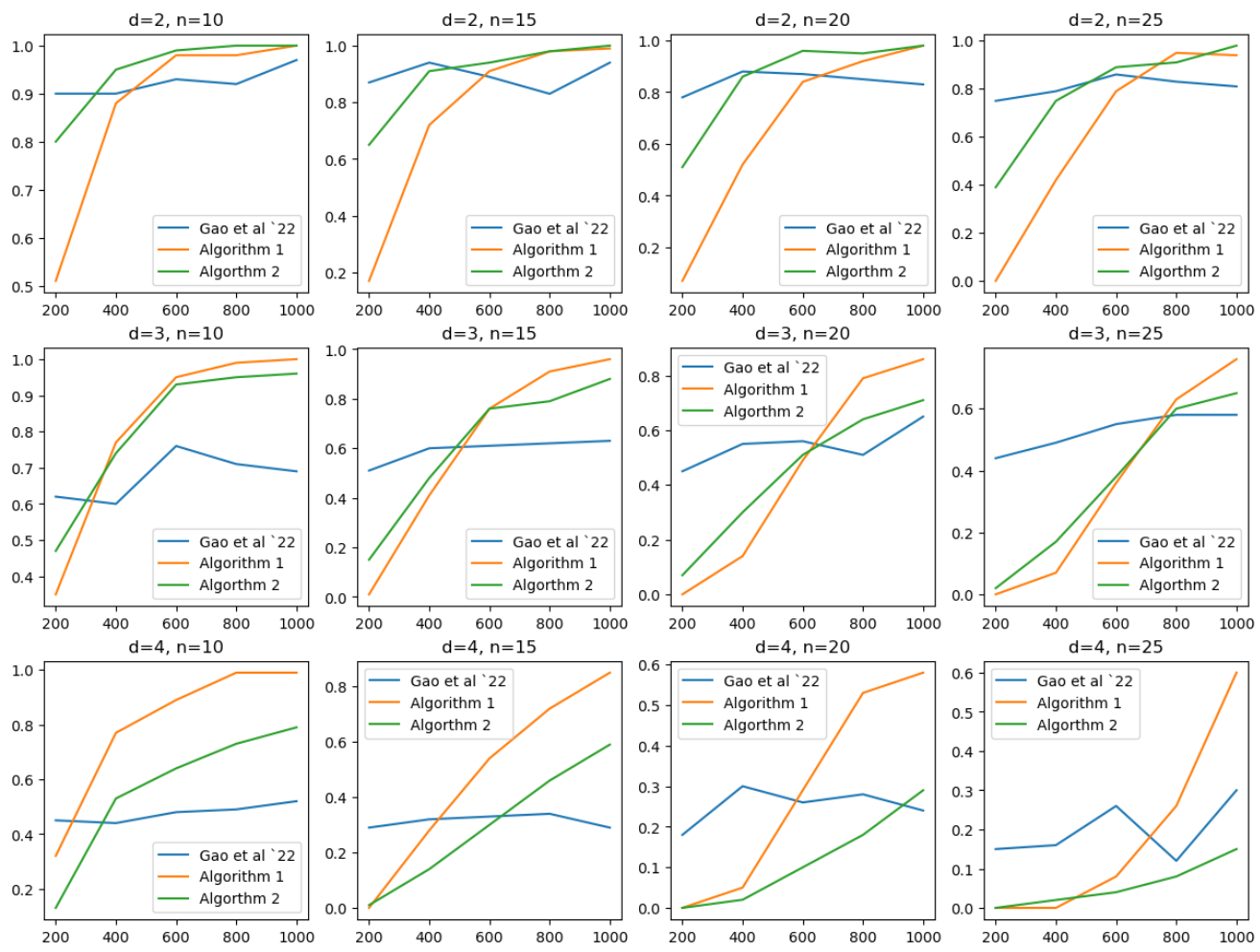}
    \vskip -0.5em
    \caption{Figure for $b_{\max}=2$}
    \label{fig:bmax2}
\end{figure}

\begin{figure}[H]
    \centering
    \vskip -1em
    \includegraphics[width=0.95\linewidth]{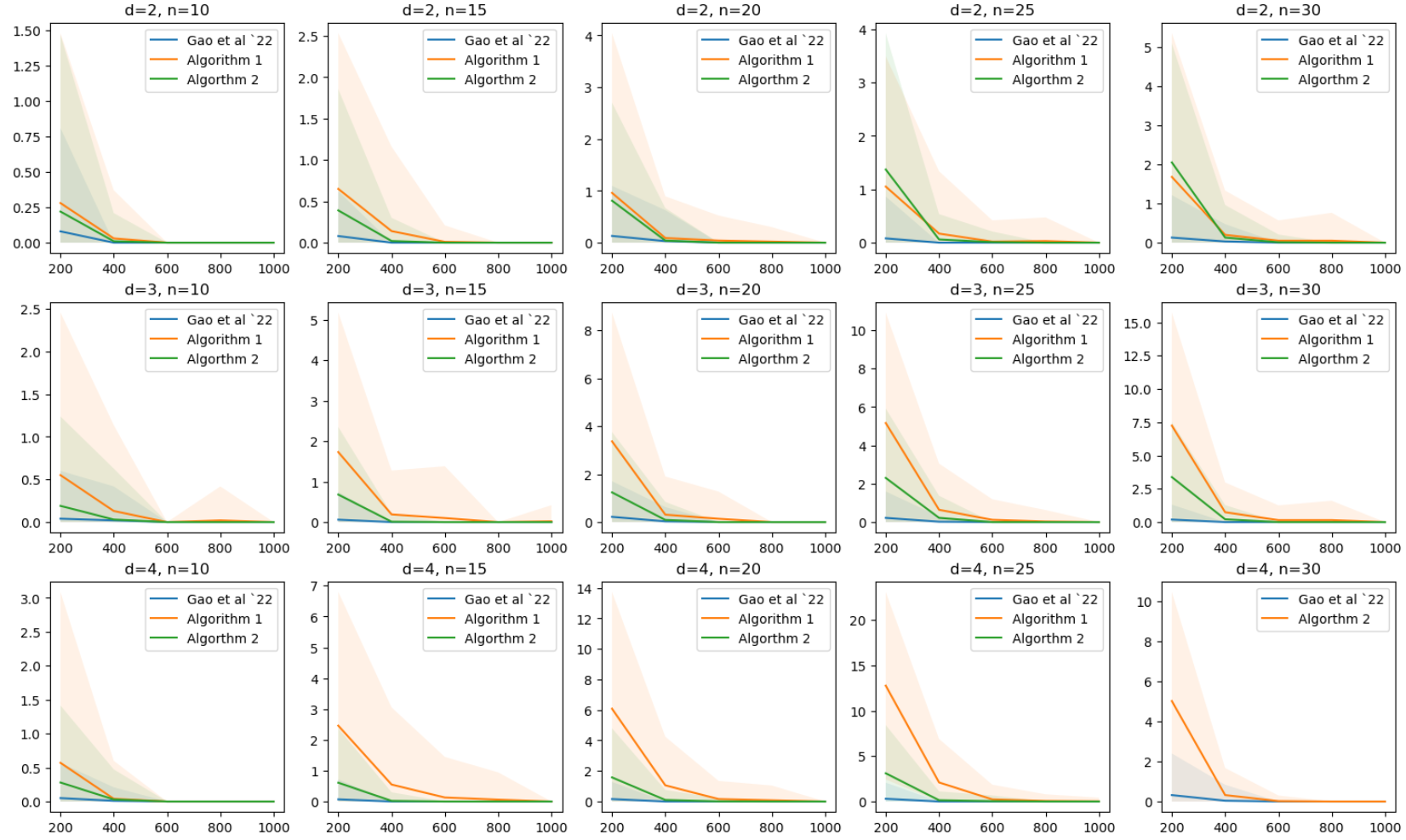}
    \vskip -0.5em
    \caption{Figure for $b_{\max}=1$}
    \label{fig:fpr}
\end{figure}

\section{Code}\label{code}

\begin{lstlisting}[language=Python, caption=Data-sampling algorithm]
import numpy as np
from sklearn.linear_model import Lasso 
from sklearn.linear_model import LinearRegression
import random

def sample_d(lst, d):
    # Sample <=d things in the list
    n = len(lst)
    if n==0: return []
    parents = []
    probability = min(d/(n+2),1/2)
    for node in lst:
        if random.random() < probability:
            parents.append(node)
    return parents[:d]

def generate_config(n, d, b_min, b_max):
    # Sample a random Bayesnet condiguration of n vertices with <=d indegree
    order = list(range(n))
    random.shuffle(order)
    coefficient = {i:{} for i in range(n)}
    parents = {i:set() for i in range(n)}
    B = [[0]*n for _ in range(n)]
    for i in range(n):
        node = order[i]
        prt = sample_d(order[:i],d)
        parents[node] = set(prt.copy())
        for p in prt:
            abs_value = (b_min+(b_max-b_min)*random.random())
            sign = (2*random.randint(0,1)-1)
            bij = abs_value * sign
            B[p][node] = coefficient[node][p] = bij
    return order, coefficient, parents, np.array(B)

def generate_sample(B, N):
    '''
    From the sample configuration sample a set of values. N is the number of
    samples, B is the coefficient matrix. 
    '''
    n = B.shape[0]
    raw = np.random.normal(0,1,[N,n])
    return raw@np.linalg.inv(np.eye(n)-B)

def combination(lst, d):
    # List all subsets (as list) with <=d elements in lst
    if d == 0 or len(lst) == 0:
        return [[]]
    else:
        return combination(lst[1:], d) + [[lst[0]]+j for j in combination(lst[1:], d-1)]

def choose(lst, d):
    # List all subsets (as list) with =d elements in lst
    if d == 0:
        return [[]]
    elif d > len(lst):
        return []
    else:
        return choose(lst[1:], d) + [[lst[0]]+j for j in choose(lst[1:], d-1)]

def lr(data, cand, index):
    '''
    Do OLS linear regression, given the data, candidate of nodes (cand) to 
    regress on and the index of the node whom to regress on. Return the 
    coefficient (if cand is nonempty) and the MSE
    '''
    assert index not in cand
    N = data.shape[0]
    Y = data[:,[index]]
    X = data[:,cand]
    coef = np.linalg.inv(X.T@X)@(X.T@Y)
    mse = ((X@coef-Y)**2).sum()/N
    return coef.T, mse
\end{lstlisting}
\begin{lstlisting}[language = Python, caption=Phase 1 of Algorithm \ref{alg: inefficient}: topological order]
def topo(data, d):
    '''
    Find the topological order given the data and d. Return the topological
    order as a list of nodes.
    '''
    N, n = data.shape
    rem_nodes = list(range(n))
    top = [np.argmin(sum(data**2))]
    rem_nodes.remove(top[0])
    for _ in range(n-2):
        best_mse = 1_000_000
        next_node = -1
        for r in rem_nodes:
            mse = 1_000_000
            for cand in combination(top, d):
                coef, mse1 = lr(data, cand, r)
                mse = min(mse,mse1)
            if mse < best_mse:
                best_mse = mse
                next_node = r
        rem_nodes.remove(next_node)
        top.append(next_node)
    top.append(rem_nodes[0])
    return top
\end{lstlisting}
\cite{gao2022optimal}
\begin{lstlisting}[language = Python, caption=Finding the parents using \cite{gao2022optimal} algorithm]
def learning_parents_g(data, order, d):
    '''
    Using Gao'22's algorithm to find the patents given the topological order.
    Return the partents for each node as a dictionary.
    '''
    n = data.shape[1]
    parents = {i:set() for i in range(n)}
    for i in range(n):
        if i==0:
            continue
        prev = order[:i]
        node = order[i]
        mse = 1_000_000
        parent_superset = None
        # Find C_j to be argmin(v_jC)
        for cand in combination(prev, d):
            coef, mse1 = lr(data, cand, node)
            if mse1 < mse:
                mse = mse1
                parent_superset = cand
        parent = set()
        for pa in parent_superset:
            parentm1 = parent_superset.copy()
            parentm1.remove(pa)
            coef, mse1 = lr(data, parentm1, node)
            if abs(mse-mse1) >= gamma:
                parent.add(pa)
        parents[node] = parent
    return parents
\end{lstlisting}
\begin{lstlisting}[language = Python, caption=Finding the parents using the Phase 2 of Algorithm \ref{alg: inefficient}]    
def learning_parents_v(data, order, d):
    '''
    Using the inefficient algorithm to find the patents given the topological
    order. Return the partents for each node as a dictionary.
    '''
    n = data.shape[1]
    parents = {i:set() for i in range(n)}
    for i in range(n):
        cache = {}
        if i==0:
            continue
        prev = order[:i]
        node = order[i]
        parent_superset = None
        if i <= d:
            parent_superset = prev
        else:
            for J in choose(prev, d):
                passed = True
                others = list(set(prev)-set(J))
                for K in choose(others, min(d,i-d)):
                    JKsorted = tuple(sorted(J+K))
                    if JKsorted not in cache.keys():
                        # print("new", JKsorted)
                        coefJK, mse1 = lr(data, sorted(J+K), node)
                        cache[JKsorted] = coefJK
                        # print(coef)
                    else:
                        # print("cache")
                        coefJK = cache[JKsorted]
                    cK=[abs(coefJK[0][m]) for m in range(len(J+K)) if JKsorted[m] in K]
                    if max(cK) >= b_min/2:
                        passed = False
                        break
                if passed:
                    break
            parent_superset = J
        coef, mse1 = lr(data, parent_superset, node)
        parent = set(n for (c,n) in zip(coef[0], parent_superset) if abs(c) >= b_min/2)
        parents[node] = parent
    return parents
\end{lstlisting}
\begin{lstlisting}[language = Python, caption=Algorithm \ref{alg: efficient}: the topological order and parent]  
def lasso(data, cand, index):
    N = data.shape[0]
    Y = data[:,[index]]
    X = data[:,cand]
    Y1 = data[:,[index]]
    X1 = data[:,cand]
    lambda_n = 0.01
    if cand == []:
        return [], (Y**2).sum()/Y.size
    reg = Lasso(fit_intercept = False, alpha = lambda_n, copy_X = False).fit(X,Y)
    coef = reg.coef_
    return coef, ((X@np.array([coef]).T-Y)**2).sum()/N

def learning_parents_e(data, d):
    # Learn the parents by LASSO, the efficient algorithm
    N, n = data.shape
    rem_nodes = list(range(n))
    top = [np.argmin(sum(data**2))]
    rem_nodes.remove(top[0])
    parents = {i:set() for i in range(n)}
    for _ in range(n-1):
        best_node = -1
        parent = set()
        best_mse = 1_000_000
        for node in rem_nodes:
            coef, mse = lasso(data, top, node)
            if mse <= best_mse:
                best_mse = mse
                best_node = node
                parent = set(n for c, n in zip(coef,top) if abs(c) > b_min/2)
                parent = set(list(parent)[:d])
        parents[best_node] = parent
        top.append(best_node)
        rem_nodes.remove(best_node)
    return top, parents
\end{lstlisting}
\begin{lstlisting}[language = Python, caption=Paremeter settings and function to do the experiments]  
n = 5 # Choose any n you want
d = 2 # Choose any d you want
b_min = 0.5
b_max = 1 # Choose any b_max you want
sigma = 1
gamma = b_min**2/2 # As Gao'et al 2022
repeat = 45 # Choose the number of trials you want

def doing_regular(tup):
    '''
    Genetare a process to perform all three of those algorithm
    tup is of 2 integers (j, N) where j is a dummy parameter and N is the 
    number of the samples. 
    Return the result of whether they have find the correvt parents for three
    algorithm.
    '''
    j, N = tup
    if j%20 == 0: print(j)
    order, coefficient, parents, B = generate_config(n, d, b_min, b_max)
    data = np.array(generate_sample(B, N))
    order = topo(data,d)
    g = learning_parents_g(data, order, d) == parents
    v = learning_parents_v(data, order, d) == parents
    e = learning_parents_e(data, d)[1] == parents
    return g, v, e # Gao et al's, inefficient, efficient

\end{lstlisting}

\end{document}